\documentclass[a4paper]{article}

\title{Results about Sets of Desirable Gamble Sets}
\author{Catrin Campbell-Moore\footnote{University of Bristol. Thanks to Arthur van Camp, Jasper de Bock, Arne Decadt Gert de Cooman, Kevin Blackwell and Jason Konek. This paper was primarily written in 2021, overlapping with my finishing up \citet{campbellmoore2021isipta}. There is some overlap between this paper and \citet{decooman2023filterdesirable}; the results of this paper were shown independently. There was a paper by Jasper de Bock, a very early version of \citet{debock2023desirablethings}, available at the time of writing, which also had an axiomatisation allowing the gamble sets to be infinite and a representation result in line with \cref{sec:repinf}, although he adopted axiom \ref{ax:sets:infadd} which we reject. 
Both papers also omit axiom \ref{ax:sets:dominators}, which I needed; it follows from the remaining axioms when we adopt axiom \ref{ax:sets:infadd} (\cref{thm:dom from infadd}) or when $\Omega$ is finite \citep[Proposition 13]{campbellmoore2021isipta}.
}}

\usepackage{amsmath}
\usepackage{amsfonts}
\usepackage{amssymb}
\usepackage{amsthm}
\usepackage{mathrsfs} %
\usepackage{mathtools}

\usepackage[usenames,dvipsnames,table]{xcolor}
\usepackage{wasysym}
\usepackage{xspace}	

\usepackage{stmaryrd}
\usepackage{thmtools}
\usepackage{units}

\usepackage{tikz}
\usetikzlibrary{positioning,calc,decorations.pathreplacing,decorations.markings,intersections,fit,patterns,shapes,arrows.meta,
}
\usepackage{pgfplots}
\usepackage{graphicx}

\usepackage{eurosym}
\usepackage{tabularx}
\usepackage{etoolbox} %
\usepackage{xparse}
\usepackage{multirow}
\usepackage{bm}
\usepackage{microtype} %
\usepackage{booktabs}
\usepackage{enumitem} %
\usepackage[textsize=footnotesize,obeyFinal,backgroundcolor=white,bordercolor=gray!20,linecolor=gray!20,disable]{todonotes} %

\usepackage[breaklinks,colorlinks,linkcolor=black,citecolor=black,urlcolor=gray,hypertexnames=false]{hyperref} %
\usepackage{cleveref} %

\usepackage[round,comma]{natbib}
\newcommand{\bibcommand}{
	\setlength{\bibsep}{.2pt}
	
\bibliography{mybib}
	\bibliographystyle{plainnat}
	\appendix
}

\allowdisplaybreaks[1]

\tikzset{every picture/.append style={font=\small}, every label/.style={align=center}, every node/.style={align=center}}
\pgfplotsset{compat=1.15}

	\theoremstyle{plain} \newtheorem{theorem}{Theorem}[section]
	\theoremstyle{plain} \newtheorem{desired theorem}[theorem]{Desired Theorem}
	
	\theoremstyle{definition} 
		\newenvironment{definition}
		  {\pushQED{\qed}\definitionx}
		  {\popQED\enddefinitionx}

	\theoremstyle{definition} \crefname{assumption}{assumption}{assumptions}
	\theoremstyle{plain} 
	\theoremstyle{plain} \newtheorem{sublemma}{Sublemma}[theorem]
	\theoremstyle{plain} \newtheorem{corollary}[theorem]{Corollary}
	\theoremstyle{plain} \newtheorem{proposition}[theorem]{Proposition}
	\theoremstyle{plain} 
	\theoremstyle{plain}

	\theoremstyle{remark}
	\newenvironment{proof attempt}
	  {\pushQED{\qed}\proofattemptx}
	  {\popQED\endproofattemptx}
	  
	\theoremstyle{remark}
	\newenvironment{proof idea}
	  {\pushQED{\qed}\proofideax}
	  {\popQED\endproofideax}

\newenvironment{proof sketch}{%
  \proof}{\endproof}
  
	\theoremstyle{definition}

	\theoremstyle{remark}
	\theoremstyle{definition} 
	\theoremstyle{plain}
	\theoremstyle{plain}
	\theoremstyle{definition}

	\theoremstyle{definition}

	\theoremstyle{definition}

	\theoremstyle{definition}\newtheorem*{setupx*}{Setup}
	\newenvironment{setup}
	  {\pushQED{\qed}\setupx}
	  {\popQED\endsetupx}

	\renewcommand{\phi}{\varphi}
	\renewcommand{\emptyset}{\varnothing}
	\renewcommand{\leq}{\leqslant}
	\renewcommand{\geq}{\geqslant}

	\newcommand{\comment}[1]{{\leavevmode\color{red}{#1}}}

	\newcommand{\powerset}[1]{\wp(#1)}
	
	\renewcommand{\Re}{\mathbb{R}}

	\usepackage{physics}

	\newcommand\SetSymbol[1][]{\:#1\vert\:
		\mathopen{}\allowbreak}
			\providecommand{\given}{\SetSymbol}
	\DeclarePairedDelimiterX\set[1]\{\}{%
		\renewcommand\given{\SetSymbol[\delimsize]}
		#1
	}

		\DeclarePairedDelimiterX\pr[1](){%
			
			#1
		}

			\DeclarePairedDelimiterX\updatepr[1](){%
				
				#1
			}

	\DeclarePairedDelimiter\seq{\langle}{\rangle}

\DeclareDocumentCommand \enumsentence { o m } {%
	\IfNoValueTF {#1} {%
		\begin{equation*}
						\text{\parbox{.85\textwidth}{#2}}
					\end{equation*}
	}{%
	\leqnomode
	\begin{align}\tag{#1}
					\qquad&\text{\parbox{.85\textwidth}{#2}}
				\end{align}
	\reqnomode
}%
}

	\newcommand{\defemph}[1]{\emph{#1}\index{#1}}

\newenvironment{suggestion}{\leavevmode\color{blue}}{}

\renewenvironment{suggestion}{\comment}{\endcomment}

\usepackage{pifont}%

\tikzset{				
	point/.style={fill=black,circle,inner sep=1pt},
	arrow/.style={-{Latex[width=1mm]},shorten <=2pt,shorten >=2pt},
}

	\renewcommand\SetSymbol[1][]{\:#1\vert\:
		\mathopen{}\allowbreak}
			\providecommand{\given}{\SetSymbol}
	\DeclarePairedDelimiterX\Set[1]\{\}{%
		\renewcommand\given{\SetSymbol[\delimsize]}
		#1
	}

		\newcommand{\filter}{{F}}
		\newcommand{\des}{{D}}
		\newcommand{\setdes}{\mathbb{D}}
		\newcommand{\setsetdes}{\mathfrak{D}}
		\newcommand{\G}{\mathcal{G}}

		\newcommand{\K}{\mathcal{K}}
		\renewcommand{\Re}{\mathbb{R}}

		\newcommand{\weakdom}{\gneq}

		\newcommand{\indicator}[1]{I_{#1}}

\crefname{axiom}{axiom}{axioms}
\Crefname{axiom}{Axiom}{Axioms}
\crefname{lemma}{lemma}{lemmas}
\Crefname{lemma}{Lemma}{Lemmas}

\makeatletter

\providecommand{\axiomtag}{}

\newenvironment{axioms}[1][]{
	\renewcommand{\axiomtag}{#1}\enumerate}{\endenumerate}

\newcommand\axiom[1]{
	\renewcommand{\theenumi}{\ensuremath{\mathrm{\axiomtag}_{#1}}}
	\renewcommand{\labelenumi}{({\theenumi})}
	\item }

\makeatother

\usetikzlibrary{through,calc}

\newcommand{\posi}{\mathrm{posi}}

\newcommand{\desext}[1]{\posi(#1\cup\posgambles)}

\newcommand{\extension}{\mathrm{Ext}}

\usepackage{stmaryrd}

\newcommand{\zerogamble}{0}

\newcommand{\posgambles}{\G_{\weakdom \zerogamble}}
\newcommand{\strposgambles}{\G_{> \zerogamble}}

\usetikzlibrary{positioning,calc,decorations.pathreplacing,decorations.markings,intersections,fit,patterns,shapes,arrows.meta,
}
\usepackage{pgfplots}
\tikzset{
pt/.style={circle,draw=none,fill=black,inner sep=.04cm},
circ/.style={circle,draw=gray,inner sep=.04cm},
region/.style={draw=none,fill=gray,fill opacity=.3},
labellingregion/.style={fill=white, fill opacity=.5,rounded corners,text opacity=1},
edge/.style={draw=gray,thick},
origin/.style={circle,draw=gray,fill=white,inner sep=.05cm}
}

\newcommand{\calA}{\mathcal{A}}

\setlist[enumerate,1]{label=\normalfont{(\roman*)}}
\setlist[enumerate,2]{label=\normalfont{(\alph*)}}

\usepackage{verbatim}
\newenvironment{mythoughts}{\comment}{\endcomment}

\makeatletter
\newcommand{\eqnum}{\refstepcounter{equation}\hfill\textup{\tagform@{\theequation}}}
\makeatother

\begin{document}
\maketitle
\begin{abstract}
Coherent sets of desirable gamble sets is used as a model for representing an agents opinions and choice preferences under uncertainty. In this paper we provide some results about the axioms required for coherence and the natural extension of a given set of desirable gamble sets. We also show that coherent sets of desirable gamble sets can be represented by a proper filter of coherent sets of desirable gambles. 
\end{abstract}

\tableofcontents
\section{Setup - Definition of Coherence}
\subsection{Gambles}
\begin{setup}
$\Omega$ is a non-empty set. 

$\G$ is the set of all gambles, which are the bounded functions from $\Omega$ to $\Re$.

When $f(\omega)\geq g(\omega)$ for all $\omega\in\Omega$, we will say $f\geq g$.

When  $f(\omega)> g(\omega)$ for all $\omega\in\Omega$, we will say $f> g$, or that $f$ \defemph{strictly dominates} $g$.

When  $f\geq g$ and $g\not\geq f$ we say $f\weakdom g$, or that $f$ \defemph{weakly dominates} $g$. Equivalently, this is when $f(\omega)\geq g(\omega)$ for all $\omega\in\Omega$ and $f(\omega)>g(\omega)$ for some $\omega\in\Omega$.

$\G_{\geq 0}$ is the set of gambles where $f\geq 0$.

$\strposgambles$ is the set of gambles which strictly dominate $\zerogamble$. I.e., where $f> \zerogamble$.

$\posgambles$ is the set of gambles which weakly dominate $\zerogamble$. I.e., where $f\weakdom 0$.

We will also make use of the positive linear hull of a set:
 $\posi(B):=\Set{\sum_{i=1}^n\lambda_ig_i\given n\in\mathbb{N}, \lambda_i> 0, g_i\in B}$. 
\end{setup}

\subsection{Desirable gambles - usual results}

\begin{definition}\label{def:coherentD}
$\des\subseteq\G$ is coherent if:
\begin{axioms}[D]
\axiom{0}{ \label{ax:desgamb:noncontradictory} $\zerogamble\notin\des$ }
\axiom{\weakdom 0}{ \label{ax:desgamb:regularity}If $g\in\posgambles$, then $g\in\des$ }
\axiom{\lambda}{\label{ax:desgamb:scalar}If $g\in\des$ and $\lambda> 0$, then $\lambda g\in\des$}
\axiom{+}{\label{ax:desgamb:add} If $f,g\in\des$, then $f+g\in\des$
}\qedhere
\end{axioms}
\end{definition}
\begin{proposition}\label{thm:des nat extn}
For $E\subseteq\G$, 
\begin{enumerate}
\item \label{itm:natextdes:1}If $0\notin\desext{E}$ then there is some coherent $\des$ extending $E$; and the minimal such coherent $\des$ is $\desext{E}$. 
\item \label{itm:natextdes:2}
If $0\in\desext{E}$ then there is no coherent $\des$ extending $E$.
\end{enumerate}
\end{proposition}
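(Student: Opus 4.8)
The plan is to treat the two parts in sequence, building everything around the operator $\desext{E} = \posi(E\cup\posgambles)$. First I would record the elementary closure properties of $\posi$: for any set $B$, $\posi(B)$ is closed under positive scaling and under addition (both are immediate from the definition of finite positive combinations), and $\posi(B)\supseteq B$. Hence $\desext{E} = \posi(E\cup\posgambles)$ automatically satisfies \ref{ax:desgamb:scalar} and \ref{ax:desgamb:add}, contains $E$, and contains $\posgambles$, so it satisfies \ref{ax:desgamb:regularity} as well. The only axiom that can fail is \ref{ax:desgamb:noncontradictory}, namely $0\notin\des$ — and that is exactly the hypothesis that distinguishes the two cases.

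For part \ref{itm:natextdes:2}, suppose $0\in\desext{E}$ and suppose for contradiction that some coherent $\des\supseteq E$ exists. Since $\des$ is coherent it contains $\posgambles$ (by \ref{ax:desgamb:regularity}) and hence contains $E\cup\posgambles$; being closed under positive scaling and addition (\ref{ax:desgamb:scalar}, \ref{ax:desgamb:add}), it is closed under all finite positive linear combinations, so $\des\supseteq\posi(E\cup\posgambles)=\desext{E}\ni 0$, contradicting \ref{ax:desgamb:noncontradictory}. So no coherent extension exists.

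For part \ref{itm:natextdes:1}, assume $0\notin\desext{E}$. By the closure remarks above, $\desext{E}$ satisfies \ref{ax:desgamb:regularity}, \ref{ax:desgamb:scalar}, \ref{ax:desgamb:add}, and by hypothesis it satisfies \ref{ax:desgamb:noncontradictory}; therefore $\desext{E}$ is coherent, and it extends $E$. For minimality, let $\des$ be any coherent set with $\des\supseteq E$; by the same argument as in part \ref{itm:natextdes:2} (a coherent set contains $\posgambles$ and is closed under finite positive combinations) we get $\des\supseteq\posi(E\cup\posgambles)=\desext{E}$. Hence $\desext{E}$ is the least coherent set extending $E$.

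The only genuinely non-routine point is verifying $0\notin\desext{E}$ gives coherence rather than something that merely looks coherent — but since all four axioms have now been checked directly, there is no hidden obstacle; the argument is essentially a bookkeeping exercise in the definition of $\posi$. If one wanted to be careful, the single place to be slightly attentive is that a sum $\sum_i\lambda_i g_i$ with each $g_i\in E\cup\posgambles$ can mix elements of $E$ and elements of $\posgambles$, but this causes no trouble because $\posgambles$ is itself closed under positive combinations, so such a sum is still a positive combination of an element-pattern allowed by $\posi(E\cup\posgambles)$.
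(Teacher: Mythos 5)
Your argument is correct: the paper states this proposition as standard background and gives no proof, and your verification that $\desext{E}=\posi(E\cup\posgambles)$ satisfies \ref{ax:desgamb:regularity}, \ref{ax:desgamb:scalar}, \ref{ax:desgamb:add} automatically, that \ref{ax:desgamb:noncontradictory} is exactly the hypothesis $0\notin\desext{E}$, and that any coherent $\des\supseteq E$ must contain $\posgambles$ and be closed under finite positive combinations (hence contain $\desext{E}$) is precisely the canonical argument one would give. Nothing is missing; your closing worry about mixing elements of $E$ and $\posgambles$ is indeed a non-issue, since $\posi$ is applied directly to the union.
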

\begin{proposition}
 $\des$ is coherent iff $\des=\desext{E}$ for some  $E\subseteq\G$, and $0\notin \des$.
\end{proposition}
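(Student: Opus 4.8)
The plan is to read off both implications from \cref{thm:des nat extn} (the natural-extension result for desirable gambles) together with the coherence axioms, so there is essentially nothing to construct from scratch.

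\textbf{The ``only if'' direction.} Suppose $\des$ is coherent. Then $\zerogamble\notin\des$ is immediate from \ref{ax:desgamb:noncontradictory}, so it remains to produce a set $E\subseteq\G$ with $\des=\desext{E}$. I would take the most economical choice, $E:=\des$ itself, and show $\desext{\des}=\des$. The inclusion $\des\subseteq\desext{\des}=\posi(\des\cup\posgambles)$ is trivial, since each $g\in\des$ arises as the one-term positive combination $1\cdot g$. For the reverse inclusion there are two equally short routes. The direct one: by \ref{ax:desgamb:regularity} we have $\posgambles\subseteq\des$, so $\des\cup\posgambles=\des$; and by \ref{ax:desgamb:scalar} and \ref{ax:desgamb:add} the set $\des$ is closed under taking positive scalar multiples and finite sums, hence $\posi(\des)\subseteq\des$; therefore $\desext{\des}=\posi(\des)=\des$. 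The slicker one: $\des$ is itself a coherent set extending $E=\des$, so in particular $\zerogamble\notin\desext{\des}$, and then \cref{thm:des nat extn}\ref{itm:natextdes:1} says $\desext{\des}$ is the \emph{minimal} coherent set extending $\des$; applying minimality to the coherent extension $\des$ forces $\desext{\des}\subseteq\des$. Either way we conclude $\des=\desext{E}$ with $\zerogamble\notin\des$.

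\textbf{The ``if'' direction.} Suppose $\des=\desext{E}$ for some $E\subseteq\G$ and $\zerogamble\notin\des$. Then $\zerogamble\notin\desext{E}$, so \cref{thm:des nat extn}\ref{itm:natextdes:1} applies and tells us that $\desext{E}$ is a coherent set (indeed the minimal one) extending $E$; hence $\des=\desext{E}$ is coherent.

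\textbf{Main obstacle.} There is no genuinely hard step once \cref{thm:des nat extn} is available; the whole argument is bookkeeping. The single place that warrants a line of care is the reverse inclusion $\desext{\des}\subseteq\des$ in the first direction --- i.e.\ checking that adjoining $\posgambles$ and then closing under $\posi$ adds nothing to an already-coherent $\des$ --- and this is exactly where axioms \ref{ax:desgamb:regularity}--\ref{ax:desgamb:add} (or, repackaged, the minimality clause of the natural-extension proposition) are used.
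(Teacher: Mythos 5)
Your proof is correct: both directions go through, and the only substantive step --- that $\desext{\des}\subseteq\des$ for coherent $\des$, via $\posgambles\subseteq\des$ (\ref{ax:desgamb:regularity}) and closure of $\des$ under positive combinations (\ref{ax:desgamb:scalar}, \ref{ax:desgamb:add}), or alternatively via the minimality clause of \cref{thm:des nat extn} --- is handled properly. The paper states this proposition without proof, as a standard fact about sets of desirable gambles, and your argument (taking $E=\des$ and reading the converse off \cref{thm:des nat extn}\ref{itm:natextdes:1}) is exactly the standard one it implicitly relies on.
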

\subsection{Axioms for coherence for sets of desirable gamble sets}
I will give definitions of coherence as in \citet{campbellmoore2021isipta}, which are based on those of \citet{debock2018desirability} but diverge from them in order to accommodate the fact that I am not restricting to finite sets of gambles. In \cref{sec:whenfinite} we'll show they're equivalent in the case where gamble sets must be finite. 
\begin{definition}
$\K\subseteq\powerset{\G}$ is \emph{coherent} if it satisfies
\begin{axioms}[K]
	\axiom{\emptyset}{
	$\emptyset\not\in\K$\label{ax:sets:nontrivial}
	}
	\axiom{0}{
	If $A\in\K$ then $A\setminus\Set{\zerogamble}\in\K$.\label{ax:sets:0}
	}
	\axiom{\weakdom 0}{\label{ax:sets:pos} If $g\in\posgambles$, then $\Set{g}\in\K$ }
	\axiom{\supseteq} {\label{ax:sets:supersets}If $A\in\K$ and $B\supseteq A$, then $B\in\K$ }
	\axiom{\mathrm{Dom}} { If $A\in\K$ and for each $g\in A$,
	$f_g$ is some gamble where $f_g\geq g$,
	then $\Set{f_g\given g\in A}\in\K$. \label{ax:sets:dominators}}
	\axiom{\mathrm{Add}} { If $A_1,\ldots,A_n\in\K$ and for each sequence $\seq{g_1,\ldots,g_n}\in A_1\times\ldots\times A_n$, $f_{\seq{g_i}}$ is some member of $\posi(\Set{g_1,\ldots,g_n})$, then $$\Set{f_{\seq{g_i}}\given \seq{g_1,\ldots,g_n}\in A_1\times\ldots\times A_n}\in\K$$\label{ax:sets:add}}
\end{axioms}
\end{definition}
\noindent Recall  $\posi(B):=\Set{\sum_{i=1}^n\lambda_ig_i\given n\in\mathbb{N}, \lambda_i> 0, g_i\in B}$; so axiom \ref{ax:sets:add} could also be written as:\begin{itemize}\item[]
 If $A_1,\ldots,A_n\in\K$ then $$\Set{\sum_i\lambda^{\seq{g_1,\ldots,g_n}}_ig_i\given \seq{g_1,\ldots,g_n}\in A_1\times\ldots\times A_n}\in\K$$ where  $\lambda^{\seq{g_1,\ldots,g_n}}_1,\ldots,\lambda^{\seq{g_1,\ldots,g_n}}_n$ are reals all $\geq 0$ with at least one $> 0$.
\end{itemize}
\Citet{campbellmoore2021isipta} contains some comments on the formulation of axiom \ref{ax:sets:dominators}.

\section{Natural Extension for sets of desirable gamble sets}
\subsection{The key result about coherence for sets of desirable gamble sets}
This section gives a result which turns out to be the key important property of coherent sets of desirable gamble sets. We will afterwards show that this is is a characteristic result of coherence, i.e., it gives us the natural extension.%

\begin{theorem}\label{thm:coherentclosure}
Suppose $\K$ is coherent and $A_1,\ldots,A_n\in\K$.

If $B$ is such that for each sequence $\seq{g_1,\ldots,g_n}\in  A_1\times\ldots\times A_n$, either  $0\in\desext{\Set{g_1,\ldots,g_n}}$ or there is some $f_{\seq{g_1,\ldots,g_n}}\in B$ with $f_{\seq{g_1,\ldots,g_n}}\in\desext{\Set{g_1,\ldots,g_n}}$;\\
Then $B\in\K$. 
\end{theorem}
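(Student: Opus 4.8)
The plan is to reduce the statement to a direct application of axiom \ref{ax:sets:add} together with the superset axiom \ref{ax:sets:supersets}, using Proposition \ref{thm:des nat extn} to unpack what $0\in\desext{\Set{g_1,\ldots,g_n}}$ and $f\in\desext{\Set{g_1,\ldots,g_n}}$ mean concretely. Recall that $\desext{E}=\posi(E\cup\posgambles)$, so a gamble lies in $\desext{\Set{g_1,\ldots,g_n}}$ exactly when it can be written as $\sum_i \lambda_i g_i + h$ with $\lambda_i\geq 0$ (some possibly zero) and $h\in\posi(\posgambles)$, i.e.\ $h\weakdom 0$ or $h=0$. I would split the index set $A_1\times\ldots\times A_n$ into two parts: the sequences $\seq{g_1,\ldots,g_n}$ for which $0\in\desext{\Set{g_1,\ldots,g_n}}$ (call this the ``bad'' set $\mathcal{B}$), and the rest, for which we have a witness $f_{\seq{g_1,\ldots,g_n}}\in B\cap\desext{\Set{g_1,\ldots,g_n}}$.

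First I would handle the bad sequences. If $0\in\desext{\Set{g_1,\ldots,g_n}}$, then by Proposition \ref{thm:des nat extn}\ref{itm:natextdes:2} there is no coherent superset of $\Set{g_1,\ldots,g_n}$; concretely, $0=\sum_i\lambda_i g_i + h$ with $h\weakdom 0$ or $h = 0$ and not all $\lambda_i$ zero (if all $\lambda_i$ were zero we'd need $0\weakdom 0$, impossible, so in fact at least one $\lambda_i>0$ and $-\sum_i\lambda_i g_i \weakdom 0$ or $=0$). The point is that for each such sequence there is some positive combination $\sum_i\lambda_i g_i\in\posi(\Set{g_1,\ldots,g_n})$ that weakly dominates $0$ — indeed we can take the combination to be $\leq 0$, hence for each such sequence we can choose $c_{\seq{g_i}}\in\posi(\Set{g_1,\ldots,g_n})$ with $c_{\seq{g_i}}\weakdom 0$ (possibly after noting the edge case where the combination is exactly $0$, which we can perturb by adding a tiny positive multiple of one of the $g_i$ if needed, or handle separately). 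For the good sequences, since $f_{\seq{g_i}}\in\desext{\Set{g_1,\ldots,g_n}}=\posi(\Set{g_1,\ldots,g_n}\cup\posgambles)$, write $f_{\seq{g_i}}=p_{\seq{g_i}}+h_{\seq{g_i}}$ with $p_{\seq{g_i}}\in\posi(\Set{g_1,\ldots,g_n})$ (or $p_{\seq{g_i}}=0$) and $h_{\seq{g_i}}\in\posgambles$ (or $=0$); in particular $f_{\seq{g_i}}\geq p_{\seq{g_i}}\in\posi(\Set{g_1,\ldots,g_n})$.

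Now apply axiom \ref{ax:sets:add} to $A_1,\ldots,A_n$ with the choice function assigning, to each good sequence, the element $p_{\seq{g_i}}\in\posi(\Set{g_1,\ldots,g_n})$, and to each bad sequence, the element $c_{\seq{g_i}}\in\posi(\Set{g_1,\ldots,g_n})$. This yields a set $C=\Set{p_{\seq{g_i}}\given\seq{g_i}\text{ good}}\cup\Set{c_{\seq{g_i}}\given\seq{g_i}\text{ bad}}\in\K$. Then apply axiom \ref{ax:sets:dominators}: every element of $C$ is weakly dominated by something in $B\cup\posgambles$ — a good $p_{\seq{g_i}}$ by $f_{\seq{g_i}}\in B$, and a bad $c_{\seq{g_i}}$ (which weakly dominates $0$) by itself, which is in $\posgambles$. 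So $B'\in\K$ where $B'$ consists of the chosen dominators; each such dominator is either in $B$ or is a member of $\posgambles$. Hmm — but $\posgambles$-members need not be in $B$, so $B'\not\subseteq B$ in general. To fix this, combine with axiom \ref{ax:sets:0} or argue instead that for the bad sequences we don't actually need a dominator in $B$: I would first split off the bad sequences using the fact that a single bad sequence already forces a singleton-like obstruction. Concretely, the cleanest route is: by axioms \ref{ax:sets:add} and \ref{ax:sets:dominators} applied only over the good sequences after fixing the bad ones — but $\K$'s axioms need all of $A_1\times\ldots\times A_n$. The genuine fix is that $c_{\seq{g_i}}\weakdom 0$, so by axiom \ref{ax:sets:0} and \ref{ax:sets:supersets} we may work modulo such elements; more precisely, I would show $\Set{f_{\seq{g_i}}\given\seq{g_i}\text{ good}}\in\K$ directly by replacing, in the application above, the bad-sequence choices with something whose dominator can be taken in $B$ — which fails — so instead I conclude $C\in\K$ as above, then $C\setminus$ (the bad $c$'s that are $=0$) via \ref{ax:sets:0}, and note each remaining element of $C$ is weakly dominated by an element of $B$: for bad sequences use that $c_{\seq{g_i}}\weakdom 0$ together with axiom \ref{ax:sets:pos} gives $\Set{c_{\seq{g_i}}}\in\K$, hence we may instead have excluded bad sequences from the start by a separate reduction.

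The main obstacle, therefore, is exactly this bookkeeping around the ``bad'' sequences where $0\in\desext{\Set{g_1,\ldots,g_n}}$: axiom \ref{ax:sets:add} forces us to pick a gamble for \emph{every} sequence in the product, but for bad sequences there is no witness in $B$. The key idea to resolve it is that a bad sequence contributes a gamble weakly dominating $0$, which by \ref{ax:sets:pos}/\ref{ax:sets:0}/\ref{ax:sets:supersets} is ``free'' — it can be dropped from any set in $\K$ or is already a singleton in $\K$ — so after forming $C\in\K$ via \ref{ax:sets:add} and $B'\in\K$ via \ref{ax:sets:dominators} (dominating good $p$'s by their $f\in B$ and bad $c$'s by themselves), one removes the bad-sequence dominators using \ref{ax:sets:0} (if they equal $0$) or absorbs them, and the remaining set is a subset of $B$, so \ref{ax:sets:supersets} gives $B\in\K$. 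I expect the write-up to need care in the degenerate case where a positive combination of the $g_i$ equals $0$ exactly, which is why invoking Proposition \ref{thm:des nat extn}\ref{itm:natextdes:2} to characterise $0\in\desext{\cdot}$ cleanly at the start is worthwhile.
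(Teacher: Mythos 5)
Your overall architecture (split off the sequences with $0\in\desext{\Set{g_1,\ldots,g_n}}$, pick a positive combination for every sequence, apply \ref{ax:sets:add} then \ref{ax:sets:dominators} then \ref{ax:sets:supersets}) is the right one and matches the paper's, but the step you yourself flag as the ``main obstacle'' is not resolved, and the resolution you sketch rests on a sign error. From $0\in\desext{\Set{g_1,\ldots,g_n}}=\posi(\Set{g_1,\ldots,g_n}\cup\posgambles)$ you correctly extract some $c_{\seq{g_i}}=\sum_i\lambda_i g_i\in\posi(\Set{g_1,\ldots,g_n})$ with $-c_{\seq{g_i}}\weakdom 0$ or $-c_{\seq{g_i}}=0$; that means $c_{\seq{g_i}}\leq 0$, i.e.\ $0\geq c_{\seq{g_i}}$, \emph{not} $c_{\seq{g_i}}\weakdom 0$, and in particular $c_{\seq{g_i}}\notin\posgambles$. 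So your plan to handle the bad sequences in the \ref{ax:sets:dominators} step by dominating each bad $c_{\seq{g_i}}$ ``by itself, which is in $\posgambles$'' is based on a false premise, and the subsequent attempts to patch it do not work either: axiom \ref{ax:sets:0} only removes the gamble $0$ from a set, not a nonzero $c\leq 0$, ``absorbing'' such elements is not an argument, and you cannot ``exclude bad sequences from the start'' because \ref{ax:sets:add} and \ref{ax:sets:dominators} require a choice for \emph{every} sequence in $A_1\times\ldots\times A_n$.

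The missing idea is small but essential, and it is exactly what the paper does: since $0\geq c_{\seq{g_i}}$, in the \ref{ax:sets:dominators} step dominate each bad $c_{\seq{g_i}}$ by the gamble $0$ itself (and each good $p_{\seq{g_i}}$ by its witness $f_{\seq{g_i}}\geq p_{\seq{g_i}}$ in $B$). The resulting set lies inside $B\cup\set{0}$, so \ref{ax:sets:supersets} gives $B\cup\set{0}\in\K$, and \emph{then} \ref{ax:sets:0} strips the $0$, giving $B\setminus\set{0}\in\K$ and hence $B\in\K$. (The paper packages this by simply defining $f_{\seq{g_i}}:=0$ for bad sequences, so all sequences are treated uniformly via a sublemma producing $h_{\seq{g_i}}\in\posi(\Set{g_1,\ldots,g_n})$ with $f_{\seq{g_i}}\geq h_{\seq{g_i}}$.) A second, smaller gap: when a good witness has $f_{\seq{g_i}}\in\posgambles$, your decomposition may give $p_{\seq{g_i}}=0$, which need not lie in $\posi(\Set{g_1,\ldots,g_n})$, so \ref{ax:sets:add} cannot be fed that choice; the paper disposes of this case first by noting that if $B\cap\posgambles\neq\emptyset$ then $B\in\K$ already follows from \ref{ax:sets:pos} and \ref{ax:sets:supersets}, and thereafter assumes every witness is outside $\posgambles$. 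With those two repairs your argument becomes the paper's proof.
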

\begin{proof}
We start with a (not terribly interesting) lemma:
\begin{sublemma}\label{thm:coherentclosure:lem}
For $f\in \desext{\Set{g_1,\ldots,g_n}}$, whenever  $f\notin\posgambles$, there is some $h\in\posi(\Set{g_1,\ldots,g_n})$ with $f\geq h$.
\end{sublemma}
\begin{proof}
$f\in \desext{\Set{g_1,\ldots,g_n}}$ so $f=\sum_i\lambda_ig_i+\sum_j\mu_j p_j$ for some $p_j\in \posgambles$ and $\lambda_i,\mu_j\geq 0$, with at least one $>0$. 

Suppose $f\notin\posgambles$. Then some $\lambda_i>0$ because otherwise $f=\sum_j\mu_j p_j$ with each $p_j\in\posgambles$, and so we'd also have that $f\in\posgambles$. 

So, let $h=\sum_i\lambda_i g_i$, and we then know that $h\in\posi(\set{g_1,\ldots,g_n})$. Since $f=h+\sum_j\mu_j p_j$ with each $p_j\in\posgambles$, we know that $f\geq h$.
\end{proof}

Assume we have $\K$, $A_1,\ldots,A_n$ and $B$ satisfying the assumptions of the theorem, and we want to show that $B\in\K$.

We know that for any sequence $\seq{g_i}\in \bigtimes_i A_i$ with $0\notin \desext{\Set{g_1,\ldots,g_n}}$, there is some $f_{\seq{g_i}}\in \desext{\Set{g_1,\ldots,g_n}}$ with $f_{\seq{g_i}}\in B$.
When $0\in \desext{\Set{g_1,\ldots,g_n}}$, we can let $f_{\seq{g_i}}$ denote $0$; so we have that for all $\seq{g_i}\in\bigtimes_i A_i$, $f_{\seq{g_i}}\in B\cup\set{0}$ with $f_{\seq{g_i}}\in \desext{\set{g_1,\ldots,g_n}}$. 

If there is some $f\in\posgambles$ with $f\in B$ then by axioms axiom \ref{ax:sets:pos,ax:sets:supersets}, $B\in\K$. 
So we can assume that each $f_{\seq{g_i}}\notin\posgambles$ (noting that $0\notin\posgambles$), so by \cref{thm:coherentclosure:lem}, we can find $h_{\seq{g_i}}\in\posi(\set{g_1,\ldots,g_n})$ with $f_{\seq{g_i}}\geq h_{\seq{g_i}}$.
By axiom \ref{ax:sets:add}, 
\begin{equation}
\Set{h_{\seq{g_i}}\given \seq{g_i}\in \bigtimes_i A_i}\in\K.
\end{equation}
 and then by axiom \ref{ax:sets:dominators}, \begin{equation}
\Set{f_{\seq{g_i}}\given \seq{g_i}\in \bigtimes_i A_i}\in\K.
\end{equation} Since each $f_{\seq{g_i}}\in B\cup\set{0}$, by axiom \ref{ax:sets:supersets}, $B\cup\set{0}\in\K$. And thus $B\in\K$ by axiom \ref{ax:sets:0}. 
\end{proof}
Note that we have restricted this to finitely many $A_1,\ldots,A_n$ in $\K$. This is because axiom \ref{ax:sets:add} is restricted to finitely many members.

\subsection{Understanding this}

This will in fact give us maximal information about coherence. 

But first we give an idea of why this might be the right thing to do and to help understand the criterion.

Suppose $A_1,A_2\in\K$. We can then consider what else should be in $\K$.

You think that $A_1$ contains some desirable gamble, and so does $A_2$. But you leave it open which member of $A_1$ is desirable and which member of $A_2$. 

We can go through each member of $A_1$ and consider what are the consequences of that gamble being the one in virtue of which you think $A_1$ contains a desirable gamble. Similarly for $A_2$. Consider the possibility that $A_1$ is desirable in virtue of $g_1$, and $A_2$ is desirable in virtue of $g_2$. In that case, also $g_1+g_2$ is desirable, and more generally any $f\in\desext{\set{g_1,g_2}}$ is desirable; so any $B$ which contains some member of $\desext{\set{g_1,g_2}}$, i.e., when $\emptyset\neq B\cap\desext{\set{g_1,g_2}}$, contains some desirable gamble. 

\begin{figure}[h]
\begin{center}

\begin{tikzpicture}[scale=1] %

\coordinate (n) at (0,2);
\coordinate (e) at (2,0);
\coordinate (s) at (0,-2);
\coordinate (w) at (-2,0);

\coordinate (ne) at ($(n)+(e)$);
\coordinate (se) at ($(s)+(e)$);
\coordinate (nw) at ($(n)+(w)$);
\coordinate (sw) at ($(s)+(w)$);

\coordinate (o) at (0,0);

\coordinate (f) at (-1.2,1.4); 
\coordinate (g) at (1,-.25);

\begin{scope}
\clip (sw) rectangle (ne);
\draw [region] ($2*(f)$) -- (ne) -- ($3*(g)$) -- (o);
\draw [edge] (o) -- ($2*(f)$);
\draw [edge] (o) -- ($2*(g)$);
\node [origin] at (o) {};
\end{scope}
\node [labellingregion,anchor=west] at (1,1) {$\desext{\set{g_1,g_2}}$};

\draw (w) -- (e);
\draw (s) -- (n);

\node [pt] at (f) [label=below:{$g_1$}]{};
\node [pt] at (g) [label=below:{$g_2$}]{};

\node [pt,blue] at ($(f)+(g)$) [label={[text=blue]90:{$g_1+g_2$}}]{};

\end{tikzpicture}
\end{center}
\end{figure}

But this is all conditional on $g_1$ and $g_2$ being the desirable members. If $B$ is a set such that \emph{whichever} members of $A_1$ and $A_2$ are considered to be the relevant ones in virtue of which $A_i$ desirable, we can have that some member of $B$ is desirable, then we can outright conclude that $B$ contains a desirable gamble. We needn't be able to point to any particular member of $B$ which we evaluate as desirable. But whenever a member of $A_1$ and a member of $A_2$ are desirable, then so is a member of $B$.

This motivates the general constraint:
\begin{itemize}
\item []\phantomsection\label{property:1}
If for each pair $\seq{g_1,g_2}\in A_1\times A_2$, there is some  $f_{\seq{g_1,g_2}}\in\desext{\Set{g_1,g_2}}$ with $f_{\seq{g_1,g_2}}\in B$. 
Then $B\in\K$. \hfill{($*$)}
\end{itemize}

To see how this works in practice, consider \cref{fig:using}.

\newcommand{\createcanvas}
{
	\coordinate (n) at (0,2);
	\coordinate (e) at (2,0);
	\coordinate (s) at (0,-2);
	\coordinate (w) at (-2,0);

		\coordinate (ne) at ($(n)+(e)$);
		\coordinate (se) at ($(s)+(e)$);
		\coordinate (nw) at ($(n)+(w)$);
		\coordinate (sw) at ($(s)+(w)$);
		
		\coordinate (o) at (0,0); 

}

\makeatletter
\tikzset{
    reuse path/.code={\pgfsyssoftpath@setcurrentpath{#1}}
}
\makeatother
\tikzset{even odd clip/.code={\pgfseteorule}}

\newcommand{\cutoutorigin}{			
	\path (o) node[save path=\originpath,circle,inner sep=2pt](origin){}; %
	\clip[overlay,even odd clip,reuse path=\originpath] (ne) rectangle (sw); 
}

	\begin{figure}[h]
	\begin{center}
	\includegraphics{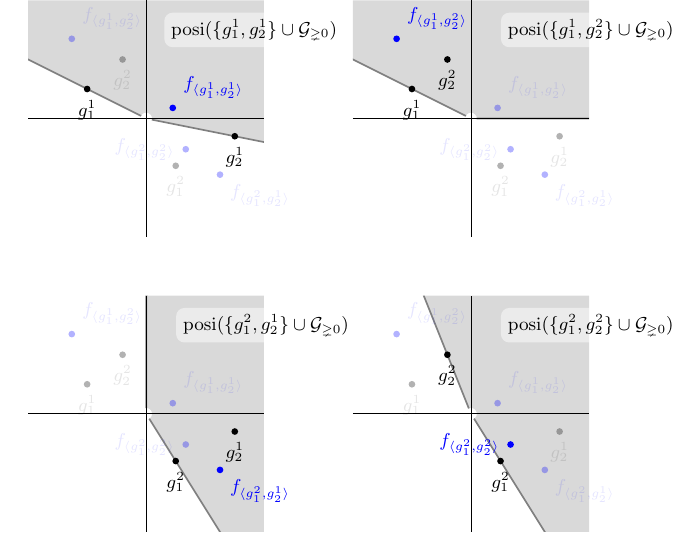}
	\end{center}
	\caption{\label{fig:using}
	Suppose $A_1=\set{g_1^1,g_1^2}\in\K$ and $A_2=\set{g_2^1,g_2^2}\in\K$. Then this reasoning  (described in \hyperref[property:1]{($*$)}) gets us that $\set{f_{\seq{g_1^1,g_2^1}},f_{\seq{g_1^1,g_2^2}},f_{\seq{g_1^2,g_2^1}},f_{\seq{g_1^2,g_2^2}}}\in\K$.}
	\end{figure}

A proof of the legitimacy of this, given the axioms on coherence, is in \cref{thm:coherentclosure} since it is a weaker condition than that used in that result.
	
This story can easily be extended to multiple sets, if $A_1,\ldots,A_n$ are all taken to contain at least one desirable gamble, and whenever we pick some particular members of them and consider those to be desirable, a member of $B$ is also desirable, then we conclude that $B$ contains a desirable gamble. 
That is:
\begin{itemize}
\item []
If for each sequence $\seq{g_i}\in \bigtimes A_i$, there is some  $f_{\seq{g_i}}\in\desext{\Set{g_1,\ldots,g_n}}$ with $f_{\seq{g_i}}\in B$. 
Then $B\in\K$. 
\end{itemize}

Can it be extended to infinitely many sets? Well, this will go beyond the axioms that have been imposed for coherence. A thought behind this: I am unable to perform infinite reasoning with desirability. Or alternatively, maybe I am unable to simultaneously consider the infinitely many sets and pick the relevant members. Or perhaps one can only make finitely many conditional assumptions.

So, we have motivated something quite close to that of  \cref{thm:coherentclosure}. 
However, it misses the clause that says that when $0\in\desext{\set{g_1,\ldots,g_n}}$ we don't need to find a $f_{\seq{g_i}}\in B$. 
This is because when $0\in\desext{\set{g_1,\ldots,g_n}}$, $g_1,\ldots,g_n$ are mutually incompatible. Suppose we have $g_1\in A_1$ and $g_2\in A_2$ with $0\in\desext{\set{g_1,g_2}}$. It cannot be that, simultaneously, $A_1$ is desirable in virtue of $g_1$ and $A_2$ is desirable in virtue of $g_2$. The desirability of $g_1$ rules out the desirability of $g_2$. So we can ignore this choice of $\seq{g_1,g_2}$. We only need to find the relevant $f_{\seq{g_i}}$ when the choice of these $g_i$ are mutually compatible.  

\begin{mythoughts}
Consider: 
\begin{align}
A_1&=\set{g_1,0}\\
A_2&=\set{g_2,0}
\end{align}
The only choice of a member of $A_1$ and a member of $A_2$ which do not result in a desirable gamble set containing $0$ is the choice of $g_1$ from $A_1$ and $g_2$ from $A_2$. So we will be able to conclude that $\set{g_1+g_2}\in\K$ since for the only coherent choice of a member of each of the sets has $g_1+g_2$ in the relevant posi, which is $\desext{\set{g_1,g_2}}$. Without this additional clause we would only be able to conclude that, for example, $\set{0,g_1+g_2}\in\K$. To be able to remove the $0$ from this set we need our additional clause that allows us to ignore mutually incompatible choices.

Let's see a further example where the gambles are really mutually incompatible rather than simply being $0$ themselves. 
Consider:
\begin{align}
A&=\set{a^1,a^2}\in\K\\
C&=\set{c^1,c^2}\in\K
\end{align}given as in:
\begin{figure}[h]
\begin{center}
\begin{tabular}{cc}
\begin{tikzpicture}[scale=.5] %

\coordinate (n) at (0,2);
\coordinate (e) at (2,0);
\coordinate (s) at (0,-2);
\coordinate (w) at (-2,0);

\coordinate (ne) at ($(n)+(e)$);
\coordinate (se) at ($(s)+(e)$);
\coordinate (nw) at ($(n)+(w)$);
\coordinate (sw) at ($(s)+(w)$);

\coordinate (o) at (0,0);

\coordinate (a) at (-1.7,.8); 
\coordinate (c) at (1,-.15);
\coordinate (b) at (-.65,1);
\coordinate (d) at (1,-1.1);

\begin{scope}
\clip (sw) rectangle (ne);
\draw [region] ($10*(a)$) -- (ne) -- ($10*(c)$) -- (o);
\draw [edge] (o) -- ($10*(a)$);
\draw [edge] (o) -- ($10*(c)$);
\node [origin] at (o) {};
\end{scope}
\node [labellingregion,anchor=west] at (1,1) {$\desext{\set{a^1,c^1}}$};

\draw (w) -- (e);
\draw (s) -- (n);

\node [pt] at ($1*(a)$) [label=below:{$a^1$}]{};
\node [circ] at ($1*(b)$) [label=above:blue{$a^2$}]{};
\node [pt] at (c) [label=below:{$c^1$}]{};

\end{tikzpicture}
&
\begin{tikzpicture}[scale=.5] %

\coordinate (n) at (0,2);
\coordinate (e) at (2,0);
\coordinate (s) at (0,-2);
\coordinate (w) at (-2,0);

\coordinate (ne) at ($(n)+(e)$);
\coordinate (se) at ($(s)+(e)$);
\coordinate (nw) at ($(n)+(w)$);
\coordinate (sw) at ($(s)+(w)$);

\coordinate (o) at (0,0);

\coordinate (a) at (-1.7,.8); 
\coordinate (c) at (1,-.15);
\coordinate (b) at (-.65,1);
\coordinate (d) at (1,-1.1);

\draw [region] (sw) rectangle (ne);
\node [labellingregion,anchor=west] at (1,1) {$\desext{\set{a^1,c^2}}$};

\draw (w) -- (e);
\draw (s) -- (n);

\node [pt] at (d) [label=below:{$c^2$}]{};
\node [pt] at (a) [label=below:{$a^1$}]{};

\end{tikzpicture}
\\
\begin{tikzpicture}[scale=.5] %

\coordinate (n) at (0,2);
\coordinate (e) at (2,0);
\coordinate (s) at (0,-2);
\coordinate (w) at (-2,0);

\coordinate (ne) at ($(n)+(e)$);
\coordinate (se) at ($(s)+(e)$);
\coordinate (nw) at ($(n)+(w)$);
\coordinate (sw) at ($(s)+(w)$);

\coordinate (o) at (0,0);

\coordinate (a) at (-1.7,.8); 
\coordinate (c) at (1,-.15);
\coordinate (b) at (-.65,1);
\coordinate (d) at (1,-1.1);

\begin{scope}
\clip (sw) rectangle (ne);
\draw [region] ($10*(c)$) -- (ne) -- ($10*(b)$) -- (o);
\draw [edge] (o) -- ($10*(c)$);
\draw [edge] (o) -- ($10*(b)$);
\node [origin] at (o) {};
\end{scope}
\node [labellingregion,anchor=west] at (1,1) {$\desext{\set{a^2,c^1}}$};

\draw (w) -- (e);
\draw (s) -- (n);

\node [pt] at ($(c)$) [label=below:{$c^1$}]{};
\node [pt] at (b) [label=below:{$a^2$}]{};

\end{tikzpicture}
&
\begin{tikzpicture}[scale=.5] %

\coordinate (n) at (0,2);
\coordinate (e) at (2,0);
\coordinate (s) at (0,-2);
\coordinate (w) at (-2,0);

\coordinate (ne) at ($(n)+(e)$);
\coordinate (se) at ($(s)+(e)$);
\coordinate (nw) at ($(n)+(w)$);
\coordinate (sw) at ($(s)+(w)$);

\coordinate (o) at (0,0);

\coordinate (a) at (-1.7,.8); 
\coordinate (c) at (1,-.15);
\coordinate (b) at (-.65,1);
\coordinate (d) at (1,-1.1);

\begin{scope}
\clip (sw) rectangle (ne);
\draw [region] ($10*(d)$) -- (ne) -- ($10*(b)$) -- (o);
\draw [edge] (o) -- ($10*(d)$);
\draw [edge] (o) -- ($10*(b)$);
\node [origin] at (o) {};
\end{scope}
\node [labellingregion,anchor=west] at (1,1) {$\desext{\set{a^2,c^2}}$};

\draw (w) -- (e);
\draw (s) -- (n);

\node [pt] at (d) [label=below:{$c^2$}]{};
\node [pt] at (b) [label=below:{$a^2$}]{};

\end{tikzpicture}
\end{tabular}
\end{center}
\end{figure}

Since here $0\in\desext{\set{a^1,c^2}}$, we do not need to consider this case. Instead we just need to ensure that there is some member of $B$ in each of $\desext{\set{a^1,c^1}}$, $\desext{\set{a^2,c^1}}$ and $\desext{\set{a^2,c^2}}$. In fact, in this case it wouldn't be an issue to ensure that we have some member of $B$ in $\desext{\set{a^1,c^2}}$ because this contains all gambles whatsoever. But not all cases are like this. %

This then fully characterises coherence. 

\end{mythoughts}

\subsection{Defining the Natural Extension}
If we start with a given set of desirable gamble sets, $\calA$, we define $\extension(\calA)$ by closing it under the construction of \cref{thm:coherentclosure} (except when we start with $\emptyset$ in which case we give a separate definition because that construction then doesn't add anything).

\begin{definition}\label{def:natext}
For $\calA\subseteq\powerset{\G}$, let $\extension(\calA)$ be given by:\\
When $\calA\neq\emptyset$: \begin{itemize}
\item $B\in\extension(\calA)$ iff there are some $A_1,\ldots,A_n\in\calA$ such that for each sequence $\seq{g_1,\ldots,g_n}\in  A_1\times\ldots\times A_n$, whenever $0\notin \desext{\Set{g_1,\ldots,g_n}}$, there is some $f_{\seq{g_1,\ldots,g_n}}\in B$ where also $f_{\seq{g_1,\ldots,g_n}}\in\desext{\Set{g_1,\ldots,g_n}}$. 
\end{itemize}
When $\calA=\emptyset$,\begin{itemize}\item  $B\in\extension(\emptyset)$ iff there is some $f\in B$ where also $f\in\posgambles$. \qedhere
\end{itemize}
\end{definition}

This will give us the natural extension notion: the minimal coherent set of desirable gamble sets extending the given set, $\calA$. The formulation is a bit different to that of \citet{debock2018desirability}, but they're ultimately equivalent. We'll discuss this in \cref{sec:relntodebock}.

We now see that this gives us the natural extension:

\begin{theorem}\label{thm:natext}
\
\begin{enumerate}
\item \label{itm:natext:1}If $\emptyset\notin\extension(\calA)$ then $\extension(\calA)$ is coherent and it is the minimal coherent $\K$ extending $\calA$. 
\item \label{itm:natext:2}
If $\emptyset\in\extension(\calA)$ then $\extension(\calA)$ is incoherent and there is no coherent $\K$ extending $\calA$. 
\end{enumerate}
\end{theorem}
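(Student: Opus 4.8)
The plan is to prove two auxiliary facts that hold for arbitrary $\calA$ and then dispatch the two cases. \textbf{Fact 1 (minimality).} For every coherent $\K$ with $\calA\subseteq\K$ we have $\extension(\calA)\subseteq\K$: when $\calA=\emptyset$ this is axioms \ref{ax:sets:pos} and \ref{ax:sets:supersets} (if $f\in B\cap\posgambles$ then $\set{f}\in\K$, so $B\in\K$); when $\calA\neq\emptyset$, a witnessing tuple $A_1,\ldots,A_n\in\calA\subseteq\K$ for $B\in\extension(\calA)$ is exactly a tuple satisfying the hypothesis of \cref{thm:coherentclosure} for $B$ (for each $\seq{g_1,\ldots,g_n}\in A_1\times\ldots\times A_n$ either $\zerogamble\in\desext{\set{g_1,\ldots,g_n}}$ or some member of $B$ lies in $\desext{\set{g_1,\ldots,g_n}}$), so $B\in\K$. \textbf{Fact 2 ($\extension$ extends $\calA$).} For $A\in\calA$ the single witness $A_1=A$ works: if $A=\emptyset$ vacuously, and otherwise for each $g_1\in A$ with $\zerogamble\notin\desext{\set{g_1}}$ take $f_{g_1}=g_1$, which lies in $A$ and in $\desext{\set{g_1}}$.

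Next I would show that if $\emptyset\notin\extension(\calA)$ then $\extension(\calA)$ is coherent. (The degenerate case $\calA=\emptyset$, where $\emptyset\notin\extension(\emptyset)$ always holds, is a direct check using that $\posgambles$ is $\geq$-upward closed and closed under positive linear combinations; so assume $\calA\neq\emptyset$.) Axiom \ref{ax:sets:nontrivial} is the hypothesis. For axiom \ref{ax:sets:pos}: any $A\in\calA$ serves as a single witness for $\set{g}$ when $g\in\posgambles$, since $g\in\desext{\set{g_1}}$ for every $g_1$. Axioms \ref{ax:sets:supersets}, \ref{ax:sets:0} and \ref{ax:sets:dominators} are all handled by keeping the witness tuple of the input set: supersets trivially; removal of $\zerogamble$ because a chosen $f_{\seq{g_i}}\in\desext{\set{g_1,\ldots,g_n}}$ with $\zerogamble\notin\desext{\set{g_1,\ldots,g_n}}$ is automatically $\neq\zerogamble$; and dominators because $\desext{\set{g_1,\ldots,g_n}}=\posi(\set{g_1,\ldots,g_n}\cup\posgambles)$ is $\geq$-upward closed (the difference between a dominator and the chosen $h_{\seq{g_i}}\in B\cap\desext{\set{g_1,\ldots,g_n}}$ is either $\zerogamble$ or in $\posgambles$, which $\posi(\,\cdot\,\cup\posgambles)$ absorbs), so the assigned dominator of $h_{\seq{g_i}}$ is again in $\desext{\set{g_1,\ldots,g_n}}$.

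The one substantive case — and the main obstacle — is axiom \ref{ax:sets:add}. Given $B_1,\ldots,B_m\in\extension(\calA)$ with witness tuples $A^j_1,\ldots,A^j_{n_j}\in\calA$ for each $j$, and a selection $e_{\seq{g_1,\ldots,g_m}}\in\posi(\set{g_1,\ldots,g_m})$ for each $\seq{g_1,\ldots,g_m}\in B_1\times\ldots\times B_m$, I would use the (finite) concatenated tuple of all the $A^j_i$ as witness tuple for $E:=\Set{e_{\seq{g_1,\ldots,g_m}}\given\seq{g_1,\ldots,g_m}\in B_1\times\ldots\times B_m}$. A tuple drawn from the product of all the $A^j_i$ restricts, for each $j$, to a tuple $\seq{h^j_1,\ldots,h^j_{n_j}}\in A^j_1\times\ldots\times A^j_{n_j}$; write $H_j:=\set{h^j_1,\ldots,h^j_{n_j}}$ and $H:=\bigcup_{j=1}^m H_j$. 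If $\zerogamble\in\desext{H_j}$ for some $j$, then $\zerogamble\in\desext{H}$ by monotonicity of $\desext{\,\cdot\,}$ in its argument, so this tuple needs no witness; otherwise each $B_j$ supplies some $f^j\in B_j\cap\desext{H_j}$, and then $e_{\seq{f^1,\ldots,f^m}}\in\posi(\set{f^1,\ldots,f^m})\subseteq E$ lies in $\desext{H}$, because $\desext{H}$ contains every $f^j$ ($H_j\subseteq H$, monotonicity) and $\posi(\,\cdot\,\cup\posgambles)$ is closed under positive linear combinations. The delicate part is purely bookkeeping: correctly pairing the product of the concatenated tuple with the products used for the individual $B_j$, and observing that both axiom \ref{ax:sets:add} and the definition of $\extension$ permit only finitely many sets, which is exactly why the concatenation stays admissible. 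The three closure properties of $\desext{\,\cdot\,}$ used throughout — monotone in the argument, $\geq$-upward closed, closed under positive linear combinations — are immediate from $\desext{E}=\posi(E\cup\posgambles)$, and I would record them as a one-line preliminary observation.

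To finish, I assemble the two cases. For part \ref{itm:natext:1}, $\emptyset\notin\extension(\calA)$, so $\extension(\calA)$ is coherent (previous step) and extends $\calA$ (Fact 2), hence is a coherent extension, and by Fact 1 it is contained in every coherent extension, so it is the minimal one. For part \ref{itm:natext:2}, $\emptyset\in\extension(\calA)$ directly contradicts axiom \ref{ax:sets:nontrivial}, so $\extension(\calA)$ is incoherent; and any coherent $\K$ with $\calA\subseteq\K$ would, by Fact 1, satisfy $\emptyset\in\extension(\calA)\subseteq\K$, contradicting axiom \ref{ax:sets:nontrivial} for $\K$ — so no coherent extension of $\calA$ exists.
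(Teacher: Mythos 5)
Your proposal is correct and follows essentially the same route as the paper's proof: minimality via \cref{thm:coherentclosure}, coherence by checking each axiom with the witness tuples, and the same concatenation-plus-monotonicity argument for axiom \ref{ax:sets:add}. The only difference is presentational — you make explicit the (trivial) facts that $\calA\subseteq\extension(\calA)$ and that the $\calA=\emptyset$ case is a direct check, which the paper leaves implicit.
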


\begin{proof}
We first deal with the easy case, when $\calA=\emptyset$: 
If $\K$ is coherent then by \ref{ax:sets:pos} and \ref{ax:sets:supersets}, any $B\in\extension(\emptyset)$ has $B\in\K$. Also observe that $\extension(\emptyset)$ is coherent. So it is the minimal coherent extension of $\emptyset$. 

So we now just consider when $\calA\neq\emptyset$.
As a corollary of \cref{thm:coherentclosure}, when $\K$ is coherent extending $\calA$, then $\K\supseteq\extension(\calA)$. This is because $\extension(\calA)$ was just defined to take the closure in accordance with \cref{thm:coherentclosure}.
We then need to check that $\extension(\calA)$ is coherent (so long as it doesn't contain $\emptyset$).
\begin{sublemma}Suppose $\calA\neq\emptyset$. 
If 
$\emptyset\notin\extension(\calA)$, then $\extension(\calA)$ is coherent.
\end{sublemma} \begin{proof}

We check each of the axioms.

\begin{itemize}
\item \ref{ax:sets:nontrivial}: $\emptyset\notin\extension(\calA)$ by assumption. 
\item \ref{ax:sets:0}: Suppose $A\in \extension(\calA)$. We need to show that $A\setminus\set{0}\in\extension(\calA)$. Since $A\in\extension(\calA)$, there are $B_1,\ldots,B_n\in\calA$ where for each $\seq{e_i}\in\bigtimes B_i$, whenever $0\notin \desext{\set{e_1,\ldots,e_n}}$ there is some $h_{\seq{e_i}}\in A$ with $h_{\seq{e_i}}\in \desext{\set{e_1,\ldots,e_n}}$. For such $\seq{e_i}$, $h_{\seq{e_i}}\neq 0$ so also $h_{\seq{e_i}}\in (A\setminus\set{0})$. Thus, $A\setminus\set{0}\in\extension(\calA)$. 
\item \ref{ax:sets:pos}: Let $g\in\posgambles$. Since $\calA\neq\emptyset$ there is some $B\in\calA$. For every $e\in B$, $g\in\desext{\set{e}}$. So by definition of $\extension$, $\set{g}\in \extension(\calA)$. 
\item \ref{ax:sets:supersets}: Suppose $A\in \extension(\calA)$ and $C\supseteq A$. We need to show that $C\in\extension(\calA)$.  There are some $B_i\in\calA$ where for each $\seq{e_i}\in\bigtimes B_i$, whenever $0\notin \desext{\set{e_1,\ldots,e_n}}$, we have some $h_{\seq{e_i}}\in A$ with $h_{\seq{e_i}}\in  \desext{\set{e_1,\ldots,e_n}}$. Since $C\supseteq A$, also $h_{\seq{e_i}}\in C$. So $C\in\extension(\calA)$. 
\item \ref{ax:sets:dominators}: Suppose $A\in \extension(\calA)$ and for each $g\in A$,  $f_g$ is a gamble with $f_g\geq g$. We need to show that $C:=\set{f_g\given g\in A}\in\extension(\calA)$. There are some $B_i\in\calA$ where for each $\seq{e_i}\in\bigtimes B_i$, whenever $0\notin \desext{\set{e_1,\ldots,e_n}}$, there is some $h_{\seq{e_i}}\in A$ with $h_{\seq{e_i}}\in \desext{\set{e_1,\ldots,e_n}}$. For each such $\seq{e_i}$, let $f_{\seq{e_i}}$ be the relevant $f_{h_{\seq{e_i}}}\geq h_{\seq{e_i}}$. So since $h_{\seq{e_i}}\in\desext{\set{e_1,\ldots,e_n}}$ with $f_{\seq{e_i}}\geq h_{\seq{e_i}}$, also $f_{\seq{e_i}}\in\desext{\set{e_1,\ldots,e_n}}$. And $f_{\seq{e_i}}\in C$; as required.

\item \ref{ax:sets:add}: Suppose $A_1,\ldots,A_n\in\extension(\calA)$, and $C:=\set{f_{\seq{g_i}}\given \seq{g_i}\in\bigtimes A_i}$, with $f_{\seq{g_i}}\in\posi(\set{g_1,\ldots,g_n})$ (as in \ref{ax:sets:add}). We need to show that $C\in\extension(\calA)$. 

For each $i$, there are $B^i_1,\ldots,B^i_{m_i}$ where  for any $\seq{e_j}\in \bigtimes_j B^i_j$, whenever $0\notin\desext{\set{e_1,\ldots,e_{m_i}}}$, there is some $g_{\seq{e_j}}\in A_i$ and $g_{\seq{e_j}}\in\posi(\set{e_1,\ldots, e_{m_i}}\cup \posgambles)$. 

Now, consider all of $B^i_j$ for $i,j$. This is a finite collection. Any sequence of members of it has the form $\seq{e^i_j}_{i,j}\in\bigtimes_{i,j} B^i_j$. Fix such a sequence with $0\notin \desext{\set{e^i_j\given i,j}}$. We need to show there is some $f\in C$ with $f\in \desext{\set{e^i_j\given i,j}}$.

We also have $0\notin \desext{\set{e^i_j\given j}}$ for each $i$; so we have some $g_i:=g_{\seq{e^i_j}_j}\in A_i$ with $g_i\in \desext{\set{e^i_j\given j}}$. Since $\seq{g_i}\in\bigtimes A_i$, by our choice of $C$ there is some $f\in C$ with $f\in\posi(\set{g_1,\ldots,g_n})$.

We need to show that $f\in\desext{\set{e^i_j\given i,j}}$. This follows from $f\in\posi(\set{g_1,\ldots,g_n})$ and each $g_i\in\desext{\set{e^i_j\given i,j}}$ (since it is in $\desext{\set{e^i_j\given j}}$).

So $C\in\extension(\calA)$. \qedhere
\end{itemize}

\end{proof}

So we have shown that when $\emptyset\notin\extension(\calA)$, $\extension(\calA)$ is the minimal coherent extension of $\calA$.

We now just need to check that when $\emptyset\in\extension(\calA)$, there is no coherent $\K$ extending $\calA$. This follows from \cref{thm:coherentclosure} because then $\emptyset$ is in any coherent $\K$ extending $\calA$, contradicting the supposed coherence of $\K$.
\end{proof}

\subsection{Comments on the Formulation of it}
\subsubsection{Quick notes}
One component of our natural extension formulation is: \begin{quote}
whenever $0\notin \desext{\Set{g_1,\ldots,g_n}}$, there is some $f_{\seq{g_1,\ldots,g_n}}\in B$ where also $f_{\seq{g_1,\ldots,g_n}}\in\desext{\Set{g_1,\ldots,g_n}}$. 
\end{quote}
This can equivalently be phrased:
\begin{quote}
there is some $f_{\seq{g_1,\ldots,g_n}}\in B\cup\set{0}$ where also $f_{\seq{g_1,\ldots,g_n}}\in\desext{\Set{g_1,\ldots,g_n}}$. 
\end{quote}
We can also phrase things with intersections. I.e., 
\begin{quote}
there is some $f_{\seq{g_1,\ldots,g_n}}\in (B\cup\set{0})\cap\desext{\Set{g_1,\ldots,g_n}}$. 
\end{quote}
Or without the reference to $f_{\seq{g_1,\ldots,g_n}}$ at all:
\begin{quote}
$ (B\cup\set{0})\cap\desext{\Set{g_1,\ldots,g_n}}\neq\emptyset$. 
\end{quote}

Just to see what this would look like, we'd then have the definition of $\extension$ for $\calA\neq\emptyset$ as:
\begin{itemize}
\item $B\in\extension(\calA)$ iff there are some $A_1,\ldots,A_n\in\calA$ such that for each sequence $\seq{g_1,\ldots,g_n}\in  A_1\times\ldots\times A_n$, such that $ (B\cup\set{0})\cap\desext{\Set{g_1,\ldots,g_n}}\neq\emptyset.$
\end{itemize}
And we might then want to write the case for $\calA=\emptyset$ as: 
\begin{itemize}
\item $B\in\extension(\emptyset)$ iff $B\cap \posgambles\neq\emptyset.$
\end{itemize}

I quite like the reference to the particular member, rather than talking about the intersection being non-empty, because I think it helps thinking about it and is useful to have for proofs, which is why I've opted for that. 

\subsubsection{On the case when $\calA=\emptyset$}
I have chosen to define the case of $\calA\neq\emptyset$ differently from when $\calA=\emptyset$. 

To give a single definition we could include an additional separate clause for the case where $B\cap\posgambles\neq\emptyset$ \citep{decadt2021decide}. So, we would offer: \begin{quote}
For any $\calA$ (including $\emptyset$), $B\in\extension(\calA)$ iff either: \begin{itemize}
\item There is some $f\in B$ where also $f\in\posgambles$. Or
\item There are some $A_1,\ldots,A_n\in\calA$ such that for each sequence $\seq{g_1,\ldots,g_n}\in  A_1\times\ldots\times A_n$, whenever $0\notin \desext{\Set{g_1,\ldots,g_n}}$, there is some $f_{\seq{g_1,\ldots,g_n}}\in B$ where also $f_{\seq{g_1,\ldots,g_n}}\in\desext{\Set{g_1,\ldots,g_n}}$. 
\end{itemize}
\end{quote}

An alternative would be to first add something that we know to be in all coherent $\K$ so that we essentially ensure we're working with non-empty $\calA$, for example we could add $\posgambles$ itself or each singleton $\set{g}$ for $g\in\posgambles$. This would then allow us to use the \cref{thm:coherentclosure} characterisation. But I want to keep the non-empty case as simple as possible as it's really the interesting one, so I keep them separate.

Both of these have the disadvantage of introducing additional clauses to be checked. I have thus opted to keep the definition simpler and simply deal with $\calA=\emptyset$ manually.

\subsubsection{Removing the use of $\posgambles$ in the $\posi$:}

Decant doesn't work with $\posi(\set{g_1,\ldots,g_n}\cup\posgambles)$ but instead simply with $\posi$ and $\geq$. His definition is (with different formulation):
\begin{quote}
For any $\calA$ (including $\emptyset$), $B\in\extension(\calA)$ iff either: \begin{itemize}
\item There is some there are some $f\in B$ where also $f\in\posgambles$. Or
\item There are some $A_1,\ldots,A_n\in\calA$ such that for each sequence $\seq{g_1,\ldots,g_n}\in  A_1\times\ldots\times A_n$, whenever $0\notin \desext{\Set{g_1,\ldots,g_n}}$, there is some $f_{\seq{g_1,\ldots,g_n}}\in B$ and $h_{\seq{g_1,\ldots,g_n}}\in\posi(\set{g_1,\ldots,g_n})$ where $f_{\seq{g_1,\ldots,g_n}}\geq h_{\seq{g_1,\ldots,g_n}}$.
\end{itemize}
\end{quote}

\begin{proposition}
This is equivalent to \cref{def:natext}
\end{proposition}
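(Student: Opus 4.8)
The plan is to prove the two extension operators agree set-by-set: for every $\calA\subseteq\powerset{\G}$ and every $B\subseteq\G$, $B$ lies in Decadt's extension of $\calA$ exactly when $B\in\extension(\calA)$ in the sense of \cref{def:natext}. I would first dispose of the case $\calA=\emptyset$, where both sides reduce to ``$B\cap\posgambles\neq\emptyset$'': for \cref{def:natext} this is the definition, and for Decadt's formulation its first bullet is exactly this condition while its second bullet is vacuous, since no $A_i$ can be drawn from $\emptyset$. So the substance is the case $\calA\neq\emptyset$.

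The key is a local comparison (modulo the global escape clause $B\cap\posgambles\neq\emptyset$), for a fixed sequence $\seq{g_1,\ldots,g_n}$ with $0\notin\desext{\set{g_1,\ldots,g_n}}$, of the two witness conditions: ($\alpha$) some $f\in B$ has $f\in\desext{\set{g_1,\ldots,g_n}}$; versus ($\beta$) some $f\in B$ and some $h\in\posi(\set{g_1,\ldots,g_n})$ have $f\geq h$. For ($\beta$)$\Rightarrow$($\alpha$): write $f=h+(f-h)$ with $f-h\geq 0$; if $f-h=0$ then $f=h\in\posi(\set{g_1,\ldots,g_n})\subseteq\desext{\set{g_1,\ldots,g_n}}$, and otherwise $f-h\in\posgambles$, so $f$ is a positive linear combination of members of $\set{g_1,\ldots,g_n}\cup\posgambles$, i.e. $f\in\desext{\set{g_1,\ldots,g_n}}$. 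For ($\alpha$)$\Rightarrow$($\beta$): take $f\in B$ with $f\in\desext{\set{g_1,\ldots,g_n}}$; if $f\notin\posgambles$ then \cref{thm:coherentclosure:lem} supplies an $h\in\posi(\set{g_1,\ldots,g_n})$ with $f\geq h$, and if $f\in\posgambles$ then $B\cap\posgambles\neq\emptyset$, so $B$ already meets Decadt's first bullet.

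To assemble the global claim for $\calA\neq\emptyset$: if $B$ satisfies Decadt's definition then either its first bullet holds --- pick any $A_1\in\calA$ and note that a witness $g\in B\cap\posgambles$ lies in $\desext{\set{g_1}}$ for every $g_1\in A_1$, so $B\in\extension(\calA)$ via $n=1$ --- or its second bullet holds, and applying ($\beta$)$\Rightarrow$($\alpha$) to each relevant sequence yields exactly the witnesses \cref{def:natext} demands. Conversely, if $B\in\extension(\calA)$ with witnessing $A_1,\ldots,A_n$ and gambles $f_{\seq{g_i}}$, then either some $f_{\seq{g_i}}$ lies in $\posgambles$, whence $B\cap\posgambles\neq\emptyset$ and Decadt's first bullet holds, or none does, in which case \cref{thm:coherentclosure:lem} upgrades every $f_{\seq{g_i}}$ to a pair $f_{\seq{g_i}}\geq h_{\seq{g_i}}$ with $h_{\seq{g_i}}\in\posi(\set{g_1,\ldots,g_n})$, which is precisely Decadt's second bullet.

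I do not expect a serious obstacle: \cref{thm:coherentclosure:lem} already carries the only substantive step. The one thing needing care is the interaction with $\posgambles$ --- recognising that when a chosen witness itself weakly dominates $0$ one must retreat to the ``$B\cap\posgambles\neq\emptyset$'' clause rather than try to underbound it by an element of $\posi(\set{g_1,\ldots,g_n})$, which need not exist --- together with the routine bookkeeping that $\posi(\set{g_1,\ldots,g_n})\subseteq\desext{\set{g_1,\ldots,g_n}}$ and that adjoining a $\posgambles$-summand keeps one within $\desext{\set{g_1,\ldots,g_n}}$.
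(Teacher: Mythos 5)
Your proposal is correct and follows essentially the same route as the paper: both hinge on the characterisation $f\in\desext{\set{g_1,\ldots,g_n}}$ iff $f\in\posgambles$ or $f\geq h$ for some $h\in\posi(\set{g_1,\ldots,g_n})$ (i.e.\ \cref{thm:coherentclosure:lem} plus its easy converse), and then fold the per-sequence $\posgambles$ alternative and the $\calA=\emptyset$ case into the single clause $B\cap\posgambles\neq\emptyset$. If anything, your explicit global assembly (observing that one witness in $\posgambles$ lets the whole of $B$ retreat to that clause, and conversely that such a witness serves every sequence via $n=1$) spells out a step the paper compresses into ``the first conditions can be combined''.
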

\begin{proof}
\begin{sublemma}
$f\in \desext{\Set{g_1,\ldots,g_n}}$ iff  $f\in\posgambles$ or there is some $h\in\posi(\Set{g_1,\ldots,g_n})$ with $f\geq h$.
\end{sublemma}
\begin{proof}
The left-to-right direction is \cref{thm:coherentclosure:lem}. 

The right-to-left direction is simple: When $f\in\posgambles$ clearly $f\in\desext{\set{g_1,\ldots,g_n}}$. When $f\geq h$ with $h\in\posi(\Set{g_1,\ldots,g_n})$, then $f-h\geq 0$. If $f=h\in\posi(\Set{g_1,\ldots,g_n})$, then also $f\in\desext{\Set{g_1,\ldots,g_n}}$. When $f\neq h$, then $f-h\in\posgambles$, so $f=h+(f-h)\in\desext{\set{g_1,\ldots,g_n}}$.
\end{proof}

Thus, \cref{def:natext} is equivalent to:
\begin{quote}
When $\calA\neq\emptyset$: \begin{itemize}
\item $B\in\extension(\calA)$ iff there are some $A_1,\ldots,A_n\in\calA$ such that for each sequence $\seq{g_1,\ldots,g_n}\in  A_1\times\ldots\times A_n$, whenever $0\notin \desext{\Set{g_1,\ldots,g_n}}$, there is some $f_{\seq{g_1,\ldots,g_n}}\in B$ where \begin{itemize}
\item either $f_{\seq{g_1,\ldots,g_n}}\in\posgambles$ 
\item or there is some $h_{\seq{g_1,\ldots,g_n}}\in\posi(\Set{g_1,\ldots,g_n})$ with $f_{\seq{g_1,\ldots,g_n}}\geq h_{\seq{g_1,\ldots,g_n}}$.
\end{itemize}
\end{itemize}
When $\calA=\emptyset$,\begin{itemize}\item  $B\in\extension(\emptyset)$ iff there is some $f\in B$ where also $f\in\posgambles$. 
\end{itemize}
\end{quote}
The first condition of the $\calA\neq\emptyset$ case and the $\calA=\emptyset$ case can be combined into the single condition that $B\cap\posgambles\neq\emptyset$. We are then left with the alternative condition that \begin{quote}
There are some $A_1,\ldots,A_n\in\calA$ such that for each sequence $\seq{g_1,\ldots,g_n}\in  A_1\times\ldots\times A_n$, whenever $0\notin \desext{\Set{g_1,\ldots,g_n}}$, there is some $f_{\seq{g_1,\ldots,g_n}}\in B$ and $h_{\seq{g_1,\ldots,g_n}}\in\posi(\set{g_1,\ldots,g_n})$ where $f_{\seq{g_1,\ldots,g_n}}\geq h_{\seq{g_1,\ldots,g_n}}$.
\end{quote}
So we see that the two formulations are equivalent.
\end{proof}

Note that it is important in doing this that the additional clause about $f\in\posgambles$ is added. 
That is, we cannot define $\extension(\calA)$ by simply taking our definition (\cref{def:natext}) and replacing the existence of $f\in B\cap \desext{\Set{g_1,\ldots,g_n}}$ with the existence of $f\in B$ and $h\in\posi(\set{g_1,\ldots,g_n})$ with $f\geq h$. [However, interestingly if we move to strict dominance in our criteria that would be possible.]

\subsection{Relationship to \citet{debock2018desirability}}\label{sec:relntodebock}

\citet{debock2018desirability} also give a formulation of the natural extension. 
 Since they both characterise natural extensions, they'll be equivalent, but in this section we consider the difference in more detail.

They: 
\begin{enumerate}
\item Add the singletons from $\posgambles$ to $\calA$. This gives us $\calA\cup\set{\set{g}\given g\in\posgambles}$.\label{itm:debocknatext:1}
\item We add sets using $\posi$ from these sets. More carefully: For any $C_1,\ldots,C_n$ in $\calA\cup\set{\set{g}\given g\in\posgambles}$ (from \ref{itm:debocknatext:1}), if for each sequence $\seq{g_i}\in\bigtimes C_i$, $f_{\seq{g_i}}\in \posi(\set{g_1,\ldots,g_n})$, then we add $\set{f_{\seq{g_i}}\given \seq{g_i}\in\bigtimes C_i}$. Our resultant set is called $\mathrm{Posi}(\calA\cup\set{\set{g}\given g\in\posgambles})$.\label{itm:debocknatext:2}
\item
Add any sets obtained by removing some gambles which are $\leq 0$. \label{itm:debocknatext:3}
\item Add any supersets thereof. \label{itm:debocknatext:4}
\end{enumerate}
This gives us the final set, called:  $\mathrm{Rs}(\mathrm{Posi}(\calA\cup\set{\set{g}\given g\in\posgambles}))$. They show that this is the natural extension in the finite setting. So in this setting, we know that it will be equivalent to ours, however it's open whether it gives the natural extension in the infinite setting.

It's got the same sort of moving parts as our definition: there's addition of some $\posgambles$ (in \ref{itm:debocknatext:1}), there's taking posi's (in \ref{itm:debocknatext:2}), there's removing some obviously-bad stuff (in \ref{itm:debocknatext:3}) and taking supersets (in \ref{itm:debocknatext:4}). These are the same sorts of moving parts that I have.

Let's try to rework their formulation to present it in a way that is more similar to our presentation so we can evaluate how/if they are different. Firstly, we can summarise the construction to: 
\begin{itemize}
\item  $B\in\extension_{\mathrm{DBdC}}(\calA)$ iff there are some $C_1,\ldots, C_n\in\calA\cup\set{\set{g}\given g\in \posgambles}$ with $\seq{g_1,\ldots,g_n}\in  C_1\times\ldots\times C_n$, there is some $f_{\seq{g_1,\ldots,g_n}}\in B\cup\G_{\leq 0}$ where also $f_{\seq{g_1,\ldots,g_n}}\in\posi(\Set{g_1,\ldots,g_n})$. 
\end{itemize}

We can then split the choice of $C_1,\ldots,C_n$ from $\calA\cup\set{\set{g}\given g\in \posgambles}$ into a choice separately of $A_1,\ldots,A_n\in\calA$, and $p_1,\ldots,p_m\in\posgambles$. Sometimes no members of $\calA$ are chosen, which should be presented as a separate clause. So we see that:
\begin{itemize}
\item $B\in\extension_{\mathrm{DBdC}}(\calA)$ iff either:\begin{itemize}
\item There are some $A_1,\ldots,A_n\in\calA$ and $p_1,\ldots,p_m\in\posgambles$ such that for each sequence $\seq{g_1,\ldots,g_n}\in  A_1\times\ldots\times A_n$, there is some $f_{\seq{g_1,\ldots,g_n}}\in B\cup\G_{\leq 0}$ where also $f_{\seq{g_1,\ldots,g_n}}\in\posi(\Set{g_1,\ldots,g_n,p_1,\ldots,p_m})$.
\item There are some $p_1,\ldots,p_m\in\posgambles$ such that there is some $f\in B\cup\G_{\leq 0}$ where also $f\in\posi(\Set{p_1,\ldots,p_m})$.
\end{itemize}
\end{itemize}

Note that this is different to our:

\begin{itemize}
\item $B\in\extension(\calA)$ iff there are some $A_1,\ldots,A_n\in\calA$ and for each sequence $\seq{g_1,\ldots,g_n}\in  A_1\times\ldots\times A_n$, there is some $f_{\seq{g_1,\ldots,g_n}}\in B\cup\set{0}$ where also $f_{\seq{g_1,\ldots,g_n}}\in\desext{\Set{g_1,\ldots,g_n}}$. 
\end{itemize}

Some of the differences between these are simply choices of presentation.

However, substantial differences are:
\begin{itemize}
\item DBdC allow the choice of the sequence $C_1,\ldots,C_n$ to contain no members of $\calA$ whatsoever. This allows their definition to apply properly to the case of $\calA=\emptyset$, whereas we need a separate definition for that case. 
\item   Their $\mathsf{Posi}$ is defined working with finitely many $C_1,\ldots,C_n$, which means in any use of it we can only use finitely many members of $\posgambles$. 
Is that enough? It's closely related to my addition of axiom \ref{ax:sets:dominators}.

For example, suppose $A\in\calA$ and we have for each $g$ in $A$ a distinct $p_g\in\posgambles$, we want to ensure that $\set{g+p_g\given g\in A}\in\extension(\calA)$. 

This doesn't immediately follow from the construction of $\mathsf{Posi}$ because it involves infinitely many sets: $A$ plus the infinitely many singletons $\set{p_g}$. 

In fact, like the dominance axiom, when $\Omega$ is finite it'll be fine because $g+p_g\in \posi(\set{g,\indicator{\omega_1},\ldots,\indicator{\omega_n}})$ so we can just work with the finitely many sets $A,\set{\indicator{\omega_1}},\ldots,\set{\indicator{\omega_n}}$. 
But perhaps when $\Omega$ is infinite it'll be different.

\end{itemize}

\begin{suggestion}
\subsection{Does DBdC get infinite addition axiom?}

Should DBdC extend the construction to define $\mathsf{Posi}$ that it directly works with infinitely many sets? 
\end{suggestion}

\subsection{Extending}

\subsubsection{Regularity}
We can replace $\posgambles$ by $\strposgambles$ and everything still works.  \Cref{thm:coherentclosure} then becomes simpler because any $f_{\seq{g_i}}\in \posi(\Set{g_1,\ldots,g_n}\cup\G_{> 0})$, has some   $h_{\seq{g_i}}\in\posi(\Set{g_1,\ldots,g_n})$ with $f_{\seq{g_i}}\geq h_{\seq{g_i}}$ so we don't need a separate clause. 

In fact this shows us that in the strict-dominance setting, axiom \ref{ax:sets:pos} is not required as a separate axiom, though one should then include a non-trivial axiom that $\K$ is non-empty.

\subsubsection{Extending to infinite addition}

If we extend to:
\begin{axioms}[K]
\axiom{\mathrm{AddInf}}{ If $A_i\in\K$ for each $i\in I$ (possibly infinite) and for each sequence $\seq{g_i}_{i\in I}$ with each $g_i\in A_i$, $f_{\seq{g_i}}$ is some member of $\posi(\Set{g_i\given i \in I})$, then 
$$
\Set{f_{\seq{g_i}}\given \seq{g_i}\in\bigtimes_{i\in I} A_i}\in\K
$$\label{ax:sets:infadd}
}
\end{axioms}
we obtain a strictly stronger system \citep[\S4.4.1]{campbellmoore2021isipta}. This resultant axiom system is an instance of the axioms in \citet{debock2023desirablethings}.

The results immediately carry through now instead working with: 
\begin{itemize}
\item $B\in\extension(\calA)$ iff there are some $A_i\in\K$ (a possible infinite collection) where for each sequence $\seq{g_i}\in \bigtimes_i A_i$, there is some $f_{\seq{g_i}}\in B\cup\set{0}$ with $f_{\seq{g_i}}\in \desext{\Set{g_i\given i\in I}}$. 
\end{itemize}

\section{Axioms when gamble sets must be finite}\label{sec:whenfinite}
This section shows that in the special case when gamble sets are finite, we can use the axioms from \cite{debock2018desirability}

\begin{definition}
$\K\subseteq\wp^{\mathrm{finite}}(\G)$ is \emph{coherent} if it satisfies \ref{ax:sets:nontrivial} \ref{ax:sets:0} \ref{ax:sets:pos} \ref{ax:sets:supersets} and:
 \begin{axioms}[K]%
 \axiom{\mathrm{AddPair}}{If $A_1,A_2\in\K$ and for each pair $g_1\in A_1$ and $g_2\in A_2$, $f_{\seq{g_1,g_2}}$ is some member of $\posi(\Set{g_1,g_2})$, then $$\Set{f_{\seq{g_1,g_2}}\given g_1\in A_1,g_2\in A_2}\in\K$$\label{ax:sets:addpair}}\qedhere
 \end{axioms}
\end{definition}

These are equivalent to our earlier axioms when we are restricted to finite sets. The proof of this is contained in the next two sections.

\subsection{Finite addition axiom - axiom \ref{ax:sets:add} }

\begin{proposition}\footnote{Thanks to Arthur van Camp for a key insight, and Jasper De Bock and Arne Decadt for discussion. }Assume we are restricted to finite sets.  Axiom \ref{ax:sets:add} follows from axiom \ref{ax:sets:addpair} and axiom \ref{ax:sets:supersets}.
\end{proposition}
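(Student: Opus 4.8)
The plan is to recast axiom~\ref{ax:sets:add} in an equivalent ``intersection'' form and then prove that form by induction on the number of sets.

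\emph{Reformulation.} Modulo axiom~\ref{ax:sets:supersets}, axiom~\ref{ax:sets:add} is equivalent to the statement $\mathrm{Int}(n)$: \emph{if $A_1,\dots,A_n\in\K$ and $B\subseteq\G$ satisfies $B\cap\posi(\set{g_1,\dots,g_n})\neq\emptyset$ for every $\seq{g_1,\dots,g_n}\in A_1\times\dots\times A_n$, then $B\in\K$.} (From \ref{ax:sets:add} to this: pick a witness $f_{\seq{g_i}}\in B\cap\posi(\set{g_1,\dots,g_n})$ for each sequence, apply \ref{ax:sets:add}, then \ref{ax:sets:supersets}; conversely take $B=\Set{f_{\seq{g_i}}\given \seq{g_i}\in\bigtimes_iA_i}$.) So it suffices to prove $\mathrm{Int}(n)$ for all $n$. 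The case $n=2$ is exactly axioms \ref{ax:sets:addpair} and \ref{ax:sets:supersets}, and the case $n=1$ follows by applying \ref{ax:sets:addpair} to $A_1,A_1$, choosing for each pair $\seq{g,g'}$ the witness lying in $\posi(\set{g})\subseteq\posi(\set{g,g'})$. I then proceed by induction: fix $n\geq 3$ and assume $\mathrm{Int}(n-1)$.

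\emph{Peeling off $A_n$.} The elementary ingredient is the decomposition $\posi(\set{g_1,\dots,g_n})=\bigcup_{h\in\posi(\set{g_1,\dots,g_{n-1}})}\posi(\set{h,g_n})$, immediate from the definition of $\posi$ (for ``$\subseteq$'', write a positive combination of $g_1,\dots,g_n$ as $h+\lambda_n g_n$ with $h$ the positive combination of $g_1,\dots,g_{n-1}$, taking $h$ to be an arbitrary element of $\posi(\set{g_1,\dots,g_{n-1}})$ in the degenerate case $h=\zerogamble$). Hence the hypothesis on $B$ says: for each $\seq{g_1,\dots,g_{n-1}}\in A_1\times\dots\times A_{n-1}$ and each $c\in A_n$ there is some $h\in\posi(\set{g_1,\dots,g_{n-1}})$ with $B\cap\posi(\set{h,c})\neq\emptyset$. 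Since $A_1\times\dots\times A_{n-1}$ is \emph{finite}, for each fixed $c\in A_n$ I pick one such $h$ per element of the product and gather them into a finite set $W^c$; then $W^c$ meets every $\posi(\set{g_1,\dots,g_{n-1}})$, so $W^c\in\K$ by $\mathrm{Int}(n-1)$, and every $h\in W^c$ satisfies $B\cap\posi(\set{h,c})\neq\emptyset$. I also fix a witness $\phi_c(h)\in B\cap\posi(\set{h,c})$ for each $c\in A_n$ and $h\in W^c$, and set $\Phi_c:=\Set{\phi_c(h)\given h\in W^c}\subseteq B$.

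\emph{Combining, and the main obstacle.} It remains to derive $B\in\K$ from the sets $W^c\in\K$ ($c\in A_n$) and $A_n\in\K$. If $A_n=\set{c}$ this is immediate: $\set{c}=A_n\in\K$, so \ref{ax:sets:addpair} applied to $W^c$ and $\set{c}$ (sending $\seq{h,c}\mapsto\phi_c(h)$) produces a subset of $B$ in $\K$, whence $B\in\K$ by \ref{ax:sets:supersets}. If $A_n=\set{c_1,c_2}$, three applications of \ref{ax:sets:addpair} suffice: on $W^{c_1},A_n$ (sending $\seq{h,c_1}\mapsto\phi_{c_1}(h)$, $\seq{h,c_2}\mapsto h$), giving $P_1=\Phi_{c_1}\cup W^{c_1}\in\K$; on $W^{c_2},A_n$ (sending $\seq{h,c_2}\mapsto\phi_{c_2}(h)$, $\seq{h,c_1}\mapsto c_1$), giving $P_2=\Phi_{c_2}\cup\set{c_1}\in\K$; and on $P_1,P_2$, where for every pair either the first entry already lies in $B$, or the second entry lies in $B$, or the pair is $\seq{h,c_1}$ with $h\in W^{c_1}$ and so already spans a $\posi$ meeting $B$ --- choosing witnesses accordingly yields $\Phi_{c_1}\cup\Phi_{c_2}\subseteq B$ in $\K$, hence $B\in\K$. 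The hard part is $\lvert A_n\rvert\geq 3$: a naive iteration of the $\lvert A_n\rvert=2$ construction fails, because after combining some of the $W^c$ the leftover auxiliary gambles can form pairs $\seq{h,h'}$ or $\seq{h,c'}$ whose $\posi$ need not meet $B$, and these cannot be discarded (axiom~\ref{ax:sets:0} removes only $\zerogamble$). Making the combination go through will require a careful ordering and choice of the auxiliary gambles in the successive applications of \ref{ax:sets:addpair} so that every pair that arises again spans a cone meeting $B$; this is the delicate point, and is where van Camp's insight is needed. (An alternative is a secondary induction on $\lvert A_n\rvert$, but that demands a statement strong enough to handle subsets of $A_n$ which are not themselves in $\K$.)
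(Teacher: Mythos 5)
Your reduction to the intersection form $\mathrm{Int}(n)$, the induction on $n$, and the construction of the auxiliary sets of gambles $h\in\posi(\set{g_1,\dots,g_{n-1}})$ with $B\cap\posi(\set{h,c})\neq\emptyset$ all match the paper's strategy. But the proof is not complete: you explicitly leave open the combination step for $\lvert A_n\rvert\geq 3$, and that step is precisely the content of the result (it is where van Camp's insight enters in the paper), so as it stands there is a genuine gap rather than a finished argument. The specific source of your obstacle is the decision to build all the sets $W^c$ ($c\in A_n$) up front and then try to merge them with each other, which forces you to confront pairs of auxiliary gambles $\seq{h,h'}$ whose cone need not meet $B$.

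The paper avoids this by a sub-induction that replaces the elements of $A_n$ \emph{one at a time}, keeping a hybrid set rather than the auxiliary sets. Enumerate $A_n=\set{c_1,\dots,c_K}$ and set $B_k:=\Phi_{c_1}\cup\dots\cup\Phi_{c_k}\cup\set{c_{k+1},\dots,c_K}$ (in your notation, so $B_k\subseteq B\cup A_n$ and $B_K\subseteq B$). Given $B_k\in\K$, apply \ref{ax:sets:addpair} to the pair $B_k$, $W^{c_{k+1}}$ (the latter is in $\K$ by $\mathrm{Int}(n-1)$, exactly as you constructed it): for a pair $\seq{x,h}$ with $x\in B_k$, if $x\neq c_{k+1}$ choose the witness $x$ itself, which lies in $\posi(\set{x,h})$ and in $B_{k+1}$; if $x=c_{k+1}$ choose $\phi_{c_{k+1}}(h)\in B\cap\posi(\set{h,c_{k+1}})\subseteq B_{k+1}$. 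Every witness lies in $B_{k+1}$, so \ref{ax:sets:addpair} plus \ref{ax:sets:supersets} gives $B_{k+1}\in\K$, and after $K$ steps $B\in\K$. The point you were missing is that the intermediate witness sets need not be subsets of $B$; they only need to be subsets of the next hybrid set $B_{k+1}$, and the not-yet-processed elements $c_{k+2},\dots,c_K$ are harmless because each is a legitimate witness for any pair it occurs in. Since the auxiliary gambles $h$ are never carried into the hybrid sets, the problematic pairs $\seq{h,h'}$ or $\seq{h,c'}$ you worried about never have to be given witnesses in $B$.
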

\begin{proof}
We in fact prove:%
\begin{itemize}
\item If $A_1,\ldots,A_n\in\K$ and for each sequence $\seq{g_1,\ldots,g_n}\in  A_1\times\ldots\times A_n$, there is some $f_{\seq{g_1,\ldots,g_n}}\in B\cap\posi(\Set{g_1,\ldots,g_n}$; \\Then $B\in\K$.
\end{itemize}
which entails axiom \ref{ax:sets:add}. (It essentially just adds any supersets. This allows us to avoid some fiddliness in the proof.)

We work by induction on $n$. 

Base case: For $n=1$ use axiom \ref{ax:sets:addpair} and axiom \ref{ax:sets:supersets}, letting $A_1=A_2$. 

Inductive step: Assume it holds for $n$ and consider $n+1$.
 
So, suppose $A_1,\ldots,A_n,A_{n+1}\in\K$ and for each sequence $\seq{g_i}\in \bigtimes_i A_i$ and $a\in A_{n+1}$, there is some $f_{\seq{g_1,\ldots,g_n,a}}\in B\cap \posi(\set{g_1,\ldots,g_n,a})$. We need to show that $B\in\K$.

We will abuse notation and write $f_{\seq{g_i,a}}$ instead of $f_{\seq{g_1,\ldots,g_n,a}}$.

$A_{n+1}$ is finite, so enumerate it, $A_{n+1}=\set{a_1,\ldots,a_K}$.
Define:
\begin{equation}
B_{k}:=\set*{f_{\seq{g_i,a}}\given  \seq{g_i}\in \bigtimes_{i\leq n} A_i,\; a\in\set{a_1,\ldots,a_k}}\cup\set{a_{k+1},\ldots,a_K}\label{eq:finaddn:B}
\end{equation} 
Where we have already replaced $k$-many of $A_{n+1}$ with the relevant members of $B$. 
We show by a induction on $k$ that $B_k\in\K$. Since $B_K\subseteq B$, this will suffice.\footnote{Arthur van Camp's insight that this argument can be done in such a step-by-step way to just use axiom \ref{ax:sets:addpair} when the sets are finite, i.e., it can be shown by a sub-induction.}

Base case: $B_0=A_{n+1}\in\K$.

Inductive step: we can assume that 
$B_{k}\in\K$ and need to show $B_{k+1}\in \K$.

We will construct a set $C$ which we know to be in $\K$, which we can then combine with $B_k$ using axiom \ref{ax:sets:addpair} (and axiom \ref{ax:sets:supersets}) to get $B_{k+1}\in\K$.

By assumption,
$f_{\seq{g_i,a_{k+1}}}\in\posi(\set{g_1,\ldots,g_n,a_{k+1}})$.
 So there is some $h_{\seq{g_i,a_{k+1}}}$ where:
 \begin{align}
 &h_{\seq{g_i,a_{k+1}}}\in \posi(\Set{g_1,\ldots,g_n}),\text{ and }\label{eq:finaddn:h}\\
 &f_{\seq{g_i,a_{k+1}}}\in \posi(\Set{a_{k+1},h_{\seq{g_i,a_{k+1}}}}). \label{eq:finaddn:f}
 \end{align}
To see this: $f_{\seq{g_i,a_{k+1}}}$ has the form $\sum_{i\leq n}\lambda_ig_i+\mu a_{k+1}$. If $\lambda_i> 0$ for some $i$, then we put $h_{\seq{g_i,a_{k+1}}}:=\sum_{i\leq n}\lambda_ig_i$. If not, then we simply let it be, for example, $g_1$. This is as required.

We let \begin{equation}
C:=\set{h_{\seq{g_i,a_{k+1}}}\given \seq{g_i}\in \bigtimes_{i\leq n} A_i}
\end{equation}
$C\in\K$ by the induction hypothesis (on $n$), using \cref{eq:finaddn:h}.

We will show that for every $\seq{c,b}\in B_k\times C$, there is some  $d\in  \posi(\set{c,b})\cap B_{k+1}$; which will then allow us to use axiom \ref{ax:sets:addpair} (and \ref{ax:sets:supersets}) to get that $B_{k+1}\in\K$.
 
If $c\in B_k\cap B_{k+1}$, we just put $d=c$. The only remaining member of $B_k$ to consider is $c=a_{k+1}$. For any $b\in C$, $b=h_{\seq{g_i,a_{k+1}}}$ for some $\seq{g_i}\in \bigtimes_i A_i$. By definition of this (\cref{eq:finaddn:f}),  $f_{\seq{g_i,a_{k+1}}}\in\posi(\set{a_{k+1},b})$. And by looking at the definition of $B_{k+1}$ (\cref{eq:finaddn:B}), also $f_{\seq{g_i,a_{k+1}}}\in B_{k+1}$. So $f_{\seq{g_i,a_{k+1}}}$ is the required $d\in\posi(\set{c,b})\cap B_{k+1}$. 

So, by axiom \ref{ax:sets:addpair} and axiom \ref{ax:sets:supersets},  $B_{k+1}\in\K$, completing the inductive step.

This has shown by induction that $B_K\in\K$. $B_K\subseteq B$, so $B\in\K$ (using \ref{ax:sets:supersets}). 

This completes the inductive step for our initial induction (on $n$). Thus we have shown that our initial statement holds for all $n$. And since it entails axiom \ref{ax:sets:add}, we have shown that this follows.  
\end{proof}

I think that one can do it without axiom \ref{ax:sets:supersets}; though it'll be fiddlier. 
\begin{suggestion}
See  \cref{sec:addpair} for an attempted proof. 
\end{suggestion}

\subsection{Dominators axiom - axiom \ref{ax:sets:dominators}}

\begin{proposition}\label{thm:dom from add}Assume we are restricted to finite sets.  Axiom \ref{ax:sets:dominators} follows from axiom \ref{ax:sets:add} and \ref{ax:sets:pos}.
\end{proposition}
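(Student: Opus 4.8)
The plan is to reduce \ref{ax:sets:dominators} to \ref{ax:sets:add}, using \ref{ax:sets:pos} to feed in the non-negative ``gaps'' between each gamble and its dominator as singleton members of $\K$. Suppose $A\in\K$ (so $A$ is finite, as we are in the finite-gamble-set setting) and for each $g\in A$ we are given a gamble $f_g\geq g$; we must show $\Set{f_g\given g\in A}\in\K$. For each $g\in A$ with $f_g\neq g$, put $p_g:=f_g-g$; then $p_g\geq 0$ because $f_g\geq g$, and $p_g\neq 0$, so $p_g\weakdom 0$, i.e.\ $p_g\in\posgambles$, and hence $\Set{p_g}\in\K$ by \ref{ax:sets:pos}. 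Let $S:=\Set{g\in A\given f_g\neq g}$; this is finite because $A$ is, so enumerate $S=\Set{g^1,\ldots,g^r}$ (the case $r=0$ being allowed).

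Next I would apply \ref{ax:sets:add} to the finite list $A,\Set{p_{g^1}},\ldots,\Set{p_{g^r}}$, all of which lie in $\K$. A sequence in $A\times\Set{p_{g^1}}\times\cdots\times\Set{p_{g^r}}$ is determined entirely by its first entry $g\in A$, and for that sequence I take the chosen member of $\posi(\Set{g,p_{g^1},\ldots,p_{g^r}})$ to be $f_g$ itself. The only computation is checking this is legitimate: if $g\in S$ then $f_g=1\cdot g+1\cdot p_g\in\posi(\Set{g,p_g})\subseteq\posi(\Set{g,p_{g^1},\ldots,p_{g^r}})$, and if $g\notin S$ then $f_g=g=1\cdot g\in\posi(\Set{g})\subseteq\posi(\Set{g,p_{g^1},\ldots,p_{g^r}})$, using monotonicity of $\posi$. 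Then \ref{ax:sets:add} gives $\Set{f_g\given g\in A}\in\K$, which is exactly the conclusion of \ref{ax:sets:dominators}. (When $S=\emptyset$ one has $f_g=g$ for all $g$, so $\Set{f_g\given g\in A}=A\in\K$ already; the argument above still runs verbatim using the one-set instance of \ref{ax:sets:add}.)

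The only step needing care is the $\posi$-membership check, and it is routine. Two points are worth flagging. First, $p_g$ can be $0$, which is why $S$ separates off the genuinely-dominated gambles, and why \ref{ax:sets:pos} is the right tool: it supplies $\Set{g}$ only for $g\in\posgambles$, and it correctly does not hand us $\Set{0}\in\K$. Second, finiteness of $A$ is used essentially, since \ref{ax:sets:add} combines only finitely many members of $\K$ while we invoke one singleton $\Set{p_g}$ per $g\in S$; this is precisely why \ref{ax:sets:dominators} must be postulated separately once infinite gamble sets are allowed (or else replaced by \ref{ax:sets:infadd}). Note, incidentally, that finiteness of $\Omega$ plays no role at all in this argument. (One could alternatively replace the dominators one at a time, applying \ref{ax:sets:add} to $A$ together with a single $\Set{p_g}$ and iterating $|A|$ times, but that route needs extra bookkeeping to handle coincidences among the various $g$ and $f_g$, so the ``all at once'' version above is cleaner.)
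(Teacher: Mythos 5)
Your proposal is correct and follows essentially the same route as the paper's own proof: form the non-negative gaps $f_g-g$, use \ref{ax:sets:pos} to put the singletons of the nonzero gaps into $\K$, and then apply \ref{ax:sets:add} once to $A$ together with these finitely many singletons, choosing $f_g\in\posi(\Set{g}\cup\Set{f_{g'}-g'\given g'\in A,\ f_{g'}\neq g'})$ for the sequence headed by $g$. Even your closing remark about the one-at-a-time alternative mirrors the paper's comment on iterating the single-replacement lemma of de Bock and de Cooman.
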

The proof is closely related to \citet[Lemma 34]{debock2019interpreting}. There they show that when one replaces a single member of a desirable gamble set by a dominator it remains desirable. We could directly use this and iterate it (finitely many times) to show our result (to write this down carefully, we'd have to do a proof by induction).  But it is slightly cleaner to directly rely on our already proved axiom \ref{ax:sets:add} and do the replacements simultaneously. 
\begin{proof}

Suppose $B=\set{f_g\given g\in A}$, where $f_g\geq g$. 

Now, $h_g:=f_g-g\in\G_{\geq 0}$. 

If $h_g\neq \zerogamble$, the singleton $\set{h_g}\in\K$. 

So we have $A\in\K$ and each singleton $\set{h_g}\in\K$ whenever $f_g\weakdom g$. 

Since $A$ is finite, this is finitely many sets.

For any $g^*\in A$, $f_{g^*}\in\posi(\set{g^*}\cup\set{h_{g}\given g\in A})$. 

So by axiom \ref{ax:sets:add}, $B\in \K$. 
\end{proof}
We can similarly see:
\begin{proposition}\label{thm:dom from infadd}
Axiom \ref{ax:sets:dominators} follows from axiom \ref{ax:sets:infadd} and axiom \ref{ax:sets:pos} even when sets can be infinite.
\end{proposition}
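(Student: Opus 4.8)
The plan is to adapt the proof of \cref{thm:dom from add} almost verbatim, the only change being that the appeal to the finite addition axiom \ref{ax:sets:add} is replaced by the infinite addition axiom \ref{ax:sets:infadd}; this is exactly what is needed, since when $A$ is infinite the auxiliary collection of singletons we build is infinite too, and \ref{ax:sets:add} would no longer apply.

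Concretely, suppose $\K$ satisfies \ref{ax:sets:infadd} and \ref{ax:sets:pos}, let $A\in\K$, and for each $g\in A$ let $f_g$ be a gamble with $f_g\geq g$; we must show $B:=\set{f_g\given g\in A}\in\K$. Set $h_g:=f_g-g$, so $h_g\in\G_{\geq 0}$. For those $g\in A$ with $h_g\neq\zerogamble$ --- equivalently, $f_g\weakdom g$ --- we have $h_g\in\posgambles$, hence $\set{h_g}\in\K$ by \ref{ax:sets:pos}. (The indices with $h_g=\zerogamble$ must be left out of what follows, since $\set{\zerogamble}\notin\K$.)

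Now apply \ref{ax:sets:infadd} to the family $(A_i)_{i\in I}$ indexed by $I:=\set{0}\cup\set{g\in A\given h_g\neq\zerogamble}$, where $A_0:=A$ and $A_g:=\set{h_g}$ for the remaining indices; each $A_i\in\K$. Every choice function $\seq{g_i}_{i\in I}\in\bigtimes_{i\in I}A_i$ is completely determined by its value $g^*:=g_0\in A_0=A$, the other coordinates being forced singletons. For such a choice function I would take the witness to be $f_{g^*}$, and check $f_{g^*}\in\posi(\set{g_i\given i\in I})$: if $h_{g^*}\neq\zerogamble$ then $f_{g^*}=g^*+h_{g^*}$ is a positive linear combination of $g^*\in A_0$ and $h_{g^*}\in A_{g^*}$; if $h_{g^*}=\zerogamble$ then $f_{g^*}=g^*\in\posi(\set{g^*})$ as a single-term combination. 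Either way \ref{ax:sets:infadd} yields $\set{f_{g^*}\given g^*\in A}=B\in\K$, which is precisely what \ref{ax:sets:dominators} demands.

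I do not expect a genuine obstacle here: the content of the argument is just the bookkeeping of the index set $I$ (in particular the need to drop the $g$ with $h_g=\zerogamble$) together with the observation that $\posi$ permits single-term combinations, so the $h_{g^*}=\zerogamble$ case needs no special handling. All the substantive work is absorbed into having \ref{ax:sets:infadd} available; by contrast, one cannot run this argument from \ref{ax:sets:add} alone when $A$ is infinite, which is exactly why the finite-set hypothesis was imposed in \cref{thm:dom from add}.
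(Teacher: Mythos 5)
Your proposal is correct and is essentially the paper's own argument: the paper proves \cref{thm:dom from infadd} by running the proof of \cref{thm:dom from add} verbatim with axiom \ref{ax:sets:infadd} in place of \ref{ax:sets:add}, i.e.\ taking $A$ together with the (possibly infinitely many) singletons $\set{h_g}$ for $h_g=f_g-g\weakdom\zerogamble$ and witnessing each choice by $f_{g^*}\in\posi(\set{g^*}\cup\set{h_g\given g\in A})$. Your version just makes the index-set bookkeeping and the $h_{g^*}=\zerogamble$ case explicit, which the paper leaves implicit.
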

\begin{proof}
The argument from \cref{thm:dom from add}.
\end{proof}
\section{Representation with desirable gambles}

There is a representation result in the finite setting which says that every coherent $\K$ can be captured by a set of coherent desirable gambles: \citet[Theorem 9]{debock2019interpreting}. We have to generalise this result in the infinite setting if we want to avoid axiom \ref{ax:sets:infadd} and merely impose axiom \ref{ax:sets:add}.

\subsection{Sets of coherent desirable gamble sets}\label{sec:repinf}

If we have a set of desirable gambles, $\des$, we can extract a set of desirable gamble sets, $\K_\des$:  $\des$ evaluates a gamble set $B$ as desirable when it thinks that some member of the set is desirable
\begin{align}
B\in\K_\des&\text{ iff there is some $g\in B$ with $g\in\des$}\\
&\text{ iff }B\cap \des\neq\emptyset
\end{align}
This results in very special kinds of sets of desirable gamble sets. We can get more if we instead look at generating a set of desirable gamble sets by using a  \emph{set} ($\setdes$) of coherent sets of desirable gambles ($\des$s).

If we have a set of  coherent $\des$s, $\setdes$,
we can consider evaluations of a set of gambles, $B$, as desirable if every $\des\in\setdes$ evaluates it as desirable. 
 That is:
\begin{align}
B\in \K_\setdes&\text{ iff $B\in\K_\des$ for each $\des\in\setdes$}
\\&\text{ iff for each $\des\in\setdes$ there is some $g\in B$ with $g\in\des$}
\\&\text{ iff for each $\des\in\setdes$ there is some $g\in B\cap\des$}
\end{align}
More concisely, $\K_\setdes:=\bigcap_{\des\in\setdes} \K_\des$.

When $\setdes$ is a nonempty set of coherent $\des$, 
$\K_\setdes$ is coherent. But it also satisfies axiom \ref{ax:sets:infadd}. 
\begin{proposition}\label{thm:setdes sat infadd}
Let $\setdes$ be a nonempty set of coherent $\des$s. 
 $\K_\setdes$ satisfies K$_\text{infadd}$.\footnote{This result is also \citet[Theorem 9, Example 9]{debock2023desirablethings}}
\end{proposition}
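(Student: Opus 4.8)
The plan is to verify axiom \ref{ax:sets:infadd} directly from the defining formula $\K_\setdes=\bigcap_{\des\in\setdes}\K_\des$. Since a set lies in an intersection exactly when it lies in every member, it is enough to fix an arbitrary $\des\in\setdes$ and show that the gamble set produced by the axiom belongs to $\K_\des$ — equivalently, that it meets $\des$.

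So suppose $A_i\in\K_\setdes$ for each $i\in I$, and for each sequence $\seq{g_i}_{i\in I}\in\bigtimes_{i\in I}A_i$ we are handed some $f_{\seq{g_i}}\in\posi(\Set{g_i\given i\in I})$. Fix $\des\in\setdes$. For every $i$ we have $A_i\in\K_\setdes\subseteq\K_\des$, hence $A_i\cap\des\neq\emptyset$; by the axiom of choice, select simultaneously some $g_i\in A_i\cap\des$ for every $i\in I$. This produces a sequence $\seq{g_i}_{i\in I}\in\bigtimes_{i\in I}A_i$, and with it the witness $f_{\seq{g_i}}\in\posi(\Set{g_i\given i\in I})$.

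The only real content is then that a $\posi$-combination involves only finitely many terms: by definition $f_{\seq{g_i}}=\sum_{k=1}^n\lambda_k g_{i_k}$ for some finitely many $i_1,\dots,i_n\in I$ and reals $\lambda_k>0$. Each $g_{i_k}\in\des$, and $\des$ is coherent, so axioms \ref{ax:desgamb:scalar} and \ref{ax:desgamb:add} give $f_{\seq{g_i}}\in\des$. Thus $f_{\seq{g_i}}$ is a member of $\Set{f_{\seq{g_i}}\given\seq{g_i}\in\bigtimes_{i\in I}A_i}$ that also lies in $\des$, so that gamble set is in $\K_\des$; and since $\des\in\setdes$ was arbitrary, it lies in $\bigcap_{\des\in\setdes}\K_\des=\K_\setdes$, which is exactly axiom \ref{ax:sets:infadd}. (The case $I=\emptyset$ is vacuous: then $\posi(\Set{g_i\given i\in I})=\posi(\emptyset)=\emptyset$, so the hypothesis cannot be satisfied; alternatively one restricts to $I\neq\emptyset$.)

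I do not expect a genuine obstacle — the argument is short and the content is conceptual. The point worth emphasising in the write-up is \emph{why} $\K_\setdes$ inherits the infinite addition axiom, which a general coherent $\K$ need not: a single coherent $\des$ furnishes a compatible witness for \emph{all} the $A_i$ simultaneously, and since any $\posi$-combination only ever draws on finitely many of those witnesses, the finite closure properties \ref{ax:desgamb:scalar} and \ref{ax:desgamb:add} of $\des$ already suffice to absorb them. The one delicate step is the appeal to the axiom of choice to form the choice function $i\mapsto g_i$: this is precisely where the ``infinitely many simultaneous selections'' happen, and it is legitimate because they are all made inside one fixed, well-defined coherent set $\des$ rather than having to be coordinated across the construction.
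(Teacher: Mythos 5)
Your proposal is correct and follows essentially the same route as the paper's proof: fix an arbitrary coherent $\des\in\setdes$, choose some $g_i\in A_i\cap\des$ for each $i\in I$, and note that the resulting $f_{\seq{g_i}}\in\posi(\Set{g_i\given i\in I})$ lies in $\des$ by its (finitary) coherence, so the constructed gamble set meets every $\des\in\setdes$. The paper leaves implicit the two points you spell out (the appeal to choice and the fact that a $\posi$-combination uses only finitely many terms), but these are refinements of presentation rather than a different argument.
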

\begin{proof}
Let $A_i\in\K_\setdes$ with $f_{\seq{g_i}}\in\posi(\set{g_i\given i\in I})$ with $B=\set{f_{\seq{g_i}}\given \seq{g_i}\in\bigtimes A_i}$. 

So for each $\des\in\setdes$, $\des\cap A_i\neq \emptyset$. 

Fix any $\des\in\setdes$. Take $g_i$ to be some member of $A_i\cap \des$. 
Now, $f_{\seq{g_i}}\in\posi(\set{g_i\given i\in I})$, so since each $g_i\in\des$, also $f_{\seq{g_i}}\in\des$ by coherence of $\des$. Thus $B\cap \des\neq\emptyset$.

So, for any $\des\in\setdes$, $B\cap \des\neq\emptyset$. Thus, $B\in\K_{\setdes}$. 
\end{proof}
Note that, as we show in \citet[\S4.4.1]{campbellmoore2021isipta}, some coherent $\K$ fail axiom \ref{ax:sets:infadd}
So these are still special kinds of coherent sets of desirable gamble sets.

In fact, $\K_\setdes$ are exactly the coherent $\K$ that satisfy axiom \ref{ax:sets:infadd}.
 See \citet[Theorem 9, Example 9]{debock2023desirablethings} for this result and generalisations. %
 Instead of giving a direct proof here, we move to consider the more general question:
What about when we allow for mere finite addition? Can we generate \emph{all} coherent $\K$ in an analogous way? Answer: yes!

\subsection{Representation of all coherent $\K$ using $\des$s}

A gamble set $B$ was evaluated by $\setdes$ by seeing if every $\des$ in $\setdes$ evaluated $B$ as desirable. Now, we consider different possible evaluations, as given by a collection of $\setdes$, and simply ask that one of them evaluates $B$ as desirable. 

That is, we put: 
\begin{align}
B\in \K_\setsetdes&\text{ iff there is some $\setdes\in\setsetdes$ with $B\in\K_\setdes$}
\\&\text{ iff there is some $\setdes\in\setsetdes$ s.t.~for all $\des\in\setdes$, $B\in\K_\des$ }
\\&\text{ iff there is some $\setdes\in\setsetdes$ s.t.~for all $\des\in\setdes$ there is some $g\in B\cap\des$}\label{eq:Ksetsetdes}
\end{align}
Or, more concisely, 
\begin{equation}
\K_\setsetdes:=\bigcup_{\setdes\in\setsetdes}\bigcap_{\des\in\setdes} \K_\des
\end{equation}

Whereas we went from $\des$ to $\setdes$ using universal quantification, we go from $\setdes$ to $\setsetdes$ using existential quantification.
$\setsetdes$ contains the $\setdes$ that one is happy to evaluate gamble sets with respect to. This is different from a $\setdes$ which contains the $\des$ that one thinks are still open. 
One is not sure about which $\des\in\setdes$ is ``right'' so will only make judgements when agreed on by all $\des\in\setdes$. But each $\setdes\in\setsetdes$ is in some sense good and can be trusted to make decisions.

In general, this will not generate something coherent. In particular, there is no guarantee that axiom \ref{ax:sets:add} will hold, as we can select $A_i$s evaluated as desirable by different $\setdes_i$, and there's no guarantee that the relevant $\set{f_{\seq{g_i}}\given \seq{g_i}\in\bigtimes A_i}$ is evaluated as desirable by any $\setdes$ in $\setsetdes$. To ensure that axiom \ref{ax:sets:add} holds, we will add a requirement on $\setsetdes$: we will require it to be downwards closed:\footnote{I had originally required that it is actually closed under finite intersection. Thanks to Jasper De Bock for noticing that my proof didn't then work. } when $\setdes_1,\setdes_2\in \setsetdes$, there is some $\setdes\in\setsetdes$ with $\setdes\subseteq\setdes_1\cap\setdes_2$. This will ensure that by taking some $A_1$ evaluated as desirable by $\setdes_1$ and $A_2$ by $\setdes_2$, then the $\setdes\subseteq\setdes_1\cap\setdes_2$ will evaluate the required $\set{f_{\seq{g_1,g_2}}\given \seq{g_1,g_2}\in A_1\times A_2}$ as desirable; and it can be extended to finitely many members to get axiom \ref{ax:sets:add}. 

\todo[inline]{Change this to filters not downward closed sets}
This then allows us to represent all coherent $\K$, where we impose axiom \ref{ax:sets:add} but not necessarily axiom \ref{ax:sets:infadd}. 

\begin{theorem}\label{thm:setsetdes rep}
$\K$ is coherent iff there is some $\setsetdes$ a non-empty set of $\setdes$s, which are themselves non-empty sets of coherent $\des$s, and where $\setsetdes$ is downwards closed, with $\K=\K_\setsetdes$, i.e.,:\begin{equation}
B\in \K\text{ iff there is some $\setdes\in\setsetdes$ s.t.~for all $\des\in\setdes$ there is some $g_\des\in B\cap\des$}\label{eq:Ksetsetdes}
\end{equation}
\end{theorem}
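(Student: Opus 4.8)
The plan is to prove sufficiency by checking the coherence axioms for $\K_{\setsetdes}$ one by one, and to prove necessity by exhibiting a concrete $\setsetdes$ assembled from the non-empty finite subfamilies of $\K$, with \cref{thm:coherentclosure} doing the heavy lifting.

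\emph{Sufficiency.} Suppose $\K=\K_{\setsetdes}$ for a downward-closed non-empty $\setsetdes$ whose members are non-empty sets of coherent $\des$s. Axioms \ref{ax:sets:nontrivial}, \ref{ax:sets:0}, \ref{ax:sets:pos}, \ref{ax:sets:supersets} and \ref{ax:sets:dominators} are verified ``pointwise'' in $\des$: if $B\in\K_{\setdes}$ is witnessed, for each $\des\in\setdes$, by some $g_{\des}\in B\cap\des$, then $0\notin\des$ gives $g_{\des}\in(B\setminus\set{0})\cap\des$; any $C\supseteq B$ has $g_{\des}\in C\cap\des$; and any dominator $f_{g_{\des}}\geq g_{\des}$ still lies in $\des$, since $f_{g_{\des}}-g_{\des}\in\posgambles\cup\set{0}$ and $\des$ satisfies \ref{ax:desgamb:regularity} and \ref{ax:desgamb:add}. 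Also $\emptyset$ meets no $\des$, so $\emptyset\notin\K_{\setdes}$ for every $\setdes$, and $\setsetdes\neq\emptyset$ yields $\emptyset\notin\K_{\setsetdes}$; and $g\in\posgambles$ lies in every coherent $\des$, so $\set{g}\in\K_{\setdes}$ for any (hence some) $\setdes\in\setsetdes$. The only axiom that genuinely uses downward-closedness is \ref{ax:sets:add}: given $A_1,\dots,A_n\in\K_{\setsetdes}$ witnessed by $\setdes_1,\dots,\setdes_n\in\setsetdes$, iterate downward-closedness to get $\setdes\in\setsetdes$ with $\setdes\subseteq\setdes_1\cap\dots\cap\setdes_n$; then for each $\des\in\setdes$ pick $g_j\in A_j\cap\des$, note $\posi(\set{g_1,\dots,g_n})\subseteq\des$ by \ref{ax:desgamb:scalar} and \ref{ax:desgamb:add}, so $f_{\seq{g_1,\dots,g_n}}\in\des$, and hence the $\posi$-set meets every $\des\in\setdes$, i.e.\ it lies in $\K_{\setdes}\subseteq\K_{\setsetdes}$.

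\emph{Necessity.} Given a coherent $\K$, for each non-empty finite $\mathcal{F}=\set{A_1,\dots,A_n}\subseteq\K$ set $\setdes_{\mathcal{F}}:=\Set{\des\given\des\text{ coherent and }\des\cap A_i\neq\emptyset\text{ for all }i}$, and let $\setsetdes:=\Set{\setdes_{\mathcal{F}}\given\mathcal{F}\subseteq\K\text{ finite and non-empty}}$. Then $\setsetdes\neq\emptyset$ because $\posgambles\neq\emptyset$ forces some singleton into $\K$ by \ref{ax:sets:pos}, and $\setsetdes$ is downward closed since $\setdes_{\mathcal{F}_1\cup\mathcal{F}_2}\subseteq\setdes_{\mathcal{F}_1}\cap\setdes_{\mathcal{F}_2}$. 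The crucial lemma is that each $\setdes_{\mathcal{F}}$ is non-empty: for $\seq{g_1,\dots,g_n}\in A_1\times\dots\times A_n$ with $0\notin\desext{\set{g_1,\dots,g_n}}$, \cref{thm:des nat extn} makes $\desext{\set{g_1,\dots,g_n}}$ a coherent $\des$ containing each $g_i$, hence a member of $\setdes_{\mathcal{F}}$; so if $\setdes_{\mathcal{F}}=\emptyset$ then $0\in\desext{\set{g_1,\dots,g_n}}$ for every such sequence, and \cref{thm:coherentclosure} with target set $\emptyset$ would give $\emptyset\in\K$, contradicting \ref{ax:sets:nontrivial}. Finally I would check $\K_{\setsetdes}=\K$: for $B\in\K$ the family $\set{B}$ witnesses $B\in\K_{\setdes_{\set{B}}}\subseteq\K_{\setsetdes}$ since every $\des\in\setdes_{\set{B}}$ meets $B$ by definition; conversely, if $B\in\K_{\setdes_{\mathcal{F}}}$ with $\mathcal{F}=\set{A_1,\dots,A_n}$, then for each $\seq{g_1,\dots,g_n}\in A_1\times\dots\times A_n$ either $0\in\desext{\set{g_1,\dots,g_n}}$, or $\desext{\set{g_1,\dots,g_n}}\in\setdes_{\mathcal{F}}$ and therefore meets $B$ in some $f_{\seq{g_1,\dots,g_n}}\in B\cap\desext{\set{g_1,\dots,g_n}}$, so \cref{thm:coherentclosure} yields $B\in\K$.

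\emph{Main obstacle.} The substantive content is concentrated in the non-emptiness of $\setdes_{\mathcal{F}}$ and in $\K_{\setsetdes}\subseteq\K$; both are exactly applications of \cref{thm:coherentclosure}, the first exploiting that ``$0\in\desext{\cdot}$'' is the always-available disjunct there. A point to watch is that \cref{thm:coherentclosure} is stated for finitely many but at least one $A_i$, which is why $\setsetdes$ must be built only from non-empty finite subfamilies of $\K$; with that constraint in place the remaining verifications are routine.
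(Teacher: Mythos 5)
Your proposal is correct and follows essentially the same route as the paper: the same axiom-by-axiom verification for sufficiency (with downward closedness used only for \ref{ax:sets:add}), and, for necessity, the same construction of $\setsetdes$ from finite subfamilies of $\K$, with \cref{thm:coherentclosure} supplying both the non-emptiness of each $\setdes_{\mathcal{F}}$ and the inclusion $\K_{\setsetdes}\subseteq\K$. Your $\setdes_{\mathcal{F}}$ (coherent $\des$ meeting every $A_i$) is just an equivalent reformulation of the paper's $\setdes_{\set{A_1,\ldots,A_n}}$ (coherent $\des$ containing $\desext{\set{g_1,\ldots,g_n}}$ for some sequence with $0\notin\desext{\set{g_1,\ldots,g_n}}$), so the two arguments coincide.
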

\begin{proof}
Suppose $\K$ is coherent. 

Let $\setsetdes_\K:=\set{\setdes_{\set{A_1,\ldots,A_n}}\given A_1,\ldots,A_n\text{ are finitely many members of $\K$}}$ where
\begin{equation}\label{eq:setdesA1An}
\setdes_{\set{A_1,\ldots,A_n}}:=\set*{D\text{ coherent}\given  \begin{split}
&\text{There is $\seq{g_1,\ldots,g_n}\in A_1\times\ldots\times A_n$ with }\\
&0\notin\desext{\set{g_1,\ldots,g_n}}\text{ and } \\
&D\supseteq \desext{\set{g_1,\ldots,g_n}}
\end{split}}
\end{equation}

\begin{sublemma}
$\setsetdes_\K$ is a non-empty set of $\setdes$s, which are themselves non-empty sets of coherent $\des$s, and $\setsetdes_\K$ is downwards closed
\end{sublemma}
\begin{proof}
$\setsetdes_\K$ is non-empty because $\K$ is non-empty and for any $A\in\K$, $\setdes_{\set{A}}\in\setsetdes_\K$. 

Each $\setdes_{\set{A_1,\ldots,A_n}}$ is a non-empty set of coherent $\des$s: It is a set of coherent $\des$s by definition; we just need to show that it is non-empty. 
If there is some $\seq{g_i}\in\bigtimes A_i$ with $0\notin\desext{\set{g_1,\dots, g_n}}$, then $\desext{\set{g_1,\ldots, g_n}}$ is coherent by \cref{thm:des nat extn}. So then $\setdes_{\set{A_1,\ldots,A_n}}$ is non-empty. Otherwise, every $\seq{g_i}\in\bigtimes A_i$ has $0\in\desext{\set{g_1,\ldots,g_n}}$; but in this case, $\emptyset\in\K$ by \cref{thm:coherentclosure}, contradicting the coherence of $\K$.

It remains to show that $\setsetdes_\K$ is downwards closed. We will show that \begin{equation}
\setdes_{A_1^1,\ldots,A_{n_1}^1}\cap\setdes_{A_1^2,\ldots,A_{n_2}^2}\supseteq\setdes_{A_1^1,\ldots,A_{n_1}^1,A_1^2,\ldots,A_{n_2}^2}
\end{equation}

Suppose $\des\in\setdes_{A_1^1,\ldots,A_{n_1}^1,A_1^2,\ldots,A_{n_2}^2}$. So there is $\seq{g_1^1,\ldots, g_{n_1}^1, g_1^2,\ldots, g_{n_2}^2}\in A_1^1\times\ldots\times A_{n_1}^1\times A_1^2\times\ldots\times A_{n_2}^2$ with $0\notin\desext{\set{g_1^1,\ldots, g_{n_1}^1, g_1^2,\ldots, g_{n_2}^2}}$ and $ D\supseteq \desext{\set{g_1^1,\ldots, g_{n_1}^1, g_1^2,\ldots, g_{n_2}^2}}$. 

Since $\desext{\set{g_1^1,\ldots, g_{n_1}^1, g_1^2,\ldots, g_{n_2}^2}}\supseteq\desext{\set{g_1^1,\ldots, g_{n_1}^1}}$, 
also $\des\supseteq \desext{\set{g_1^1,\ldots, g_{n_1}^1}}$ and $0\notin \desext{\set{g_1^1,\ldots, g_{n_1}^1}}$. So $\des\in\setdes_{A_1^1,\ldots,A_{n_1}^1}$. Similarly $\des\in\setdes_{A_1^2,\ldots,A_{n_2}^2}$. Thus $\des\in \setdes_{A_1^1,\ldots,A_{n_1}^1}\cap\setdes_{A_1^2,\ldots,A_{n_2}^2}$, as required. 
\end{proof}

\begin{sublemma}
\begin{align}
B\in \K&\text{ iff there is some $\setdes\in\setsetdes$ s.t.~for all $\des\in\setdes$ there is some $g\in B\cap\des$}\label{eq:Ksetsetdes}
\end{align}
\end{sublemma}
\begin{proof}
\begin{align}
&B\in\K_{\setsetdes_\K}\\&\text{ iff there is some $\setdes\in\setsetdes_\K$ s.t.~for all $\des\in\setdes$, $B\cap\des\neq\emptyset$}\\
&\text{ iff there is some $A_1,\ldots,A_n\in\K$ s.t.~$\setdes\supseteq\setdes_{\set{A_1,\ldots,A_n}}$, and for all $\des\in\setdes$, $B\cap\des\neq\emptyset$}\\
&\text{ iff there is some $A_1,\ldots,A_n\in\K$ s.t.~for all $\des\in\setdes_{\set{A_1,\ldots,A_n}}$, $B\cap\des\neq\emptyset$}\\
&\text{ iff }\begin{aligned}&
\text{there is some $A_1,\ldots,A_n\in\K$ s.t.~for all $\seq{g_i}\in\bigtimes A_i$, }\\
&\quad\text{ $B\cap\posi(\set{g_1,\ldots,g_n}\cup \posgambles)\neq\emptyset$ or $0\in\posi(\set{g_1,\ldots,g_n}\cup \posgambles)$}
\end{aligned}\label{eq:rep:1}
\end{align}
By \cref{thm:coherentclosure}, we know that any $B$ satisfying \cref{eq:rep:1} is in $\K$. It is also easy to see that any $B\in\K$ satisfies \cref{eq:rep:1} (by $n=1$, $A_1=B$). So we have that $\K_{\setsetdes_\K}=\K$.
\end{proof}
\begin{sublemma}
When $\setsetdes$ is a non-empty set of $\setdes$s, which are themselves non-empty sets of coherent $\des$s, and $\setsetdes$ is downwards closed, then $\K_\setsetdes$ is coherent, where $B\in \K_\setsetdes$ iff there is some $\setdes\in\setsetdes$ s.t.~for all $\des\in\setdes$, $B\cap\des\neq\emptyset$.
\end{sublemma} 
\begin{proof}
We check each of the axioms. 
It is only axiom \ref{ax:sets:add} which requires the structure of $\setsetdes$, the rest simply hold because they hold of each $\K_\setdes$. 

Axiom \ref{ax:sets:nontrivial}: 
Every $\setdes\in\setsetdes$ is non-empty by assumption, so has some $\des\in\setdes$. And $\emptyset\cap\des=\emptyset$, so for every $\setdes\in\setsetdes$, there's some $\des\in\setdes$ with $\emptyset\cap \des=\emptyset$; so $\emptyset\notin\K_\setsetdes$.

Axiom \ref{ax:sets:0}: Suppose $A\in\K_\setsetdes$. Then there is some $\setdes$ with $A\in\K_\setdes$. And thus $A\in \K_\des$ for each $\des\in\setdes$.
That is, there is some $g\in A\cap \des$. Since $0\notin\des$, also $(A\setminus\set{0})\cap \des\neq \emptyset$. So $(A\setminus\set{0})\in\K_\setdes$. Thus $(A\setminus\set{0})\in\K_\setsetdes$.

Axiom \ref{ax:sets:pos}: Suppose $g\in\posgambles$. Then $g\in\des$ for each coherent $\des$. Consider any $\setdes\in\setsetdes$. Then $g\in\des$ for each $\des\in\setdes$. Thus $g\in\K_\setdes$; thus $g\in\K_\setsetdes$.

Axiom \ref{ax:sets:supersets}: Suppose $A\in\K_\setsetdes$ and $B\supseteq A$. Then there is some $\setdes$ with $A\in\K_\setdes$. And thus $A\cap\des\neq\emptyset$ for each $\des\in\setdes$. So also $B\cap \des\neq\emptyset$ for each $\des\in\setdes$. So $B\in\K_\setdes$. So $B\in\K_\setsetdes$.

Axiom \ref{ax:sets:dominators}: Suppose $A\in\K_\setsetdes$ and we have $f_g\geq g$ for each $g\in A$. There is some $\setdes$ with $A\in\K_\setdes$. And thus $A\in \K_\des$ for each $\des\in\setdes$. That is, there is some $g\in A\cap \des$. But therefore $f_{g}\in \des$ by properties of coherent $\des$, thus $B\in\K_\des$ for each $\des\in\setdes$. Thus $B\in\K_\setdes$, and so $B\in\K_\setsetdes$.
 
For axiom \ref{ax:sets:add}: 
Let $A_1,\ldots,A_n\in\K_\setsetdes$ and $B=\set{f_{\seq{g_i}}\given \seq{g_i}\in \bigtimes A_i}$ where $f_{\seq{g_i}}\in\posi({\set{g_i}})$. 
Since $A_i\in\K_\setsetdes$, we have some $\setdes_i\in\setsetdes$ with $A_i\in\K_{\setdes_i}$. By downwards closure, there is some $\setdes^*\in\setsetdes$ with $\setdes^*\subseteq\setdes_1\cap\ldots\cap\setdes_n$. We will show that $B\in\K_{\setdes^*}$. For any $\des\in\setdes^*$, also $\des\in\setdes_i$ for each $i$, and thus there is some $g_i\in\des\cap A_i$. Since $\des$ is coherent containing $g_1,\ldots,g_n$ and $f_{\seq{g_i}}\in\posi\set{g_1,\ldots,g_n}$, also $f_{\seq{g_i}}\in\des$. So $\des\cap B\neq\emptyset$. Since this holds for all $\des\in\setdes^*$, $B\in\K_{\setdes^*}$. So $B\in\K_\setsetdes$, as required. 
\end{proof}
This proves our result. 
\end{proof}

It may be more intuitive to think about $\setsetdes$ not simply as a downwards closed set of $\setdes$s, but as a filter of $\setdes$s. That is, we might also close it under supersets. This won't make a difference to the representation, because adding supersets doesn't add any new members to $\K_\setsetdes$.

We can then think of $\setsetdes$ as containing all one's judgements about what the ``correct'' $\des$ is like. This would allow us to give a different formulation. 
\begin{align}
B\in \K_\setsetdes&\text{ iff there is some $\setdes\in\setsetdes$ s.t.~for all $\des\in\setdes$ there is some $g\in B\cap\des$}\\
&\text{ iff there is some $\setdes\in\setsetdes$ s.t.~$\setdes\subseteq\set{\des\given \text{there is some $g\in B\cap \des$}}$}\\
&\text{ iff }\set{\des\given \text{there is some $g\in B\cap \des$}}\in\setsetdes\label{eq:Ksetsetdes2}
\end{align}

This is the version of our representation results which I personally find most attractive. It is related to work in \citet{campbellmoore2021isipta}.

\begin{theorem}
\label{thm:setsetdes rep filter}
$\K$ is coherent iff there is some $\setsetdes$ a proper filter of coherent $\des$s,
that is: $\setsetdes\subseteq\powerset{\set{D\subseteq\G\given D\text{ is coherent}}}$ where:
\begin{itemize}[noitemsep]
\item $\setsetdes\neq\emptyset$, $\emptyset\notin\setsetdes$,
\item If $\setdes_1,\setdes_2\in\setsetdes$ then $\setdes_1\cap\setdes_2\in\setsetdes$.
\item If $\setdes_1\in\setsetdes$ and $\setdes_2\supseteq\setdes_1$ then $\setdes_2\in\setsetdes$.
\end{itemize}
 with $\K=\K_\setsetdes$, i.e.,:
 \begin{align}
B\in \K&\text{ iff there is some $\setdes\in\setsetdes$ s.t.~for all $\des\in\setdes$ there is some $g_\des\in B\cap\des$}\label{eq:Ksetsetdes_filter}
\end{align} 
or equivalently:
\begin{align}
B\in \K&\text{ iff }\set{\des\given \text{there is some $g_\des\in B\cap \des$}}\in\setsetdes\label{eq:Ksetsetdes2}
\end{align} 
\end{theorem}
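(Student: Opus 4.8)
The plan is to derive this from \cref{thm:setsetdes rep}, using the observation that a proper filter of coherent $\des$s is exactly a downwards-closed family of sets of coherent $\des$s that has in addition been closed under supersets, and that this superset closure is harmless for the construction $\K_\setsetdes$. Throughout, write the ``superset closure'' of a family $\setsetdes$ as $\setsetdes^{\uparrow}:=\set{\setdes\subseteq\set{D\subseteq\G\given D\text{ coherent}}\given\setdes\supseteq\setdes'\text{ for some }\setdes'\in\setsetdes}$.

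For the ``if'' direction, suppose $\setsetdes$ is a proper filter of coherent $\des$s with $\K=\K_\setsetdes$. Then $\setsetdes$ satisfies the hypotheses of the final sublemma in the proof of \cref{thm:setsetdes rep}: it is non-empty by assumption; every $\setdes\in\setsetdes$ is a set of coherent $\des$s since $\setsetdes\subseteq\powerset{\set{D\subseteq\G\given D\text{ coherent}}}$, and is non-empty because $\emptyset\notin\setsetdes$; and it is downwards closed, since for $\setdes_1,\setdes_2\in\setsetdes$ the element $\setdes_1\cap\setdes_2$ lies in $\setsetdes$ and is contained in $\setdes_1\cap\setdes_2$. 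Hence $\K=\K_\setsetdes$ is coherent.

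For the ``only if'' direction, suppose $\K$ is coherent and let $\setsetdes_\K$ be the downwards-closed family built in the proof of \cref{thm:setsetdes rep}, for which $\K=\K_{\setsetdes_\K}$ and every member $\setdes_{\set{A_1,\ldots,A_n}}$ is a non-empty set of coherent $\des$s. I would take $\setsetdes:=\setsetdes_\K^{\uparrow}$ and check the filter conditions: it is non-empty because $\setsetdes_\K$ is; $\emptyset\notin\setsetdes$ because the members of $\setsetdes_\K$ are non-empty (established in that proof via \cref{thm:coherentclosure}), so no superset of one is empty; closure under supersets is immediate from the definition of $\setsetdes_\K^{\uparrow}$; and closure under pairwise intersection follows from the containment $\setdes_{\set{A_1,\ldots,A_m}}\cap\setdes_{\set{B_1,\ldots,B_n}}\supseteq\setdes_{\set{A_1,\ldots,A_m,B_1,\ldots,B_n}}$ proved there, since if $\setdes_1\supseteq\setdes_{\set{A_1,\ldots,A_m}}$ and $\setdes_2\supseteq\setdes_{\set{B_1,\ldots,B_n}}$ then $\setdes_1\cap\setdes_2\supseteq\setdes_{\set{A_1,\ldots,A_m,B_1,\ldots,B_n}}\in\setsetdes_\K$. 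Finally $\K_\setsetdes=\K$: enlarging a $\setdes$ can only shrink $\K_\setdes=\bigcap_{\des\in\setdes}\K_\des$, so every $\K_\setdes$ with $\setdes\in\setsetdes_\K^{\uparrow}$ is contained in some $\K_{\setdes'}$ with $\setdes'\in\setsetdes_\K$, giving $\K_{\setsetdes_\K^{\uparrow}}\subseteq\K_{\setsetdes_\K}=\K$; the reverse inclusion holds because $\setsetdes_\K\subseteq\setsetdes_\K^{\uparrow}$.

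The second, equivalent formulation in the theorem then follows from the chain of biconditionals displayed just before it: ``for all $\des\in\setdes$ there is some $g_\des\in B\cap\des$'' says precisely that $\setdes\subseteq\set{\des\text{ coherent}\given B\cap\des\neq\emptyset}$, and, by closure of $\setsetdes$ under supersets, the existence of such a $\setdes\in\setsetdes$ is equivalent to $\set{\des\text{ coherent}\given B\cap\des\neq\emptyset}$ itself being a member of $\setsetdes$. I do not expect a genuine obstacle here, since the statement is essentially a repackaging of \cref{thm:setsetdes rep}; the only points needing care are that the superset closure is performed inside the family of coherent $\des$s, so that the members produced are legitimate elements of a filter on that base set, and the routine verification that this closure leaves $\K_\setsetdes$ unchanged.
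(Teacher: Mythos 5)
Your proposal is correct and follows essentially the same route as the paper: apply \cref{thm:setsetdes rep}, close the downwards-closed family $\setsetdes_\K$ under supersets to obtain a proper filter, observe that this closure leaves $\K_\setsetdes$ unchanged, and note conversely that any proper filter of coherent $\des$s already satisfies the hypotheses of \cref{thm:setsetdes rep}. Your verification of the filter axioms and of the equivalence of \cref{eq:Ksetsetdes_filter,eq:Ksetsetdes2} is just a more explicit version of the paper's argument.
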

\begin{proof}
When $\setsetdes$ is a non-empty set of $\setdes$s, which are themselves non-empty sets of coherent $\des$s, and $\setsetdes$ is downwards closed, then by closing it under supersets gets a proper filter of coherent $\des$s. Also observe that adding supersets does not add any new gamble sets satisfying \cref{eq:Ksetsetdes_filter}. 
So the representation of coherent $\K$ by such a $\setsetdes$ from \cref{thm:setsetdes rep} can be closed under supersets to get $\K$ represented by a proper filter of coherent $\des$s.

The right-to-left holds because a filter of coherent $\des$s is itself a is a non-empty set of $\setdes$s, which are themselves non-empty sets of coherent $\des$s, and $\setsetdes$ is downwards closed.

For the equivalence between \cref{eq:Ksetsetdes_filter,eq:Ksetsetdes2}, note that if there is some such $\setdes\in\setsetdes$ then $\set{\des\given \text{there is some $g_\des\in B\cap \des$}}$ is a superset of it, so is also in $\setsetdes$.
\end{proof}

Note that one thing that this result does not show is an equivalence between $\setsetdes$ and $\K$s. Multiple $\setsetdes$ will correspond to the same $\K$.\footnote{See also \citet[section 5]{campbellmoore2021isipta} for the related observation in that more restrictive setting.} See \citet{decooman2023filterdesirable}.

\begin{suggestion}
\begin{proposition}
The minimal filter $\setsetdes$ corresponding to coherent $\K$ is  $\setsetdes_\K$ where $\setdes\in\setsetdes_\K$ iff there are finitely many members $A_1,\ldots, A_n\in\K$ with $\setdes\supseteq\setdes_{\set{A_1,\ldots,A_n}}$ (in the sense of \cref{eq:setdesA1An}).
\end{proposition}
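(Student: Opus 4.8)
The plan is to prove three things: that $\setsetdes_\K$ is a proper filter of coherent $\des$s, that $\K_{\setsetdes_\K}=\K$, and that $\setsetdes_\K\subseteq\setsetdes'$ for every proper filter $\setsetdes'$ of coherent $\des$s with $\K_{\setsetdes'}=\K$. The first two are essentially bookkeeping on top of \cref{thm:setsetdes rep,thm:setsetdes rep filter}, once one notices that the $\setsetdes_\K$ of this proposition is exactly the closure under supersets of the downwards-closed family $\set{\setdes_{\set{A_1,\ldots,A_n}}\given A_1,\ldots,A_n\text{ finitely many members of }\K}$ used to prove \cref{thm:setsetdes rep}. That family is non-empty, each of its members $\setdes_{\set{A_1,\ldots,A_n}}$ is a non-empty set of coherent $\des$s, and it satisfies $\setdes_{\set{A_1,\ldots,A_n}}\cap\setdes_{\set{C_1,\ldots,C_m}}\supseteq\setdes_{\set{A_1,\ldots,A_n,C_1,\ldots,C_m}}$ --- all established inside the proof of \cref{thm:setsetdes rep}. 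Hence its upward closure $\setsetdes_\K$ is non-empty, omits $\emptyset$ (any member contains some non-empty $\setdes_{\set{A_1,\ldots,A_n}}$), is upward closed by construction, and is closed under binary intersection, i.e.\ it is a proper filter. And since, by the proof of \cref{thm:setsetdes rep filter}, enlarging a representing family by supersets does not change the associated $\K$, we get $\K_{\setsetdes_\K}=\K$ from \cref{thm:setsetdes rep}.

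The real work is minimality. Let $\setsetdes'$ be a proper filter of coherent $\des$s with $\K_{\setsetdes'}=\K$; we must show $\setsetdes_\K\subseteq\setsetdes'$. Every member of $\setsetdes_\K$ is a superset of some $\setdes_{\set{A_1,\ldots,A_n}}$ with $A_i\in\K$, and $\setsetdes'$ is upward closed, so it suffices to show $\setdes_{\set{A_1,\ldots,A_n}}\in\setsetdes'$ for all such tuples; and since $\setdes_{\set{A_1,\ldots,A_n}}\supseteq\setdes_{\set{A_1}}\cap\cdots\cap\setdes_{\set{A_n}}$, the filter properties of $\setsetdes'$ reduce this further to showing $\setdes_{\set{A}}\in\setsetdes'$ for each single $A\in\K$. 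The inclusion $\setdes_{\set{A_1,\ldots,A_n}}\supseteq\setdes_{\set{A_1}}\cap\cdots\cap\setdes_{\set{A_n}}$ holds because a coherent $\des$ in the right-hand intersection contains, for each $i$, some $g_i\in A_i$ (as $g_i\in\desext{\set{g_i}}\subseteq\des$) together with all of $\posgambles$, hence by coherence contains $\posi(\set{g_1,\ldots,g_n}\cup\posgambles)=\desext{\set{g_1,\ldots,g_n}}$; and $0\notin\des$ forces $0\notin\desext{\set{g_1,\ldots,g_n}}$, so $\des\in\setdes_{\set{A_1,\ldots,A_n}}$ with witnessing tuple $\seq{g_1,\ldots,g_n}$.

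It remains to show $\setdes_{\set{A}}\in\setsetdes'$ for $A\in\K$. Since $A\in\K=\K_{\setsetdes'}$, there is some $\setdes^*\in\setsetdes'$ with $A\cap\des\neq\emptyset$ for every $\des\in\setdes^*$. Then $\setdes^*\subseteq\setdes_{\set{A}}$: given $\des\in\setdes^*$, pick $g\in A\cap\des$; $\des$ is coherent with $g\in\des$ and $\posgambles\subseteq\des$, so $\desext{\set{g}}=\posi(\set{g}\cup\posgambles)\subseteq\des$, and $0\notin\des$ gives $0\notin\desext{\set{g}}$, i.e.\ $\des\in\setdes_{\set{A}}$ witnessed by $g$. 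Hence $\setdes_{\set{A}}\supseteq\setdes^*\in\setsetdes'$, and upward closure of $\setsetdes'$ gives $\setdes_{\set{A}}\in\setsetdes'$; chaining back through the reductions above yields $\setsetdes_\K\subseteq\setsetdes'$.

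I expect the only delicate point to be the reduction to singletons in the second paragraph, namely $\setdes_{\set{A_1,\ldots,A_n}}\supseteq\bigcap_i\setdes_{\set{A_i}}$; this rests on the elementary but essential fact that a coherent set of desirable gambles containing finitely many gambles together with $\posgambles$ is automatically closed under forming their positive linear hull, so that witnesses for the individual $\setdes_{\set{A_i}}$ assemble into a single witnessing tuple in $A_1\times\cdots\times A_n$. Everything else is routine filter manipulation or a direct appeal to \cref{thm:setsetdes rep,thm:setsetdes rep filter}.
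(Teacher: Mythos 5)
Your proof is correct. There is nothing in the paper to compare it against in detail: the paper's ``proof'' of this proposition is only the remark that it should follow very quickly, so your write-up supplies exactly the missing content, and it does so along the lines the surrounding development intends --- reusing the sublemmas inside the proof of \cref{thm:setsetdes rep} (non-emptiness of each $\setdes_{\set{A_1,\ldots,A_n}}$ and the inclusion $\setdes_{\set{A_1,\ldots,A_n,C_1,\ldots,C_m}}\subseteq\setdes_{\set{A_1,\ldots,A_n}}\cap\setdes_{\set{C_1,\ldots,C_m}}$) together with the observation from \cref{thm:setsetdes rep filter} that closing under supersets changes neither filterhood nor the induced $\K$. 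The genuinely new content is the minimality argument, and both of your reductions are sound: $\setdes_{\set{A_1,\ldots,A_n}}\supseteq\bigcap_i\setdes_{\set{A_i}}$ and $\setdes_{\set{A}}\supseteq\setdes^*$ for a witness $\setdes^*\in\setsetdes'$ of $A\in\K_{\setsetdes'}$ both rest, as you say, on each $\des$ being coherent (so it contains $\posgambles$, is closed under $\posi$, and omits $0$), after which upward closure and closure under finite intersections of $\setsetdes'$ finish the job. A marginally more direct route for minimality is to pick, for each $A_i$, a witness $\setdes_i^*\in\setsetdes'$ and use $\bigcap_i\setdes_i^*\subseteq\bigcap_i\setdes_{\set{A_i}}\subseteq\setdes_{\set{A_1,\ldots,A_n}}$, avoiding the explicit singleton reduction, but this is the same idea. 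One pedantic point worth making explicit: supersets in the definition of $\setsetdes_\K$ should be taken within $\powerset{\set{D\subseteq\G\given D\text{ is coherent}}}$, so that $\setsetdes_\K$ is a proper filter of coherent $\des$s in exactly the sense of \cref{thm:setsetdes rep filter}.
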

\begin{proof}
Should follow very quickly. 
\end{proof}

\begin{proposition}
[Gert and Arthur]
There is an isomorphism between coherent $\K$s and $\setsetdes$ that satisfy:
\begin{itemize}
\item $\setsetdes$ is a non-empty set of non-empty sets of coherent $\des$ closed under finite intersection and superset. 
\item  $\setdes\in\setsetdes$ iff $\uparrow\setdes\in\setsetdes$, where $\uparrow\setdes=\set{\des'\given\exists\des\in\setdes,\;\des'\subseteq \des}.$
\end{itemize}
\end{proposition}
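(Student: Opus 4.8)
The plan is to obtain the isomorphism as (a restriction of) the assignment $\setsetdes\mapsto\K_\setsetdes$, with inverse sending a coherent $\K$ to a canonical $\uparrow$-invariant filter $\Phi(\K)$. By \cref{thm:setsetdes rep filter} and its proof, $\setsetdes\mapsto\K_\setsetdes$ already carries proper filters of coherent $\des$s to coherent $\K$s; the $\uparrow$-invariance clause is what will make this map injective, by singling out one filter in each fibre, so the real content is to (a) produce a proper $\uparrow$-invariant filter representing an arbitrary coherent $\K$, and (b) prove that two such filters representing the same $\K$ coincide.

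Write $\phi(B):=\set{D\text{ coherent}\given B\cap D\neq\emptyset}$, so that $B\in\K_\setsetdes$ iff $\setdes\subseteq\phi(B)$ for some $\setdes\in\setsetdes$. Two easy observations organize everything: (i) if $\setsetdes$ is any superset-closed family with $\K_\setsetdes=\K$ and $B\in\K$, then $\phi(B)\in\setsetdes$ (some member of $\setsetdes$ lies below $\phi(B)$, and $\setsetdes$ is closed under supersets); and (ii) the set of all coherent $\des$ lies in every such $\setsetdes$. Granting part (a), it follows that the only possible inverse is $\Phi(\K):=\bigcap\set{\setsetdes\given\setsetdes\text{ a proper }\uparrow\text{-invariant filter with }\K_\setsetdes=\K}$: this is again a proper $\uparrow$-invariant filter (each of the three properties is preserved under intersection, using (ii) for non-emptiness), and $\K_{\Phi(\K)}=\K$ since $\phi(B)\in\Phi(\K)$ for every $B\in\K$ while $\Phi(\K)$ is contained in each representing filter. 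Injectivity then reduces to $\setsetdes=\Phi(\K)$ for any proper $\uparrow$-invariant $\setsetdes$ with $\K_\setsetdes=\K$; one inclusion is immediate, and the positive tool for the other is the lemma that $\setdes_{\set{A_1,\ldots,A_n}}$ of \cref{eq:setdesA1An} belongs to every such $\setsetdes$, for all finitely many $A_i\in\K$ — proved as in \cref{thm:setsetdes rep}: pick $\setdes_i\in\setsetdes$ with $\setdes_i\subseteq\phi(A_i)$, form $\bigcap_i\setdes_i\in\setsetdes$, observe any coherent $D\in\bigcap_i\setdes_i$ meets each $A_i$, so contains some $g_i\in A_i$ and hence $\desext{\set{g_1,\ldots,g_n}}$, whence $\bigcap_i\setdes_i\subseteq\setdes_{\set{A_1,\ldots,A_n}}$, and close under supersets; together with $\uparrow$-invariance this forces $\setsetdes$ to contain every $\setdes$ whose down-closure contains some $\uparrow\setdes_{\set{A_1,\ldots,A_n}}$.

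The two steps I expect to be genuinely hard are existence and the matching upper bound. For (a), one would take a proper filter representing $\K$ from \cref{thm:setsetdes rep filter} and $\uparrow$-saturate it; the delicate point is that the ``pull-down'' built into $\uparrow$-invariance sits awkwardly with closure under finite intersection and with keeping the represented $\K$ unchanged, and I would approach it by first isolating the down-closed members of a candidate $\setsetdes$ (which \emph{do} form a filter of down-closed sets of coherent $\des$s) and recovering $\setsetdes$ from those — and, before anything else, by testing the statement on small cases such as $\K_D=\set{B\given B\cap D\neq\emptyset}$, to be sure no further hypothesis is needed. For the upper bound, finishing (b) needs a converse to the lemma above: that if $\setdes\in\setsetdes$ then, using the selection sets $\set{g_D\given D\in\setdes}$ (each in $\K=\K_\setsetdes$) and \cref{thm:coherentclosure}, one can manufacture finitely many $A_i\in\K$ with $\uparrow\setdes=\uparrow\setdes_{\set{A_1,\ldots,A_n}}$; the obstacle is that a single selection set need not cut $\setdes$ down tightly enough, so several must be combined and run through $\uparrow$-invariance — this is the crux. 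Once both are settled, $\setsetdes\mapsto\K_\setsetdes$ is order-preserving by inspection, and its inverse is order-preserving by the explicit description of $\Phi(\K)$ through the $\setdes_{\set{A_1,\ldots,A_n}}$ (a larger $\K$ supplies more $A_i$), so the bijection is an order-isomorphism for the inclusion orders.
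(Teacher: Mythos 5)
The paper itself offers no proof of this proposition: it is an unproven aside (attributed to Gert and Arthur, with a reworked ``version'' stated immediately after), so your attempt has to stand on its own, and as it stands it is a plan rather than a proof. The parts you actually carry out --- that $\phi(B):=\set{D\text{ coherent}\given B\cap D\neq\emptyset}$ lies in every superset-closed representing $\setsetdes$, that $\setdes_{\set{A_1,\ldots,A_n}}$ of \cref{eq:setdesA1An} lies in every representing filter, and that properness, the filter axioms and $\uparrow$-invariance survive intersections --- are the easy steps, essentially already contained in \cref{thm:setsetdes rep} and the remark on the minimal filter. The two steps you explicitly defer, (a) existence of an $\uparrow$-invariant proper filter representing an arbitrary coherent $\K$ and (b) the converse lemma giving uniqueness, are precisely the content of the proposition, so nothing of substance has yet been established.

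Worse, with $\uparrow$ as literally defined in the statement ($\uparrow\setdes$ adjoins all coherent \emph{subsets} of members), your step (a) is not merely delicate but false, so the route through $\setsetdes\mapsto\K_\setsetdes$ cannot be completed. Every coherent $\des$ extends, by Zorn's lemma, to a maximal coherent $\des$, and a maximal coherent $\des$ contains $f$ or $-f$ for every gamble $f\neq 0$ (if $-f\notin\des$ then $0\notin\desext{\des\cup\set{f}}$, so $\des$ would extend). Hence for $B=\set{f,-f}$ the set $\phi(B)$ contains all maximal coherent $\des$s, so $\uparrow\phi(B)$ is the set of \emph{all} coherent $\des$s, which belongs to any non-empty superset-closed $\setsetdes$; the right-to-left half of the invariance biconditional then forces $\phi(B)\in\setsetdes$, hence $B\in\K_\setsetdes$. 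So every $\K$ in the image of your map contains $\set{f,-f}$ for all $f\neq 0$, whereas the coherent $\K=\extension(\emptyset)=\set{B\given B\cap\posgambles\neq\emptyset}$ of \cref{thm:natext} does not (already for $\abs{\Omega}=2$); your own suggested sanity check on $\K_\des$ would have exposed this. The statement is only plausible if $\uparrow\setdes$ is instead the closure under coherent \emph{supersets}, for which $\K_\setdes=\K_{\uparrow\setdes}$ because $\phi(B)$ is superset-closed, so that the invariance clause is a genuine canonicalisation; under that reading your outline becomes viable, but the crux you leave open --- showing that every member of an invariant representing filter is pinned down, up to $\uparrow$, by finitely many $A_i\in\K$ via \cref{eq:setdesA1An} --- is still entirely missing, and it, together with the corrected existence argument, is where the real work lies.
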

A version of Arthur and Gert's result:
\begin{proposition}
There is an isomorphism between coherent $\K$s and $\setsetdes$ that satisfy:\begin{itemize}
\item $\setsetdes$ is a non-empty set of non-empty sets of coherent $\des$ closed under finite intersection and superset. 
\item If $\setdes\in\setsetdes$ then there are sets of gambles $A_1,\ldots,A_n$ such that $\setdes\supseteq\setdes_{\set{A_1,\ldots,A_n}}$; i.e., where for every coherent $\des$ such that there are $\seq{g_1,\ldots,g_n}\in A_1\times\ldots\times A_n$ with $0\notin \posi(\set{g_1,\ldots,g_n}\cup\posgambles)$ and $\des\supseteq\posi(\set{g_1,\ldots,g_n}\cup\posgambles)$ is itself $\in\setdes$.
\item If $\setdes\in\setsetdes$ and $\setdes'$ is s.t.~$\uparrow \setdes=\uparrow\setdes'$ then $\setdes'\in\setsetdes$. 
\end{itemize} 
\end{proposition}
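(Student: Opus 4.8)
The plan is to prove the statement by exhibiting two mutually inverse, $\subseteq$-monotone maps: $\Psi$ sends a $\setsetdes$ in the stated class to $\Psi(\setsetdes):=\K_\setsetdes$, and $\Phi$ sends a coherent $\K$ to the minimal member of the stated class whose induced $\K_\setsetdes$ equals $\K$ (a natural candidate being the minimal filter $\setsetdes_\K=\Set{\setdes\given\setdes\supseteq\setdes_{\set{A_1,\ldots,A_n}}\text{ for some }A_1,\ldots,A_n\in\K}$, notation as in \cref{eq:setdesA1An}, then closed under the third bullet; one must of course check this closure is well-defined and does not change the induced $\K$). Before anything else I would fix the precise working form of the $\uparrow$-conditions I use — in particular I would run the argument with condition~(2) in the $\uparrow$-stable version ``$\uparrow\setdes\supseteq\uparrow\setdes_{\set{A_1,\ldots,A_n}}$ for some $A_1,\ldots,A_n$'' and then check it is equivalent to the literal form once condition~(3) is imposed — and record two small facts: (a) for a single gamble set $A$, $\setdes_{\set{A}}=\Set{\des\text{ coherent}\given\des\cap A\neq\emptyset}$ (since $g\in\des$ iff $\desext{\set{g}}\subseteq\des$ for coherent $\des$), so by superset-closure of $\setsetdes$, $A\in\K_\setsetdes$ iff $\setdes_{\set{A}}\in\setsetdes$; and (b) $\setdes_{\set{A_1,\ldots,A_n}}\supseteq\bigcap_i\setdes_{\set{A_i}}$, because a coherent $\des$ meeting every $A_i$ in some nonzero $g_i$ contains $\desext{\set{g_1,\ldots,g_n}}$, which therefore avoids $0$.

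The routine half is then: $\K_\setsetdes$ is coherent whenever $\setsetdes$ satisfies the bullets — a finite-intersection-closed family is in particular downwards closed, so this is the last sublemma in the proof of \cref{thm:setsetdes rep}; $\Phi(\K)$ lies in the stated class — non-emptiness of it and of its members, and closure under finite intersection and under supersets, come from the first two sublemmas of the proof of \cref{thm:setsetdes rep} (using $\setdes_{\set{A_1,\ldots,A_n}}\cap\setdes_{\set{B_1,\ldots,B_m}}\supseteq\setdes_{\set{A_1,\ldots,A_n,B_1,\ldots,B_m}}$ and \cref{thm:coherentclosure} to forbid an empty member), condition~(2) holds by the construction of $\setsetdes_\K$, and condition~(3) holds since we closed under it; and $\Psi(\Phi(\K))=\K$ — this is the third sublemma in the proof of \cref{thm:setsetdes rep}, together with the observation that adding supersets (as already noted around \cref{thm:setsetdes rep filter}) and then closing under the $\uparrow$-condition does not enlarge the induced $\K$, since every set added at the $\uparrow$-step imposes on a gamble set only constraints that refine one already present.

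The remaining identity $\Phi(\Psi(\setsetdes))=\setsetdes$ for $\setsetdes$ in the stated class is the crux, and the step I expect to cost the most. The inclusion $\Phi(\K_\setsetdes)\subseteq\setsetdes$ should go cleanly: if $\setdes$ is $\uparrow$-linked to some $\setdes''$ with $\uparrow\setdes''\supseteq\uparrow\setdes_{\set{A_1,\ldots,A_n}}$ and $A_i\in\K_\setsetdes$, then by~(a) each $\setdes_{\set{A_i}}\in\setsetdes$, so $\bigcap_i\setdes_{\set{A_i}}\in\setsetdes$ by finite-intersection closure, hence $\setdes_{\set{A_1,\ldots,A_n}}\in\setsetdes$ by~(b) and superset closure, hence $\setdes''\in\setsetdes$ and then $\setdes\in\setsetdes$ by~(3). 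For the reverse inclusion one takes $\setdes\in\setsetdes$, extracts from condition~(2) gamble sets $A_1,\ldots,A_n$ with ($\uparrow$-stable) $\setdes\supseteq\setdes_{\set{A_1,\ldots,A_n}}$, and must produce finitely many members of $\K_\setsetdes$ witnessing $\setdes\in\Phi(\K_\setsetdes)$. This is where the real difficulty sits: the $A_i$ delivered by~(2) need not themselves lie in $\K_\setsetdes$, and a choice of the zero gamble from some $A_i$ can make $\setdes_{\set{A_1,\ldots,A_n}}$ strictly bigger than $\bigcap_i\setdes_{\set{A_i}}$, so the reduction to single sets fails naively. My intended fix is to pass to $A_i\setminus\set{0}$, invoke condition~(3) to land $\setdes_{\set{A_1\setminus\set{0},\ldots,A_n\setminus\set{0}}}$ inside $\setsetdes$ — the one place the $\uparrow$-saturation is genuinely needed — and then peel off the single-set pieces $\setdes_{\set{A_i\setminus\set{0}}}\in\setsetdes$, i.e.\ $A_i\setminus\set{0}\in\K_\setsetdes$, after, if necessary, enlarging each $A_i\setminus\set{0}$ so that equality holds in~(b). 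Getting that enlargement exactly right, reconciling conditions~(2) and~(3) in their literal forms, and dealing with the degenerate case where $\setdes_{\set{A_1,\ldots,A_n}}$ is the set of all coherent $\des$ (forcing $\setdes$ to be that whole set), are the fiddly heart of the matter; monotonicity of both maps, immediate from their definitions, then makes the bijection an order-isomorphism.
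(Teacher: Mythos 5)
There is no proof in the paper to compare against: this proposition sits in a tentative ``suggestion'' block and is stated without proof, so your attempt has to stand on its own. As it stands it does not: beyond the fact that you yourself leave the crux $\Phi(\Psi(\setsetdes))=\setsetdes$ unresolved, the step you wave through --- that closing the minimal filter $\setsetdes_\K$ under the third bullet ``does not change the induced $\K$'' --- is false, and the failure is fatal to the whole strategy. By superset closure, any admissible $\setsetdes$ contains the set of \emph{all} coherent $\des$s. Let $M$ be the set of all \emph{maximal} coherent $\des$s; since every coherent $\des$ extends to a maximal one (unions of chains of coherent sets are coherent, so Zorn applies), $\uparrow M$ equals the set of all coherent $\des$s, i.e.\ $M$ and the top element have the same $\uparrow$. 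The third bullet therefore forces $M\in\setsetdes$. But a maximal coherent $\des$ contains $g$ or $-g$ for every $g\neq 0$, so every pair $\set{g,-g}$ meets every member of $M$ and hence lies in $\K_\setsetdes$, for \emph{every} $\setsetdes$ in the stated class. Taking $|\Omega|\geq 2$ and $g$ with mixed signs, the vacuous coherent $\K=\extension(\emptyset)=\set{B\given B\cap\posgambles\neq\emptyset}$ omits $\set{g,-g}$, so it is not of the form $\K_\setsetdes$ for any admissible $\setsetdes$: under the natural reading (the isomorphism being implemented by $\setsetdes\mapsto\K_\setsetdes$), the proposition as literally stated is false, and in particular your $\Phi$ cannot be a right inverse of $\Psi$. (If instead one strengthens the second bullet so that $M$ is excluded as a member, the same argument makes the class empty.) So the statement needs repairing --- e.g.\ restricting or reformulating the $\uparrow$-closure condition, or turning the second bullet into an ``iff'' over $A_1,\ldots,A_n\in\K$ with $\setdes_{\set{A_1,\ldots,A_n}}\neq\emptyset$ --- before any proof along your lines can go through.

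Two smaller points. First, the second bullet as literally written is vacuous: the $A_i$ are arbitrary sets of gambles, so taking $A_1=\set{0}$ (or $A_1=\emptyset$) gives $\setdes_{\set{A_1}}=\emptyset\subseteq\setdes$ for every $\setdes$; any correct treatment has to fix the intended reading before using it, which your sketch does not do. Second, your worry about strict inclusion in your fact (b) is unfounded: $\setdes_{\set{A_1,\ldots,A_n}}=\bigcap_i\setdes_{\set{A_i}}$ always, because any witnessing tuple $\seq{g_1,\ldots,g_n}$ for a coherent $\des$ satisfies $g_i\in\desext{\set{g_1,\ldots,g_n}}\subseteq\des$, so tuples through $0$ (or any $g\leq 0$) never witness anything; the ``enlarging the $A_i$ so that equality holds'' step of your plan is therefore unnecessary, but removing it does not rescue the argument from the problem above.
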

\end{suggestion}

There is a further alternative we might do which removes requirements on $\setsetdes$: we can drop the requirement that $\setsetdes$ be downwards closed, and instead bake that into the characterisation of $\K_\setsetdes$. We could say: 
\begin{align}
B\in \K_\setsetdes&\text{ iff }\begin{split}
&\text{there are some $\setdes_1,\ldots,\setdes_n\in\setsetdes$}\\&\text{ s.t.~for all $\des\in\setdes_1\cap\ldots\cap\setdes_n$, there is some $g_\des\in B\cap\des$}
\end{split}
\end{align}
Now we get axiom \ref{ax:sets:add} for free, however axiom \ref{ax:sets:0} might now fail. We still need a ``coherence'' requirement on this. The relevant requirement is that $\setsetdes$ has the finite intersection property: that we never have any $\setdes_1\cap\ldots\cap\setdes_n=\emptyset$. It is then equivalent.

\begin{corollary}
\label{thm:setsetdes rep fip}
$\K$ is coherent iff there is some $\setsetdes$ a non-empty set of $\setdes$s, which are themselves non-empty sets of coherent $\des$s, where $\setsetdes$ has the finite intersection property (i.e., no $\setdes_1\cap\ldots\cap \setdes_n=\emptyset$), and \begin{align}
B\in \K&\text{ iff }\begin{split}
&\text{there are some $\setdes_1,\ldots,\setdes_n\in\setsetdes$}\\&\text{ s.t.~for all $\des\in\setdes_1\cap\ldots\cap\setdes_n$, there is some $g_\des\in B\cap\des$}
\end{split}
\end{align}
\end{corollary}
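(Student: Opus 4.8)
The plan is to derive this straight from \cref{thm:setsetdes rep filter} by moving between a collection $\setsetdes$ of sets of coherent $\des$s with the finite intersection property and the proper filter it generates.

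For the left-to-right direction I would start from a coherent $\K$, apply \cref{thm:setsetdes rep filter} to get a proper filter $\setsetdes$ of coherent $\des$s representing $\K$, and simply observe that $\setsetdes$ already meets all the requirements of the corollary: it is a non-empty set of non-empty sets of coherent $\des$s (non-emptiness of the members is just $\emptyset\notin\setsetdes$), and it has the finite intersection property because a finite intersection $\setdes_1\cap\dots\cap\setdes_n$ of members again lies in $\setsetdes$, hence cannot be $\emptyset$. It then remains to check that the characterisation of $\K$ in the corollary agrees with the one in \cref{thm:setsetdes rep filter}: the case $n=1$ gives one inclusion, and for the other I would use that $\setdes_1\cap\dots\cap\setdes_n\in\setsetdes$ to turn a finite family of witnesses into a single one.

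For right-to-left I would take $\setsetdes$ as in the corollary and let $\setsetdes'$ be the filter it generates inside the family of all sets of coherent $\des$s, i.e.\ $\setdes\in\setsetdes'$ exactly when $\setdes\supseteq\setdes_1\cap\dots\cap\setdes_n$ for some $\setdes_1,\dots,\setdes_n\in\setsetdes$. The steps are: (i) verify that $\setsetdes'$ is a proper filter of coherent $\des$s in the sense of \cref{thm:setsetdes rep filter} --- it contains every member of $\setsetdes$ so is non-empty, it is closed under finite intersection and supersets by routine manipulation, and it omits $\emptyset$ precisely because of the finite intersection property of $\setsetdes$; (ii) invoke \cref{thm:setsetdes rep filter} to conclude $\K_{\setsetdes'}$ is coherent; and (iii) check $\K=\K_{\setsetdes'}$, which is immediate from the definition of $\setsetdes'$: a witnessing $\setdes\in\setsetdes'$ contains some $\setdes_1\cap\dots\cap\setdes_n$ with $\setdes_i\in\setsetdes$, and conversely a witnessing finite family $\setdes_1,\dots,\setdes_n\in\setsetdes$ has $\setdes_1\cap\dots\cap\setdes_n$ as a bona fide member of $\setsetdes'$.

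I do not expect a real obstacle; the proof is pure bookkeeping. The only place a hypothesis is genuinely used is step (i), where the finite intersection property of $\setsetdes$ is exactly what is needed to keep $\emptyset$ out of $\setsetdes'$ --- it plays the role that ``proper'' plays in \cref{thm:setsetdes rep filter}. Everything else is transfer along $\setsetdes\mapsto\setsetdes'$ and its (non-injective) reverse.
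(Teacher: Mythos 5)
Your proposal is correct and takes essentially the same route as the paper: the paper closes the finite-intersection-property collection under finite intersections and invokes \cref{thm:setsetdes rep}, while you close it to the generated proper filter and invoke \cref{thm:setsetdes rep filter}, with the same bookkeeping (the finite intersection property is exactly what keeps $\emptyset$ out of the closure, and the two membership criteria coincide by taking $n=1$ in one direction and restricting the universal quantifier to the finite intersection in the other). You additionally spell out the left-to-right direction explicitly, which the paper leaves largely implicit.
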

\begin{proof}
If $\setsetdes$ is such a set which has the finite intersection property, consider $\setsetdes'$ which closes it under finite intersections, i.e., $\setsetdes':=\set{\setdes_1\cap\ldots \cap\setdes_n\given \text{$\setdes_1,\ldots,\setdes_n$ are finitely many members of $\setsetdes$}}$. By assumption that $\setsetdes$ has the finite intersection property, $\setsetdes'$ doesn't contain the emptyset. It immediately satisfies all the other assumptions of \cref{thm:setsetdes rep}. 

Also observe that there are some $\setdes_1,\ldots,\setdes_n\in\setsetdes$ s.t.~for all $\des\in\setdes_1\cap\ldots\cap\setdes_n$, $\emptyset\neq B\cap\des$ iff there is some $\setdes'\in\setsetdes'$ s.t.~for all $\des\in\setdes'$, $\emptyset\neq B\cap\des$.

So we see that for any such $\setsetdes$ with the finite intersection property, we can simply close it under finite intersections and use the representation of \cref{thm:setsetdes rep}.
\end{proof}

\begin{mythoughts}

\subsection{Representation result when $\K$ satisfies Infinite Addition}

\begin{suggestion}
Needs redoing given the change to definitions!
\end{suggestion}
\begin{theorem}\label{thm:setdes rep}
$\K$ is coherent satisfying axiom \ref{ax:sets:infadd} iff there is some non-empty set of coherent $\des$s, $\setdes$, where $\K=\K_{\setdes}$. 
\end{theorem}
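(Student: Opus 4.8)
The statement to prove is Theorem~\ref{thm:setdes rep}: a coherent $\K$ satisfies axiom~\ref{ax:sets:infadd} if and only if $\K = \K_\setdes$ for some non-empty set $\setdes$ of coherent sets of desirable gambles. One direction is already essentially done: \cref{thm:setdes sat infadd} shows that any $\K_\setdes$ (with $\setdes$ a non-empty set of coherent $\des$s) is coherent and satisfies~\ref{ax:sets:infadd}. So the real work is the forward direction: given a coherent $\K$ satisfying~\ref{ax:sets:infadd}, construct a witnessing $\setdes$.

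**Constructing $\setdes$.** The natural candidate is
\[
\setdes_\K := \Set{\des \text{ coherent} \given A \cap \des \neq \emptyset \text{ for all } A \in \K},
\]
i.e.\ the coherent sets of desirable gambles that ``agree with'' every gamble set in $\K$. With this definition one inclusion is immediate: if $B \in \K$ then $B \cap \des \neq \emptyset$ for every $\des \in \setdes_\K$ by construction, so $B \in \K_{\setdes_\K}$. First I would check $\setdes_\K \neq \emptyset$: since $\K$ satisfies~\ref{ax:sets:infadd}, given any enumeration one can use the infinite addition structure together with \cref{thm:coherentclosure}-style reasoning to find, for each $A \in \K$, a compatible choice of representative, and the $\posi$-closure of all such representatives together with $\posgambles$ is a coherent $\des$ hitting every $A$ — this is exactly where~\ref{ax:sets:infadd} (rather than mere~\ref{ax:sets:add}) is needed, because $\K$ may be infinite. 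The cleanest route: build $\des$ as $\desext{E}$ for a suitable $E$, using that~\ref{ax:sets:infadd} guarantees $0 \notin \desext{E}$ via the infinite analogue of \cref{thm:coherentclosure}.

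**The hard inclusion.** The substantive step is $\K_{\setdes_\K} \subseteq \K$: if $B \notin \K$, I must produce a coherent $\des \in \setdes_\K$ with $B \cap \des = \emptyset$, equivalently $\des \subseteq \G \setminus B$. Since $B \notin \K$, the infinite-addition natural extension (the version displayed at the end of \cref{sec:repinf}, $B \in \extension(\calA)$ iff there is a possibly-infinite family $A_i \in \K$ with a suitable system of representatives $f_{\seq{g_i}} \in B \cup \set{0}$ lying in $\desext{\Set{g_i}}$) tells us that \emph{no} such family works. I would feed this failure into a Zorn's-lemma / maximal-filter argument: consider the family of sets $\desext{\Set{g_i \given i \in I}}$ arising from choices of one $g_i$ from each $A_i \in \K$ avoiding $0 \in \desext{\cdot}$, and show one can select the $g_i$ so that the union's $\posi$-closure misses $B$ and avoids $0$; coherence of the resulting $\des$ follows from \cref{thm:des nat extn}, and $\des \in \setdes_\K$ because for each $A \in \K$ it contains the chosen $g_A$. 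The main obstacle is precisely this selection: showing that the failure of $B \in \extension(\K)$ in the infinite setting lets us coherently and simultaneously avoid $B$ across all of $\K$ — a compactness-type argument that genuinely uses~\ref{ax:sets:infadd} to license the infinite simultaneous choice, mirroring how \cref{thm:coherentclosure} handles the finite case. Once that choice is made, closing under $\posi$ and $\posgambles$, checking $0 \notin \desext{\cdot}$ (so the $\des$ is coherent, by \cref{thm:des nat extn}), and verifying $\des$ hits every member of $\K$ are all routine.
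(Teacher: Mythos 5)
Your overall architecture is the same as the paper's: the right-to-left direction via \cref{thm:setdes sat infadd} (plus coherence of $\K_\setdes$), and for the left-to-right direction a family of coherent $\des$s each of which meets every member of $\K$, with the hard inclusion $\K_{\setdes}\subseteq\K$ reduced to producing, for each $B\notin\K$, a coherent $\des$ that meets every $A\in\K$ but misses $B$. (Your choice of the maximal family $\setdes_\K$ of \emph{all} such coherent $\des$, versus the paper's explicit family of natural extensions $\desext{\set{g_A\given A\in\K}}$ of compatible selections, is immaterial: the easy inclusion is trivial either way, and the same witness settles the hard one.) The genuine gap is that you leave precisely this witness-production step as an acknowledged ``main obstacle'' and propose to obtain it by a Zorn's-lemma / maximal-filter or ``compactness-type'' argument. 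That is not the right mechanism, and it is doubtful it can be made to work: a compactness-style argument would amount to deriving the infinite simultaneous selection from finitary consequences of coherence, i.e.\ essentially from \ref{ax:sets:add}; but, as the paper notes, \ref{ax:sets:infadd} is strictly stronger than \ref{ax:sets:add}, so no argument using only the finite closure behaviour of $\K$ can deliver what is needed here.

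What the proof actually rests on is the infinite analogue of \cref{thm:coherentclosure}: if $A_i\in\K$ for $i\in I$ (possibly infinite) and for every selection $\seq{g_i}\in\bigtimes_i A_i$ either $0\in\desext{\set{g_i\given i\in I}}$ or $B\cap\desext{\set{g_i\given i\in I}}\neq\emptyset$, then $B\in\K$. This is proved by repeating the finite proof verbatim with \ref{ax:sets:infadd} in place of \ref{ax:sets:add}: the sublemma is unchanged because $\posi$ only ever takes finite positive combinations, and \ref{ax:sets:dominators} already applies to arbitrary gamble sets (indeed it follows from \ref{ax:sets:infadd}, \cref{thm:dom from infadd}). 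Its contrapositive, applied to the family of \emph{all} members of $\K$, hands you directly a selection $g_A\in A$ for each $A\in\K$ with $0\notin\desext{\set{g_A\given A\in\K}}$ and $B\cap\desext{\set{g_A\given A\in\K}}=\emptyset$; this $\des$ is coherent by \cref{thm:des nat extn}, contains each $g_A$, hence lies in your $\setdes_\K$ and witnesses $B\notin\K_{\setdes_\K}$. The same contrapositive with $B=\emptyset$ (not in $\K$ by \ref{ax:sets:nontrivial}) gives non-emptiness of $\setdes_\K$, which you also need. No Zorn's lemma or compactness enters anywhere. So: your skeleton coincides with the paper's, but the load-bearing step is missing, and the tool you reach for would not fill it; the fix is to state and prove the infinite-addition version of \cref{thm:coherentclosure} and then simply take its contrapositive.
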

This result is also \citet[Theorem 2]{debock2023desirablethings}
\begin{proof}
Observe that the singleton $\set{\setdes}$ satisfies property X and that $\K_{\setdes}=\K_{\set{\setdes}}$, so by \cref{thm:setsetdes rep,thm:setdes sat infadd}, $\K_{\setdes}$ is coherent satisfying axiom \ref{ax:sets:infadd}. 

The result then follows from:
\begin{sublemma}
Given any coherent $\K$ satisfying \ref{ax:sets:infadd}. For \begin{equation}
\setdes=\set*{\desext{\set{g_i\given i\in I}}\given \begin{split}
&\seq{g_i}\in\bigtimes_{i\in I} A_i \text{, where the index set $I$ may be infinite}\\
&0\notin\desext{\set{g_i\given i\in I}} 
\end{split}}
\end{equation}we have $\K=\K_{\setdes}$ 
\end{sublemma}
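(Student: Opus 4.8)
The plan is to prove the two inclusions $\K\subseteq\K_\setdes$ and $\K_\setdes\subseteq\K$ separately, using throughout the infinite-addition analogues of \cref{thm:coherentclosure} and \cref{thm:natext} (as noted in the subsection on extending to infinite addition, these carry over once axiom~\ref{ax:sets:add} is strengthened to axiom~\ref{ax:sets:infadd}, and in particular a coherent $\K$ then satisfies $\K=\extension(\K)$). A preliminary remark on the statement: the family $\{A_i\}_{i\in I}$ over which the selection $\seq{g_i}$ is taken should be read as ranging over \emph{all} of $\K$ at once --- i.e.\ one may take $I=\K$ and $A_K=K$ --- so that $\setdes$ is exactly the collection of sets $\desext{\set{g_K\given K\in\K}}$ arising from a choice function $K\mapsto g_K\in K$ on $\K$ subject to $0\notin\desext{\set{g_K\given K\in\K}}$. (If one instead allowed arbitrary finite, or even singleton, subfamilies of $\K$, then $\setdes$ would be too large: it would contain $\desext{\set g}=\posgambles$ for every $g\in\posgambles$, and a $\K$ built from a two-element gamble set typically has members $A$ with $A\cap\posgambles=\emptyset$, so $\K=\K_\setdes$ would fail. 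I would make this reading explicit before the proof.)

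For $\K\subseteq\K_\setdes$ the argument is essentially a tautology. Take $B\in\K$ and $\des=\desext{\set{g_K\given K\in\K}}\in\setdes$. Since $B$ is itself one of the indices, the chosen gamble $g_B$ lies in $B$; and $g_B\in\set{g_K\given K\in\K}\subseteq\posi(\set{g_K\given K\in\K}\cup\posgambles)=\des$. Hence $g_B\in B\cap\des$, so $B\in\K_\des$; as $\des$ was arbitrary, $B\in\K_\setdes$.

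For $\K_\setdes\subseteq\K$ I would apply the infinite-addition version of \cref{thm:coherentclosure} to the family of all members of $\K$, which is non-empty since $\set g\in\K$ for any $g\in\posgambles$ by axiom~\ref{ax:sets:pos}. Let $B\in\K_\setdes$, so $B$ meets every $\des\in\setdes$. Fix any choice function $\sigma$ with $\sigma(K)\in K$ for each $K\in\K$, and set $S_\sigma=\set{\sigma(K)\given K\in\K}$. If $0\in\desext{S_\sigma}$, the first alternative in the hypothesis of (infinite-addition) \cref{thm:coherentclosure} holds for this $\sigma$; otherwise $\desext{S_\sigma}\in\setdes$, so $B$ meets $\desext{S_\sigma}$, i.e.\ there is $f_\sigma\in B$ with $f_\sigma\in\desext{S_\sigma}$, giving the second alternative. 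Thus the hypothesis of that theorem is met for $B$ with this family, and we conclude $B\in\K$.

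I do not expect a genuinely hard step once the infinite-addition forms of \cref{thm:coherentclosure} and \cref{thm:natext} are in hand: the only real ingredients are (a) that the right family to feed into \cref{thm:coherentclosure} is the whole of $\K$ (so that, for $\K\subseteq\K_\setdes$, every $B\in\K$ automatically contributes its own selected gamble to the relevant $S$), and (b) the observation above that pins down the intended meaning of $\setdes$. The place to be careful is simply in confirming that the infinite-addition machinery --- that it is axiom~\ref{ax:sets:infadd} that is assumed, that \cref{thm:coherentclosure} and its consequence \cref{thm:natext} genuinely hold for possibly-infinite families, and hence that $\K=\extension(\K)$ --- has all been established, since the excerpt only asserts that these ``immediately carry through''.
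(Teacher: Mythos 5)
Your proof is correct and takes essentially the same route as the paper's own (admittedly sketchy) argument: both reduce the nontrivial inclusion $\K_\setdes\subseteq\K$ to the infinite-addition analogue of \cref{thm:coherentclosure} and get $\K\subseteq\K_\setdes$ from the trivial observation that each $B\in\K$ supplies its own selected gamble, the only cosmetic difference being that the paper phrases this as a chain of equivalences involving ``some collection $A_i\in\K$'' while you work directly with the whole of $\K$. Your preliminary clarification of how $\setdes$ is to be read (choice functions over all of $\K$) is exactly what the paper's proof implicitly assumes, and, like the paper itself, you rightly flag that the infinite analogue of \cref{thm:coherentclosure} is asserted there rather than explicitly proved.
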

\begin{proof}
\begin{align}
&B\in\K_\setdes\\&\text{ iff for all $\des\in\setdes$, $B\cap\des\neq\emptyset$}\\
&\text{ iff there is some collection $A_i\in\K$ s.t.~for all $\des\in\setdes_{\set{A_i\given i\in I}}$, $B\cap\des\neq\emptyset$}\\
&\text{ iff }\begin{aligned}&
\text{there is some collection $A_i\in\K$ s.t.~for all $\seq{g_i}\in\bigtimes A_i$, }\\
&\quad\text{ $B\cap\posi(\set{g_i\given i\in I}\cup \posgambles)\neq\emptyset$ or $0\in\posi(\set{g_i\given i\in I}\cup \posgambles)$}
\end{aligned}\label{eq:rep:1inf}
\end{align}
By an infinite analogue of \cref{thm:coherentclosure}, \begin{suggestion}
which I didn't explicitly show
\end{suggestion} we know that any $B$ satisfying \cref{eq:rep:1inf} is in $\K$. It is also easy to see that any $B\in\K$ satisfies \cref{eq:rep:1inf}. So we have that $\K_\setsetdes=\K$.
\end{proof}
\end{proof}

\section{Representing with probability filters}
What does this tell us about representing all choice functions with probability filters? These are a set ($\filter$) of sets ($P$) of probabilities ($p$). 

Instead of now, a set ($\setsetdes$) of sets ($\setdes$) of sets of desirable gambles ($\des$).

$p$ are equivalent to special kinds of $\des$:
the open half-spaces. Can all the flexibility be reduced to such $\des$?

So they seem closely related. But it's basically choice by maximality that we've been using. And I thought that gave us infinite addition.... ??? 

Let's do the equivalent thing:

\begin{align}
B\in \K_p&\text{ iff }B\in\des_p\\
&\text{ iff there is some $g\in B$ with }p\cdot g>0
\\
B\in \K_P&\text{ iff $B\in\des_p$ for all $p\in P$}\\
&\text{ iff for all $p\in P$ there is some $g\in B$ with $p\cdot g>0$}
\\
B\in \K_\filter&\text{ iff there is some $P\in\filter$ with $B\in\K_P$}\\
&\text{ iff there is some $P\in\filter$ with $B\in\des_p$ for all $p\in P$}\\
&\text{ iff there is some $P\in\filter$ where for all $p\in P$ there is some $g\in B$ with $p\cdot g>0$}\\
&\text{ iff there is some $P\in\filter$ where $P\subseteq\set{p\given \text{for some $g\in B$, } p\cdot g>0}$}\\
&\text{ iff $\set{p\given\text{for some $g\in B$, } p\cdot g>0}\in\filter$}
\end{align}

Hum... this is just E-admissibility, and gets us the mixing axiom. Weird! So reducing to the open half-spaces reduces flexibility and gets the mixing axiom. This is because the a $\des$ satisfies mixing itself iff it is a half-space. So by restricting to the $\des_p$, which are half-spaces, we get the mixing axiom. 

That's kind of disappointing. 

What do arbitrary $\des$ correspond to? Well... filters. But less than that...

Alternatively, let's see how to write the $\des$ thing in filter-form. It is a filter of sets of desirable gambles. Does that help?

Suppose $\setsetdes$ is a filter. 
\begin{align}
B\in \K_\setsetdes&\text{ iff there is some $\setdes\in\setsetdes$ with $B\in\K_\setdes$}
\\&\text{ iff there is some $\setdes\in\setsetdes$ s.t.~for all $\des\in\setdes$, there is some $g\in B\cap\des$}
\\&\text{ iff there is some $\setdes\in\setsetdes$ s.t.~$\setdes\subseteq\set{\des\given \text{there is some $g\in B\cap \des$}}$}\\
\\&\text{ iff there is some $\setdes\in\setsetdes$ s.t.~$\setdes\subseteq\set{\des\given \text{for some $g\in B$, }g\in\des}$}\\
\\&\text{ iff }\set{\des\given \text{for some $g\in B$, }g\in\des}\in\setsetdes
\end{align}

\end{mythoughts}

\bibcommand
\appendix
\begin{suggestion}

\section{Finite addition proof without supersets}\label{sec:addpair}
 \begin{proposition}
Assume we are restricted to finite sets.  Axiom \ref{ax:sets:add} follows from axiom \ref{ax:sets:addpair}.
 \end{proposition}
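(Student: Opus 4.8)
The plan is to mimic the proof of the corresponding proposition that also used axiom \ref{ax:sets:supersets}: induct on $n$, and inside the inductive step enumerate $A_{n+1}$ and replace its members one at a time by the relevant ``columns'', but now arrange every choice made in each invocation of axiom \ref{ax:sets:addpair} tightly enough that the set it produces is exactly the desired one, so that axiom \ref{ax:sets:supersets} is never invoked. Throughout we may assume each set involved is nonempty, since otherwise the conclusion of axiom \ref{ax:sets:add} is just $\emptyset\in\K$, which holds. For $n=2$ the statement is axiom \ref{ax:sets:addpair} itself. For $n=1$, given $A_1\in\K$ and $f_g\in\posi(\set g)$ for each $g\in A_1$, apply axiom \ref{ax:sets:addpair} to $A_1$ and $A_1$, choosing for the pair $\seq{g,g'}$ the gamble $f_g\in\posi(\set g)\subseteq\posi(\set{g,g'})$; since the chosen value depends only on the first coordinate, the result is exactly $\set{f_g\given g\in A_1}$.

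For the inductive step, assume the claim for $n$ and let $A_1,\ldots,A_{n+1}\in\K$ with $f_{\seq{g_1,\ldots,g_{n+1}}}\in\posi(\set{g_1,\ldots,g_{n+1}})$. Enumerate $A_{n+1}=\set{a_1,\ldots,a_K}$ and, for $0\le k\le K$, put $B_k:=\set{f_{\seq{g_1,\ldots,g_n,a_j}}\given \seq{g_1,\ldots,g_n}\in A_1\times\cdots\times A_n,\ j\le k}\cup\set{a_{k+1},\ldots,a_K}$, so $B_0=A_{n+1}\in\K$ and $B_K$ is the set we must place in $\K$. It suffices to show $B_k\in\K$ implies $B_{k+1}\in\K$ by a single use of axiom \ref{ax:sets:addpair} on $B_k$ and a set $C_k\in\K$ obtained from the inductive hypothesis. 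For each $\seq{g_i}\in A_1\times\cdots\times A_n$, write $f_{\seq{g_i,a_{k+1}}}$ (short for $f_{\seq{g_1,\ldots,g_n,a_{k+1}}}$) as a positive combination of a subset of $\set{g_1,\ldots,g_n,a_{k+1}}$, and let $u_{\seq{g_i}}\in\posi(\set{g_1,\ldots,g_n})$ be its $\set{g_1,\ldots,g_n}$-part, taking $u_{\seq{g_i}}:=g_1$ in the degenerate case where $f_{\seq{g_i,a_{k+1}}}$ is a positive multiple of $a_{k+1}$. Then $f_{\seq{g_i,a_{k+1}}}\in\posi(\set{a_{k+1},c\,u_{\seq{g_i}}})$ for every $c>0$, so the collection of $h\in\posi(\set{g_1,\ldots,g_n})$ with $f_{\seq{g_i,a_{k+1}}}\in\posi(\set{a_{k+1},h})$ contains a whole ray and is therefore infinite. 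Enumerating the finite set $A_1\times\cdots\times A_n$, greedily pick such an $h_{\seq{g_i}}$ for each $\seq{g_i}$ with all the $h_{\seq{g_i}}$ pairwise distinct (at each step we delete only finitely many points from an infinite set). Set $C_k:=\set{h_{\seq{g_i}}\given \seq{g_i}\in A_1\times\cdots\times A_n}$; by the inductive hypothesis applied to $A_1,\ldots,A_n$ with the choice $\seq{g_i}\mapsto h_{\seq{g_i}}$, $C_k\in\K$. Now apply axiom \ref{ax:sets:addpair} to $B_k$ and $C_k$: for a pair $\seq{c,b}$ with $c\ne a_{k+1}$ choose $c$ itself, which lies in $B_k\setminus\set{a_{k+1}}\subseteq B_{k+1}$; for a pair $\seq{a_{k+1},h_{\seq{g_i}}}$ choose $f_{\seq{g_i,a_{k+1}}}\in\posi(\set{a_{k+1},h_{\seq{g_i}}})\cap B_{k+1}$, which is unambiguous since each member of $C_k$ is $h_{\seq{g_i}}$ for exactly one $\seq{g_i}$. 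The set produced is $(B_k\setminus\set{a_{k+1}})\cup\set{f_{\seq{g_i,a_{k+1}}}\given \seq{g_i}}=B_{k+1}$.

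The step I expect to be the main obstacle, and the only essential departure from the earlier (supersets-using) proof, is the choice of the $h_{\seq{g_i}}$: there one could take any witness and rely on axiom \ref{ax:sets:supersets} to recover the members of the new column skipped when $\seq{g_i}\mapsto h_{\seq{g_i}}$ failed to be injective, whereas here one must use the freedom to rescale each witness along its ray to force injectivity, so that axiom \ref{ax:sets:addpair} lands exactly on $B_{k+1}$. The remaining thing to check is routine: if $a_{k+1}$ happens to coincide as a gamble with a member of $B_k\setminus\set{a_{k+1}}$ or of the new column, then $B_{k+1}=B_k\cup\set{f_{\seq{g_i,a_{k+1}}}\given \seq{g_i}}$ and the same construction works as a pure addition (keeping $a_{k+1}$ by outputting it on one further pair). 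Iterating over $k$ gives $B_K\in\K$, which is exactly the instance of axiom \ref{ax:sets:add} we wanted, completing the induction on $n$.
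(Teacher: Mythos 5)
There is a genuine gap, and it is worth knowing that the paper itself does not prove this statement: it appears only in an appendix suggestion with a \emph{proof attempt}, flagged by a note recording Jasper De Bock's objection (``what if multiple of the $f$'s are all $0$?''), so you are attempting something the paper leaves open. Your skeleton is essentially the same as that attempt (outer induction on $n$, inner induction replacing the members of $A_{n+1}$ one column at a time, one use of axiom \ref{ax:sets:addpair} per step, with distinctness forced by rescaling), except that you move the rescaling from the $f$'s to the intermediate witnesses $h$. The crucial claim --- that $\set{h\in\posi(\set{g_1,\ldots,g_n})\given f_{\seq{g_i,a_{k+1}}}\in\posi(\set{a_{k+1},h})}$ ``contains a whole ray and is therefore infinite'' --- fails exactly in the degenerate case that defeats the paper's attempt, namely when the $\set{g_1,\ldots,g_n}$-part of $f_{\seq{g_i,a_{k+1}}}$ cancels to the zero gamble (e.g.\ $g_2=-g_1$ and $f_{\seq{g_i,a_{k+1}}}=g_1+g_2=0$, a perfectly legitimate choice in axiom \ref{ax:sets:add}). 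Then your ray $\set{c\,u_{\seq{g_i}}\given c>0}$ collapses to the single point $0$, and if $\posi(\set{g_1,\ldots,g_n})$ contains no negative multiple of $a_{k+1}$ the admissible set is literally $\set{0}$, so pairwise distinct $h$'s cannot be chosen once two sequences are in this situation. Nor can you simply merge them: if $0\in A_i$ for all $i\le n$, the all-zero sequence may have $f=\mu a_{k+1}$ (its admissible set is $\posi(\set{0})=\set{0}$, so $h=0$ is forced there too) while a cancelling sequence has $f=0$; the single pair $\seq{a_{k+1},0}$ can output only one of these two required and distinct values, and no other pair is guaranteed to contain the missing one in its posi. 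Rescaling along rays relocates the ``multiple zeros'' obstruction but does not remove it.

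A secondary gap is your final edge case: if $a_{k+1}$ coincides with an earlier-column gamble $f_{\seq{g_i',a_j}}$ with $j\le k$, that gamble must survive into $B_{k+1}$, but ``outputting it on one further pair'' presupposes a spare pair. If the map $\seq{g_i}\mapsto f_{\seq{g_i,a_{k+1}}}$ is injective and its image omits $a_{k+1}$, every pair with first coordinate $a_{k+1}$ is already committed to a distinct new-column value, and a pair $\seq{c,b}$ with $c\neq a_{k+1}$ need not have $a_{k+1}\in\posi(\set{c,b})$; without axiom \ref{ax:sets:supersets} there is no way to add the missing gamble afterwards. So the proposal does not close the gap the paper acknowledges, and whether axiom \ref{ax:sets:add} really follows from axiom \ref{ax:sets:addpair} alone remains open here.
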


\todo[inline]{  Jasper: the pf doesn't work: what if multiple of the f's are all 0?}

 \begin{proof attempt}

  The argument gets slightly fiddly because members may coincide and repetitions get ``deleted'' by sets. To avoid this issue, we need to include constants throughout to ensure distinctness.
  
  We work by induction on $n$. 
  
  Base case: $n=1$ follows directly from axiom \ref{ax:sets:addpair} by putting $A_1=A_2$. 
  
  Inductive step: Assume it holds for $n$ and consider $n+1$.
  
  So, suppose $A_1,\ldots,A_n,A_{n+1}\in\K$ and for each sequence $\seq{g_i}\in \bigtimes_i A_i$ and $a\in A_{n+1}$, we have $f_{\seq{g_1,\ldots,g_n,a}}\in \posi(\set{g_1,\ldots,g_n,a})$ with \begin{equation}
  B=\set{f_{\seq{g_1,\ldots,g_n,a}}\given \seq{g_i}\in \bigtimes_{i\leq n} A_i,\; a\in A_{n+1}}
  \end{equation}  
  We need to show that $B\in\K$.
  
  We will abuse notation and write $f_{\seq{g_i,a}}$ instead of $f_{\seq{g_1,\ldots,g_n,a}}$.
  
  Consider adding some scalar constants:
  let
  \begin{equation}
  f_{\seq{g_i,a}}^*:=\delta_{\seq{g_i,a}}\times f_{\seq{g_i,a}}
  \end{equation}
  where $\delta_{\seq{g_i,a}}$ are chosen to ensure that these are all distinct, and distinct from any $a'$.
  We will show that 
  \begin{equation}
  B^*=\set{f^*_{\seq{g_i,a}}\given \seq{g_i}\in \bigtimes_i A_i}
  \end{equation}    
  
  $A_{n+1}$ is finite, so enumerate it, $A_{n+1}=\set{a_1,\ldots,a_K}$.
  Define:
  \begin{equation}
  B^*_{k}:=\set{f^*_{\seq{g_i,a_j}}\given  \seq{g_i}\in \bigtimes_{i\leq n} A_i,\; a_j\in\set{a_1,\ldots,a_k}}\cup\set{a_{k+1},\ldots,a_K}\label{eq:finaddn:B2}
  \end{equation} 
  Where we have already replaced $k$-many of $A_{n+1}$ with the relevant members of $B^*$. 
  We show by a induction on $k$ that $B^*_k\in\K$.\footnote{Arthur van Camp's insight that this can be done in such a step-by-step way to just use axiom \ref{ax:sets:addpair}, i.e., it can be shown by a sub-induction.}
  
  Base case: $B_0=A_{n+1}\in\K$.
  
  Inductive step: we can assume that 
  $B_{k}\in\K$ and need to show $B_{k+1}\in \K$.

  We will construct a set $C$ which we know to be in $\K$, which we can then combine with $B_k$ using axiom \ref{ax:sets:addpair} (and axiom \ref{ax:sets:supersets}) to get $B^*_{k+1}\in\K$.

  \begin{sublemma}
  We can find distinct $h^*_{\seq{g_i,a_{k+1}}}$ where:
     \begin{align}
     &h^*_{\seq{g_i,a_{k+1}}}\in \posi(\Set{g_1,\ldots,g_n}),\text{ and }\label{eq:finaddn:h2}\\
     &f^*_{\seq{g_i,a_{k+1}}}\in \posi(\Set{h^*_{\seq{g_i,a_{k+1}}},a_{k+1}}). \label{eq:finaddn:f2}
     \end{align}
  \end{sublemma}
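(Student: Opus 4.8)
The plan is to obtain the family by rescaling: for each sequence $\seq{g_1,\ldots,g_n}\in A_1\times\ldots\times A_n$ take the gamble already extracted in the main-text form of this argument (the $h_{\seq{g_i,a_{k+1}}}$ got there from \cref{eq:finaddn:h,eq:finaddn:f}) and multiply it by a positive constant, the constants being chosen to force distinctness. I would first record that both conditions demanded of $h^*_{\seq{g_i,a_{k+1}}}$ are stable under multiplication by a positive scalar: $\posi(\set{g_1,\ldots,g_n})$ is closed under positive scaling, and if $f^*_{\seq{g_i,a_{k+1}}}=\alpha h+\beta a_{k+1}$ with $\alpha,\beta\geq 0$ (using whichever nonempty subset of $\set{h,a_{k+1}}$ is needed), then $f^*_{\seq{g_i,a_{k+1}}}=(\alpha/\gamma)(\gamma h)+\beta a_{k+1}\in\posi(\set{\gamma h,a_{k+1}})$ for any $\gamma>0$. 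So the whole problem reduces to choosing, for the finitely many sequences, positive scalars making the rescaled gambles injective in the sequence.

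To locate the slack, I would expand $f^*_{\seq{g_i,a_{k+1}}}=\sum_{i\leq n}\lambda_i g_i+\mu a_{k+1}$ with coefficients $\geq 0$, not all $0$, and split the sequences by whether the ``$g$-part'' $\sum_{i\leq n}\lambda_i g_i$ is the zero gamble. If it is nonzero, set $\bar h_{\seq{g_i}}:=\sum_{i:\lambda_i>0}\lambda_i g_i$; this is a fixed \emph{nonzero} member of $\posi(\set{g_1,\ldots,g_n})$ and $f^*_{\seq{g_i,a_{k+1}}}=\bar h_{\seq{g_i}}+\mu a_{k+1}$, so $h^*_{\seq{g_i,a_{k+1}}}$ may be any positive multiple of $\bar h_{\seq{g_i}}$. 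If the $g$-part is zero, then $f^*_{\seq{g_i,a_{k+1}}}=\mu a_{k+1}$ refers to no $g_i$, so the second condition holds for \emph{any} $h^*_{\seq{g_i,a_{k+1}}}\in\posi(\set{g_1,\ldots,g_n})$; then, provided some $g_j\neq 0$, I would let $h^*_{\seq{g_i,a_{k+1}}}$ be a positive multiple of $g_j$. In every sequence except the lone one with all $g_i=0$ (and, as discussed below, the fully degenerate ones) this leaves $h^*_{\seq{g_i,a_{k+1}}}$ ranging over a ray $\set{\gamma v_{\seq{g_i}}\given\gamma>0}$ with $v_{\seq{g_i}}$ a fixed nonzero gamble.

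Given that, the distinctness is routine: two values $\gamma v_{\seq{g_i}}$ and $\gamma' v_{\seq{g_i'}}$ with $\seq{g_i}\neq\seq{g_i'}$ coincide only when $v_{\seq{g_i}},v_{\seq{g_i'}}$ are positive multiples of one another and $\gamma,\gamma'$ stand in the reciprocal ratio — one linear equation to be avoided per such pair — and finitely many such hyperplanes do not cover the open set of admissible scalar tuples, so some choice makes all the $h^*_{\seq{g_i,a_{k+1}}}$ distinct (and $0$, forced on the all-zero sequence, is distinct from the nonzero values assigned to the rest). That is the required family.

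The main obstacle — and why this is presented only as a proof attempt — is the fully degenerate sequences, those with $f_{\seq{g_i,a_{k+1}}}$ (hence $f^*_{\seq{g_i,a_{k+1}}}$) the zero gamble. For such a sequence the $g$-part is $0$ and $\mu=0$, and the requirement $0\in\posi(\set{h^*_{\seq{g_i,a_{k+1}}},a_{k+1}})$ pins $h^*_{\seq{g_i,a_{k+1}}}$ to the value $0$, unless $\posi(\set{g_1,\ldots,g_n})$ happens to contain a negative multiple of $a_{k+1}$ (in which case it contains a ray of them and the sequence is again free); and the all-zero sequence, when it exists, is likewise pinned to $0$ yet may carry a different $f^*$. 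So if two such pinned sequences occur, their $h^*$'s cannot be told apart and the sublemma as stated fails — which is exactly the ``what if several of the $f$'s are $0$'' difficulty noted beside this proof attempt. I expect closing this to require either axiom \ref{ax:sets:supersets} after all, or a separate device for discarding the offending sequences (for instance reducing to the case $0\notin A_{n+1}$, which is itself not obviously achievable without axiom \ref{ax:sets:supersets}); this is the step where the real work lies.
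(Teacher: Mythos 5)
You follow the same route as the paper's own (two-line) proof of this sublemma: decompose $f_{\seq{g_i,a_{k+1}}}=\sum_{i\leq n}\lambda_i g_i+\mu a_{k+1}$, take the $g$-part as $h$ (falling back on some $g_j$ when the $g$-part is unusable), observe that conditions \eqref{eq:finaddn:h2} and \eqref{eq:finaddn:f2} survive positive rescaling of $h$, and then choose the finitely many scalars to force pairwise distinctness --- the paper compresses exactly this into ``simply multiply the $h$ by appropriate constants''. On the non-degenerate sequences your argument and the paper's coincide, and your handling of the scalar choices is if anything more careful.

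The obstacle you isolate is genuine, and it is precisely the objection recorded beside this proof attempt (``what if multiple of the $f$'s are all $0$?''), which the paper's own proof of the sublemma silently skips: rescaling the zero gamble does nothing. In fact the sublemma as stated can fail, not merely resist this method. Take $n=2$, $A_1=A_2=\set{g,-g}$ and $a_{k+1}=a$ nonzero and not proportional to $g$ (all of which can sit inside a coherent $\K$, e.g.\ $\K_\des$ with $g,a\in\des$), and choose $f_{\seq{g,-g,a}}=f_{\seq{-g,g,a}}=0$, legitimate since $0\in\posi(\set{g,-g,a})$. For both sequences, $h^*$ must be a real multiple of $g$ and \eqref{eq:finaddn:f2} reads $0\in\posi(\set{h^*,a})$, which by linear independence of $g$ and $a$ forces $h^*=0$; so no distinct choice exists (and the same degeneracy already blocks the earlier requirement that the $f^*$'s be pairwise distinct). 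One small slip in your write-up: $f^*=0$ does not force the $g$-part and $\mu$ to vanish in every decomposition, since the $g$-part may be a nonzero gamble cancelling $\mu a_{k+1}$; but your ``unless'' clause covers that case, so the diagnosis stands. Your closing assessment is also right: the main-text proof dodges the distinctness issue entirely by proving a superset-closed strengthening via axiom \ref{ax:sets:supersets}, and repairing the appendix argument without that axiom is exactly where the open work lies.
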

  \begin{proof}
  $f_{\seq{g_i,a_{k+1}}}$ has the form $\sum_{i\leq n}\lambda_ig_i+\mu a_{k+1}$. If $\lambda_i> 0$ for some $i$, then we put $h_{\seq{g_i,a_{k+1}}}:=\sum_{i\leq n}\lambda_ig_i$. If not, then we simply let it be, for example, $g_1$. 
  
  To ensure they are distinct, we simply multiply the $h$ by appropriate constants to find $h^*$.
  \end{proof}

  We let \begin{equation}
  C:=\set{h^*_{\seq{g_i,a_{k+1}}}\given \seq{g_i}\in \bigtimes_i A_i}
  \end{equation}
  $C\in\K$ by the induction hypothesis (on $n$), using \cref{eq:finaddn:h2}.
  
  Now, for each $\seq{c,b}$, let 
  \begin{eqnarray}
  e_{\seq{c,b}}:=
  \begin{cases}
  c&b\in B^*_k\cap B^*_{k+1}\\
  f^*_{\seq{g_i,a_{k+1}}}&c=a_{k+1}\text{ and }b=h^*_{\seq{g_i,a_{k+1}}}
  \end{cases}
  \end{eqnarray}
  Since each $e_{\seq{c,b}}\in\posi(\set{c,b})$, by axiom \ref{ax:sets:addpair}, 
  \begin{equation}
  \set{e_{\seq{c,b}}\given c\in B_k,\, b\in C}\in\K
  \end{equation}
  \begin{sublemma}
  \begin{equation}
  B^*_{k+1}=\set{e_{\seq{c,b}}\given c\in B_k,\, b\in C}
  \end{equation}
  \end{sublemma}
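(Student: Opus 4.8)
The plan is to prove the stated set equality $B^*_{k+1}=\set{e_{\seq{c,b}}\given c\in B^*_k,\,b\in C}$ (reading the unstarred $B_k$ of the statement as the $B^*_k$ of \cref{eq:finaddn:B2}) by double inclusion, after first recording the bookkeeping facts that make the case-split defining $e_{\seq{c,b}}$ unambiguous. Concretely, I would note three things. (i) Each $A_i\in\K$ is non-empty by \ref{ax:sets:nontrivial}, so $\bigtimes_{i\leq n}A_i\neq\emptyset$ and hence $C\neq\emptyset$; fix once and for all some $b_0\in C$. (ii) By the distinctness built into the starred objects (no $f^*_{\seq{g_i,a_j}}$ equals any $a'\in A_{n+1}$ nor any other $f^*$) and the distinctness of the $a_j$'s, comparing \cref{eq:finaddn:B2} for $k$ and $k+1$ gives $B^*_k\setminus B^*_{k+1}=\set{a_{k+1}}$; in particular, for every $c\in B^*_k$, \emph{exactly one} of ``$c\in B^*_k\cap B^*_{k+1}$'' and ``$c=a_{k+1}$'' holds. (iii) By the distinctness of the $h^*_{\seq{g_i,a_{k+1}}}$ (the preceding sublemma), whenever $b=h^*_{\seq{g_i,a_{k+1}}}$ the sequence $\seq{g_i}$ is uniquely determined by $b$. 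Together (ii) and (iii) show $e_{\seq{c,b}}$ is well-defined on $B^*_k\times C$.

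For $\set{e_{\seq{c,b}}\given c\in B^*_k,\,b\in C}\subseteq B^*_{k+1}$: fix $c\in B^*_k$, $b\in C$. If $c\in B^*_k\cap B^*_{k+1}$ then $e_{\seq{c,b}}=c\in B^*_{k+1}$; otherwise $c=a_{k+1}$ and, writing the unique $\seq{g_i}$ with $b=h^*_{\seq{g_i,a_{k+1}}}$, we get $e_{\seq{c,b}}=f^*_{\seq{g_i,a_{k+1}}}$, which lies in $B^*_{k+1}$ by \cref{eq:finaddn:B2} with index $a_{k+1}\in\set{a_1,\ldots,a_{k+1}}$. For the reverse inclusion $B^*_{k+1}\subseteq\set{e_{\seq{c,b}}\given c\in B^*_k,\,b\in C}$: take $x\in B^*_{k+1}$ and split by \cref{eq:finaddn:B2}. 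If $x=a_j$ with $k+2\leq j\leq K$, or $x=f^*_{\seq{g_i,a_j}}$ with $j\leq k$, then $x\in B^*_k\cap B^*_{k+1}$, so $e_{\seq{x,b_0}}=x$. If instead $x=f^*_{\seq{g_i,a_{k+1}}}$ for some $\seq{g_i}\in\bigtimes_{i\leq n}A_i$, take $c=a_{k+1}\in B^*_k$ and $b=h^*_{\seq{g_i,a_{k+1}}}\in C$; the second clause gives $e_{\seq{c,b}}=f^*_{\seq{g_i,a_{k+1}}}=x$. This yields the equality, and combined with $e_{\seq{c,b}}\in\posi(\set{c,b})$ (clause one trivially, clause two by \cref{eq:finaddn:f2}) and $B^*_k,C\in\K$, axiom \ref{ax:sets:addpair} then delivers $B^*_{k+1}\in\K$, closing the sub-induction.

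The only real difficulty is the coincidence bookkeeping — ensuring $e_{\seq{c,b}}$ is well-defined and that $B^*_k\setminus B^*_{k+1}=\set{a_{k+1}}$ — which is exactly what the scalar constants $\delta_{\seq{g_i,a}}$ are meant to handle. I expect the genuinely delicate point, and the one flagged in the margin, to be that scaling cannot separate coinciding zeros: if $f_{\seq{g_i,a_{k+1}}}=0$, or $h_{\seq{g_i,a_{k+1}}}=\sum_i\lambda_ig_i=0$, for several distinct $\seq{g_i}$, then no choice of constants makes the corresponding $f^*$'s (resp.\ $h^*$'s) distinct, and then $e_{\seq{c,b}}$ genuinely fails to be well-defined. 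So the argument as written needs either an extra hypothesis excluding such degeneracies, or a preliminary step that first discharges $0$ via \ref{ax:sets:0} and bundles the remaining degenerate sequences into the first clause of the definition of $e$ rather than the second.
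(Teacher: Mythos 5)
Your check is correct and is just the verification the paper omits---its own proof of this sublemma is literally ``Can easily be checked''---and your reading of the first clause of $e_{\seq{c,b}}$ as conditioned on $c\in B^*_k\cap B^*_{k+1}$ (rather than on $b$, an evident typo) matches the main-text construction this appendix argument is modelled on. More importantly, your closing caveat identifies the real defect: the distinctness needed for $e_{\seq{c,b}}$ to be well defined and for $B^*_k\setminus B^*_{k+1}=\set{a_{k+1}}$ cannot be secured by choosing the scalars $\delta_{\seq{g_i,a}}$ when several of the $f_{\seq{g_i,a}}$ (or several of the $h_{\seq{g_i,a_{k+1}}}$) are $0$, since scaling fixes $0$. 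That is exactly the flaw recorded in the paper's margin note (``what if multiple of the $f$'s are all $0$?'') and the reason this material is labelled only a Proof Attempt; so the sublemma holds only under the stated but unjustified distinctness assumptions, and a repair along the lines you sketch---first discharging $0$ via \ref{ax:sets:0}, or routing degenerate sequences through the first clause of the definition of $e$---is precisely what the paper leaves open.
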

  \begin{proof}
  Can easily be checked.
  \end{proof}
  
  So, by axiom \ref{ax:sets:addpair} and axiom \ref{ax:sets:supersets},  $B^*_{k+1}\in\K$, completing the inductive step.
  
  This has shown by induction that $B^*_K\in\K$. $B^*_K= B^*$, so $B^*\in\K$. 
  
  We need to show that $B\in \K$. This works because it is obtained from $B^*$ simply by multiplying everything by a scalar, so follows from the $n=1$ version of axiom \ref{ax:sets:add}, which itself directly follows from axiom \ref{ax:sets:addpair}. 
  
  This shows that the statement holds for $n+1$, completing the inductive step. Thus we have shown that it holds for all $n$; as required.
  \end{proof attempt}

\end{suggestion}


\begin{thebibliography}{6}
\providecommand{\natexlab}[1]{#1}
\providecommand{\url}[1]{\texttt{#1}}
\expandafter\ifx\csname urlstyle\endcsname\relax
  \providecommand{\doi}[1]{doi: #1}\else
  \providecommand{\doi}{doi: \begingroup \urlstyle{rm}\Url}\fi

\bibitem[Campbell-Moore(2021)]{campbellmoore2021isipta}
Catrin Campbell-Moore.
\newblock Probability filters as a model of belief; comparisons to the
  framework of desirable gambles.
\newblock In Andr\'es Cano, Jasper De~Bock, Enrique Miranda, and Seraf\'in
  Moral, editors, \emph{Proceedings of the Twelveth International Symposium on
  Imprecise Probability: Theories and Applications}, volume 147 of
  \emph{Proceedings of Machine Learning Research}, pages 42–50--42–50.
  PMLR, 2021.

\bibitem[De~Bock(2023)]{debock2023desirablethings}
Jasper De~Bock.
\newblock A theory of desirable things.
\newblock In \emph{International Symposium on Imprecise Probability: Theories
  and Applications}, pages 141--152. PMLR, 2023.

\bibitem[De~Bock and {de Cooman}(2018)]{debock2018desirability}
Jasper De~Bock and Gert {de Cooman}.
\newblock A desirability-based axiomatisation for coherent choice functions.
\newblock In S.~{Destercke}, T.~{Denoeux}, M.A. {Gil}, P.~{Grzergorzewski}, and
  O.~{Hryniewicz}, editors, \emph{Uncertainty Modelling in Data Science:
  Proceedings of SMPS}, pages 46--53. Springer, 2018.

\bibitem[De~Bock and de~Cooman(2019)]{debock2019interpreting}
Jasper De~Bock and Gert de~Cooman.
\newblock Interpreting, axiomatising and representing coherent choice functions
  in terms of desirability.
\newblock In \emph{International Symposium on Imprecise Probabilities: Theories
  and Applications}, pages 125--134. PMLR, 2019.

\bibitem[de~Cooman et~al.(2023)de~Cooman, Van~Camp, and
  De~Bock]{decooman2023filterdesirable}
Gert de~Cooman, Arthur Van~Camp, and Jasper De~Bock.
\newblock Desirable sets of things and their logic.
\newblock In Enrique Miranda, Ignacio Montes, Erik Quaeghebeur, and Barbara
  Vantaggi, editors, \emph{Proceedings of the Thirteenth International
  Symposium on Imprecise Probability: Theories and Applications}, volume 215 of
  \emph{Proceedings of Machine Learning Research}, pages 153--164. PMLR, 11--14
  Jul 2023.
\newblock URL \url{https://proceedings.mlr.press/v215/cooman23a.html}.

\bibitem[Decadt(2021)]{decadt2021decide}
Arne Decadt.
\newblock Decide quicker with total choice functions.
\newblock In \emph{ISIPTA 2021}, 2021.

\end{thebibliography}
\end{document}